%

\documentclass{article}

\usepackage{microtype}
\usepackage{graphicx}
\usepackage{subfigure}
\usepackage{booktabs} 

\usepackage{hyperref}



\usepackage[accepted]{icml2023_cr/icml2023}

\usepackage{amsmath}
\usepackage{amssymb}
\usepackage{mathtools}
\usepackage{amsthm}

\usepackage[capitalize,noabbrev]{cleveref}


\usepackage{amsfonts}
\usepackage{comment}
\usepackage{esint}
\usepackage{natbib}
\usepackage{multirow}
\usepackage[export]{adjustbox}

\definecolor{blue}{RGB}{0, 0, 0}
\newcommand{\st}[1]{}

\theoremstyle{plain}
\newtheorem{theorem}{Theorem}[section]
\newtheorem{proposition}[theorem]{Proposition}
\newtheorem{lemma}[theorem]{Lemma}

\theoremstyle{definition}
\newtheorem{definition}[theorem]{Definition}
\newtheorem{assumption}[theorem]{Assumption}
\theoremstyle{remark}
\newtheorem{remark}[theorem]{Remark}

\newcommand{\mb}[1]{\boldsymbol{#1}}
\newcommand{\bm}[1]{\mathbf{#1}}
\newcommand{\br}[1]{\left( #1 \right)}
\newcommand{\brs}[1]{\left[ #1 \right]}
\newcommand{\brangle}[1]{\left\langle #1 \right\rangle}
\newcommand{\brc}[1]{\left\{ #1 \right\}}

\DeclareMathOperator*{\argmin}{arg\,min}

\graphicspath{{figures/}}

\newcommand{\R}{\mathbb{R}}
\newcommand{\E}{\mathbb{E}}

\newcommand{\T}{^\top}

\newcommand{\defeq}{\vcentcolon=}

\newcommand{\hyv}{Hyv{\"a}rinen}
\newcommand{\p}{p_{\mb\theta}}   
\newcommand{\pbar}{\bar{p}_{\mb\theta}}   

\newcommand\myeq[1]{\stackrel{\mathclap{\tiny\mbox{#1}}}{=}}

\newcommand\myleq[1]{\stackrel{\mathclap{\tiny\mbox{#1}}}{\leq}}


\newcommand{\kxxp}[1]{\mb{\varphi}_{\mb{#1}, \bm{x}'}}

\newcommand{\dy}{\partial_{y_l}}
\newcommand{\dx}{\partial_{x_l}}
\newcommand{\dz}{\partial_{z_l}}

\newcommand{\score}[1]{\mb{\psi}_{#1}}
\newcommand{\scorel}[1]{\psi_{#1, l}}

\newcommand{\Eq}[2]{\E_{\mb{#1} \sim #2}}
\newcommand{\Eqsmall}[2]{\E_{#2}}

\newcommand{\approxdV}{\widetilde{\partial V}}
\newcommand{\approxG}{\mathcal{G}^d_{0, m}}


\newcommand{\latentscorexz}{\psi_{q}(\mb{x} | \mb{z})}
\newcommand{\latentscorexzj}{\psi_{q, j}(\mb{x} | \mb{z})}
\newcommand{\latentscorexzji}{\psi_{q, j}(\mb{x} | \mb{z}_i)}

\usepackage[disable,textsize=tiny]{todonotes}

\icmltitlerunning{Approximate Stein Classes for Truncated Density Estimation}

\begin{document}

\twocolumn[
\icmltitle{Approximate Stein Classes for Truncated Density Estimation}
\icmltitlerunning{Approximate Stein Classes for Truncated Density Estimation}

\icmlsetsymbol{equal}{*}

\begin{icmlauthorlist}
\icmlauthor{Daniel J. Williams}{yyy}
\icmlauthor{Song Liu}{yyy}
\end{icmlauthorlist}

\icmlaffiliation{yyy}{School of Mathematics, University of Bristol, UK}

\icmlcorrespondingauthor{Daniel J. Williams}{daniel.williams@bristol.ac.uk}

\icmlkeywords{Density Estimation, Truncated Densities, Stein Discrepancies, Score Matching, Parameter Estimation}

\vskip 0.3in
]



\printAffiliationsAndNotice{}  

\begin{abstract}

Estimating truncated density models is difficult, as these models have intractable normalising constants and hard to satisfy boundary conditions. 
Score matching can be adapted to solve the truncated density estimation problem, but requires a continuous weighting function which takes zero at the boundary and is positive elsewhere. 
Evaluation of such a weighting function (and its gradient) often requires a closed-form expression of the truncation boundary and finding a solution to a complicated optimisation problem. 
In this paper, we propose \textit{approximate Stein classes}, which in turn leads to a relaxed Stein identity for truncated density estimation. 
We develop a novel discrepancy measure, \textit{truncated kernelised Stein discrepancy} (TKSD), which does not require fixing a weighting function in advance, and can be evaluated using only samples on the boundary.
We estimate a truncated density model by minimising the Lagrangian dual of TKSD. 
Finally, experiments show the accuracy of our method to be an improvement over previous works even without the explicit functional form of the boundary.

\end{abstract}

\section{Introduction}
In truncated density estimation, we are unable to view a full picture of our dataset. We are instead given access to a smaller subsample of data artificially truncated by a boundary. Examples of truncation boundaries include limited number of medical tests resulting in under-reported disease counts, and a country's borders preventing discoveries of habitat locations. In either case, a complex boundary causes truncation which introduces difficulties in statistical parameter estimation. Regular estimation techniques such as maximum likelihood estimation (MLE) are ill-suited, as we will explain. 

When data are truncated, many typical statistical assumptions break down. One fundamental problem with estimation in a truncated space is that the probability density function (PDF), given by
\[
\p(\mb{x}) = \frac{\pbar(\mb{x})}{Z(\mb\theta)},  \; \; Z(\mb\theta) = \int_V \pbar(\mb{x}) d\mb{x},
\]
cannot be fully evaluated. In this setup, $V$ is the truncated domain which can be highly complex and thus the integration to obtain the normalising constant, $Z(\mb\theta)$, is intractable. This normalising constant could be approximated via numerical integration, such as with Monte Carlo methods \citep{kalos2009monte}, but this comes with a significant computational expense. Recent attention has turned to estimation methods which bypass the calculation of the normalising constant entirely by working with the \textit{score function} for $\p(\mb{x})$,
\[
\score{\p} = \score{\p}(\mb{x}) := \mb\nabla_{\mb{x}} \log \p(\mb{x}) = \mb\nabla_{\mb{x}} \log \pbar(\mb{x}).
\]
which uniquely represents a probability distribution. Score-based estimation methods include score matching \citep{scorematching1, scorematching2}, noise-contrastive estimation \citep{gutmann2010, gutmann2012} and minimum Stein discrepancies \citep{stein1972, barp2019}.
These methods are computationally fast and accurate, and usually rely on minimising a discrepancy between the score functions for the model density $\p(\mb{x})$ and the unknown data density $q(\mb{x})$, whose score function is $\score{q}: = \mb\nabla_{\mb{x}}\log q(\mb{x})$.

Score-based methods have been applied across many domains, including hypothesis testing \citep{ksd, kernelgoodness, bdksd, Wu2022}, generative modelling \citep{song2019generative, song2020score, pang2020}, energy based modelling \citep{trainebm}, and Bayesian posterior estimation \citep{jack}.
Recently, 
two lines of work
have been proposed for the truncated domain; truncated density estimation via score matching \citep{yu2021, song, williams2022}, called \textit{TruncSM}, and truncated goodness-of-fit testing via the kernelised Stein discrepancy (KSD), denoted bounded-domain KSD (bd-KSD) \citep{bdksd}.

These two lines of work both use a distance function as a weighting function on the objective. Such a function is chosen in advance such that the boundary conditions required for deriving score matching or KSD hold when the domain is truncated. The computation of this distance function can be challenging when the boundary is complex and high-dimensional, which we will demonstrate later in experiments. Further, these methods rely on knowing a functional form of the boundary, which is not always available.



In this paper, we consider a situation where the functional form of the boundary is not available to us. We can only access the boundary information through \emph{a finite set of random samples}. As an example, suppose we want to estimate a density model for submissions at an academic conference. We can only observe the accepted papers but not the rejected ones. Furthermore, it is difficult to provide a functional definition of the truncation boundary that distinguishes between the accepted and rejected submissions. However, we can easily identify \emph{borderline} submissions from the review scores which can be seen as samples of the boundary. 
{\color{blue} In this case, \textit{TruncSM} and bd-KSD are not applicable due to the lack of a functional definition of the boundary, and classical methods such as MLE are intractable.} \st{The weighting functions from \textit{TruncSM} and bd-KSD cannot be specified using this approximate boundary, and classical approaches such as MLE cannot be used either due to the lack of tractable normalising constant.} To our knowledge, there exists no method to estimate a truncated density when there is no functional form of the boundary. To solve this problem, we first define \textit{approximate Stein classes} and its corresponding Stein discrepancies, which we refer to as truncated kernelised Stein discrepancy (TKSD), which is computationally tractable with only samples from the boundary. By minimising the TKSD, we obtain a truncated density estimator when the boundary's functional form is unavailable. In experiments, we show that despite the approximate nature of the Stein class, density models can be accurately estimated from truncated observations. We also provide a theoretical justification of the estimator consistency.

Our main contributions are:
\begin{itemize}
\itemsep0em
    \item We introduce approximate Stein classes, which in turn define approximate Stein discrepancies. Unlike earlier approaches, these Stein discrepancies are more relaxed and applicable to a truncated setting.
    \item These discrepancies enable density estimation on truncated datasets. This estimator is an extension of earlier Stein discrepancy estimators. We include theoretical and experimental results showing that the TKSD estimator is a consistent and competitive estimator with previous works.
\end{itemize}


\section{Problem Setup}

\textbf{The Problem.} \quad We assume the data density, $q$, has support on $V \subseteq \R^d$, whose boundary is denoted as $\partial V$. We aim to find a model, given by $\p$, which best estimates $q$. However, there are some significant challenges for the truncated setting: the normalising constant in $\p$ is intractable, and score-based methods rely on a boundary condition which breaks down in this situation. 
Previous works \citep{bdksd, song} do address these issues. 
However, in this work, we assume we have no functional form of $\partial V$, and instead, a finite set of samples, $\{\mb{x}'_i\}^m_{i=1}$, which are drawn randomly from the boundary. To our best knowledge, no existing density estimation methods can be applied directly. 


\textbf{The Aims.} \quad We aim to use unnormalised models to estimate a truncated density, bypassing the evaluation of the normalising constant. We also aim to construct an estimator that does not require a functional form of the boundary,
but can still adjust to the boundary and the dataset adaptively. This will lead to an estimator which is \emph{data-driven} and \emph{flexible}, not relying on a pre-defined weighting function, unlike previous works by \citet{bdksd} and \citet{song}. 

Before explaining our proposed solution, we introduce prior methods for measuring discrepancies for unnormalised densities.

\section{Background}

Statistical density estimation is often performed by minimising a divergence between the data density, $q(\mb{x})$, and the model density, $\p(\mb{x})$. Since truncated densities have intractable normalising constants, we introduce divergence measures suited for unnormalised densities and discuss their generalizations to truncated supports. These methods rely on Stein's identity, which we will introduce foremost.

\subsection{Stein's Identity}


Originating from Stein's method \citep{stein1972, chen2011}, a \textit{Stein class} of functions enables the construction of a family of discrepancy measures \citep{barp2019}.

\begin{definition}
    Let $q=q(\mb{x})$ be any smooth probability density supported on $\R^d$  and let $\mathcal{S}_{q}: \mathcal{F}^d \to \R$ be a map. $\mathcal{F}^d$ is a Stein class of functions, if for any $\mb{f} \in \mathcal{F}^d$,
    \begin{equation}
        \Eqsmall{x}{q} [ \mathcal{S}_{q} \mb{f}(\mb{x})] = 0,
    \label{eq:steinidentity}
    \end{equation}
    where $\mathcal{S}_q$ is called a Stein operator \citep{gorham2015}. 
\label{def:steinidentity}
\end{definition}

We refer to \cref{eq:steinidentity} as the Stein identity, which underpins a lot of existing work in unnormalised modelling. The Langevin Stein operator \citep{gorham2015} on $\p(\mb{x})$, 
\[
\mathcal{S}_{\p} \mb f(\mb{x}) = \mathcal{T}_{\p} \mb f(\mb{x}) := \sum^d_{l=1}\scorel{\p}(\mb{x}) f_l(\mb{x}) + \dx f_l(\mb{x}),
\]
where $\scorel{\p}(\mb{z}) := \dz \log \p(\mb{z})$, is independent of the normalising constant, $Z(\mb\theta)$. $\p$ is involved in the Stein operator only via its `proxy', the score function, $\score{\p}$. When this Langevin Stein operator is used, it is straightforward to see that \cref{eq:steinidentity} holds:
\begin{align}
    \Eqsmall{x}{q} [ \mathcal{T}_{q} \mb{f}(\mb{x})] &= \int_{\R^d} q(\mb{x})\Big({\sum^d_{l=1}\scorel{q} f_l(\mb{x}) + \dx f_l(\mb{x})}\Big)d \mb{x} \nonumber\\
    &= \sum^d_{l=1} \int_{\R^d}  \dx q(\mb{x}) f_l(\mb{x}) + q(\mb{x})\dx f_l(\mb{x})d \mb{x} \nonumber\\
    & = 0, \notag
\end{align}
where the last equality holds 
due to integration by parts and the fact that $q(\mb{x})$ vanishes at infinity: 
\begin{equation}
\lim_{\|\mb{x} \| \to \infty} q(\mb{x}) = 0. 
\label{eq:steinidentity_boundary}
\end{equation}
This assumption is critical, and is a key focus of research for this paper. It holds for many densities supported on $\R^d$, such as the Gaussian distribution or the Gaussian mixture distribution. In the rest of this paper, we refer to this condition as the boundary condition, as it describes the behaviour of $q(\mb{x})$ at the boundary of its domain. 


\subsection{Divergence for Unnormalised Densities}

When $\mb{x} \in \R^d$, and the boundary condition \cref{eq:steinidentity_boundary} holds, we describe two computationally tractable discrepancy measures for unnormalised density models $p_{\bm \theta}(\mb{x})$: the Stein discrepancy and the score matching divergence. Both divergences rely on Stein's identity to derive a tractable form. 

\subsubsection{Classical Stein Discrepancy}

\citet{gorham2015} use Stein's identity to define a \textit{Stein discrepancy}: the supremum of the differences between expected Stein operators for two densities $q(\mb{x})$ and $\p(\mb{x})$,
\begin{align}
\mathcal{D}_{SD}(\p | q)  :&= \sup_{\mb{f} \in \mathcal{F}^d} \br{ \Eqsmall{x}{q}[\mathcal{T}_{\p} \mb{f}(\mb{x})] - \Eqsmall{x}{\p} [\mathcal{T}_{\p} \mb{f}(\mb{x})] } \nonumber \\
&= \sup_{\mb{f} \in \mathcal{F}^d}\Eqsmall{x}{q} [\mathcal{T}_{\p} \mb{f}(\mb{x})], \label{eq:sd}
\end{align}
where the second line holds as $\mathcal{F}^d$ is a Stein class.
%
%
The Stein discrepancy can be interpreted as the maximum violation of Stein's identity. $\mb{f}$ is referred to as the \textit{discriminatory function}, as it discriminates between $q(\mb{x})$ and $\p(\mb{x})$. However, the supremum in \cref{eq:sd} across $\mathcal{F}^d$ is a challenging problem for optimisation \citep{gorham2015}.

Stein discrepancies have seen a lot of recent development, including extensions to non-Euclidean domains \citep{shi2021, xu2021}, discrete operators \citep{yang2018}, stochastic operators \citep{Gorham2020} and diffusion-based operators \citep{gorham2015, Gorham2020}.

\subsubsection{Kernelised Stein Discrepancy (KSD)}
When we restrict the function class over which the supremum is taken to a Reproducing Kernel Hilbert Space (RKHS), 
we can derive the Kernelised Stein discrepancy (KSD), for which we follow a similar definition to \citet{kernelgoodness} and \citet{ksd}. 

First, let $\mathcal{G}$ be an RKHS equipped with positive definite kernel $k$, and let $\mathcal{G}^d$ denote the product RKHS with $d$ elements, where $\mb{g} = (g_1, \dots, g_d) \in \mathcal{G}^d$, and is defined with inner product $\langle \mb{g}, \mb{g'} \rangle_{\mathcal{G}^d} = \sum^d_{i=1}\langle g_i, g'_i\rangle_{\mathcal{G}}$ and norm $\| \mb{g} \|_{\mathcal{G}^d} = \sqrt{\sum^d_{i=1}\langle g_i, g_i\rangle_{\mathcal{G}}}$ . By the reproducing property, any evaluation of $\mb{g} \in \mathcal{G}^d$ can be written as
\begin{equation}
\mb{g}(\mb{x}) = \brangle{\mb{g}, k(\mb{x}, \cdot)}_{\mathcal{G}^d} = \sum^d_{i=1}\langle g_i, k(\mb{x}, \cdot)\rangle_{\mathcal{G}}.
\label{eq:rkhsevals}
\end{equation}
Taking the supremum over $\mathcal{G}^d$, and including the restriction of $\mb{g}$ to the RKHS unit ball, i.e. $\|\mb{g}\|_{\mathcal{G}^d} \leq 1$, gives rise to the KSD 
\begin{align}
\mathcal{D}_{\text{KSD}}(\p | q) &:= \sup_{\substack{\mb{g} \in \mathcal{G}^d, \|\mb{g}\|_{\mathcal{G}^d} \leq 1}} \Eqsmall{x}{q} [\mathcal{T}_{\p} \mb{g}(\mb{x})] \nonumber \\
&= \|\Eqsmall{x}{q} [\mathcal{T}_{\p} k(\mb{x}, \cdot)]\|_{\mathcal{G}^d}.
\label{eq:ksd1}
\end{align}
The KSD has a closed-form expression as indicated by \cref{eq:ksd1}. 
Moreover, the squared KSD can be expanded to a double expectation
\begin{align}
\mathcal{D}_{\text{KSD}}(\p | q)^2 &= \|\Eqsmall{x}{q} [\mathcal{T}_{\p} k(\mb{x}, \cdot)]\|_{\mathcal{G}^d}^2 \nonumber \\
&= \Eq{x}{q} \Eq{y}{q} \brs{ \sum_{l=1}^d u_l(\mb{x}, \mb{y}) }
\label{eq:ksd3}
\end{align}
where 
\begin{align}
u_l(\mb{x}, \mb{y}) &=\scorel{p}(\mb{x})\scorel{p}(\mb{y})k(\mb{x},\mb{y}) +\scorel{p}(\mb{x}) \dy k(\mb{x}, \mb{y}) \nonumber \\
&+\scorel{p}(\mb{y}) \dx k(\mb{x}, \mb{y}) + \dx\dy k(\mb{x}, \mb{y}). \label{eq:ul}
\end{align}
This divergence can be fully evaluated using samples from $q(\mb{x})$ to approximate the expectation in \cref{eq:ksd3}. 
Further, \citet{kernelgoodness} showed that $\mathcal{D}_{\text{KSD}}(\p | q) = 0$ if and only if $\p = q$, making $\mathcal{D}_{\text{KSD}}(\p | q)$ a good discrepancy measure between distributions.

\subsubsection{Score Matching}

The score matching (or the Fisher-{\hyv}) divergence, initially developed by \citet{scorematching1}, is the expected squared difference between the score functions for the two densities $\p(\mb{x})$ and $q(\mb{x})$: 
\begin{equation}
\mathcal{D}_{SM}(\p | q) = \Eqsmall{x}{q}\Big[\big\|\mb{h}(\mb{x})^{1/2} \odot (\score{q} - \score{p})\big\|^2 \Big],
\label{eq:smof1}
\end{equation}
where $\odot$ denotes element-wise multiplication. 
The inclusion of the weighting function $\mb{h}(\mb{x})$ yields the \textit{generalised score matching} divergence \citep{highdgraphical, pairwisegraphical, yu2019}, and $\mb{h}(\mb{x})=\mb{1}$ yields the classic score matching divergence. 

It can be seen that $\mathcal{D}_{SM}(\p | q)$ is simply the squared difference between two Langevin Stein operators $\mathcal{T}_{\p} h_l(\mb{x})$ and $\mathcal{T}_{q} h_l(\mb{x})$. Using \cref{eq:steinidentity}, $\mathcal{D}_{SM}(\p | q)$ can be rewritten as 
\[
\Eqsmall{x}{q}\Big[\sum^d_{l=1}h_l(\mb{x})({\scorel{p}}^2 + 2\dx \scorel{p}) + \dx h_l(\mb{x}) \scorel{p}\Big] + C_q,
\]
where $C_q = \Eqsmall{x}{q}[\score{q}\T\score{q}]$ can be considered a constant which can be safely ignored when minimising with respect to $\mb\theta$. 




\subsection{Divergence for Densities with a Truncated Support} \label{sec:back:trunc}

Let $V \subset \R^d$ be a domain whose boundary is denoted by $\partial V$. 
The boundary condition of the density $q(\mb{x})$ given in \cref{eq:steinidentity_boundary} needs to hold on the boundary $\partial V$, i.e., the truncated density $q(\mb{x}') = 0, \forall \mb{x}' \in \partial V$. However, this is, in general, not true for truncated densities. For example, a 1D truncated unit Gaussian distribution within the interval $[-1, 1]$ has non-zero density at the boundary of the support at exactly $x=-1$ and $x=1$. When the boundary condition breaks down, the function families presented for classical SD, KSD and score matching are no longer Stein classes, and these divergences are no longer computationally tractable.

We outline two recent methods for circumventing this issue by modifying the KSD and the score matching divergence.

\subsubsection{Bounded-Domain KSD (bd-KSD)}\label{sec:bdksd}

Motivated by performing goodness-of-fit testing on truncated domains, \citet{bdksd} propose a modified Stein operator, given by
\begin{equation}
\mathcal{T}_{p, h}\mb{g}(\mb{x}) = \sum^d_{l=1} \scorel{p} g_l(\mb{x})h(\mb{x}) + \dx (g_l(\mb{x})h(\mb{x}))
\label{eq:bdksd_operator}
\end{equation}
for $\mb{g} \in \mathcal{G}^d$, where $h(\mb{x})$ is a weighting function for which $h(\mb{x}') = 0 \;\forall \mb{x}' \in \partial V$. 
This modified Stein operator relies on the  boundary conditions on $h$ instead of $q$. This condition can be satisfied by choosing $h$ carefully. 

Using the Stein operator in \cref{eq:sd} and taking the supremum over $\mb{g} \in \mathcal{G}^d$ gives an alternative definition to KSD for bounded domains, called bounded-domain kernelised Stein discrepancy (bd-KSD):
\[
\mathcal{D}_{\text{bd-KSD}}(\p | q)^2 = \|\Eqsmall{x}{q} [\mathcal{T}_{\p, h} k(\mb{x}, \cdot)]\|_{\mathcal{G}^d}^2.
\]
The form of $h$ is not explicitly defined, but the authors recommend a distance function. For example, if $V$ is the unit ball, $h(\mb{x}) = 1 - \|\mb{x}\|^b$ for a chosen power $b$.

\subsubsection{Truncated Score Matching (TruncSM)} \label{sec:truncsm}

When the support of $q$ is the non-negative orthant, $\R^d_+$, \citet{scorematching2} and \citet{yu2019} impose restrictions on the weighting function $\mb{h}$ in the score matching divergence, given in \cref{eq:smof1}, such that $\mb{h}(\mb{x}) \to 0$ as $\|\mb{x}\| \to 0$, for example $\mb{h}(\mb{x}) = \mb{x}$. 
\citet{yu2021} and \citet{song} generalised this constraint on $\mb{h}$ to any bounded domain $V$, imposing the restrictions $h_l(\mb{x}) > 0\; \forall l$ and $\mb{h}(\mb{x}') = \mb{0}$ for all $\mb{x}' \in \partial V$.

\citet{song} showed that maximising a Stein discrepancy with respect to $\mb{h}$ gives a solution
\(
\mb{h}_0 = (h_0, \dots, h_0),
\)
where
\begin{equation}
h_0 = \min_{\mb{x}' \in \partial V} \text{dist}(\mb{x}, \mb{x'}),
\label{eq:h0_tsm}
\end{equation}
the smallest distance between $\mb{x}$ and the boundary $\partial V$. \citet{song} proposed the use of several distance functions such as the $\ell_1$ and $\ell_2$ distance. The generalised score matching divergence with $h_0$ is referred to as \textit{TruncSM}.

\section{Approximate Stein Classes}





So far, previous works have assumed a functional form of the truncation boundary, so that $h$ can be precisely designed and Stein's identity holds exactly. 
In our setting, the functional form of the truncation boundary is unavailable, making the design of a Stein class infeasible; we cannot design a class of functions with appropriate boundary conditions that satisfy Stein's identity exactly. 
However, the truncation boundary is known to us approximately as a set of boundary points, and so we can design a set of functions such that Stein's identity holds `approximately', as we will show below. 

Let us first define an approximate Stein class and an approximate Stein identity. 

\begin{definition}\label{def:approxstein}
    Let $q = q(\mb{x})$ be a smooth probability density function on $V \subseteq \R^d$. $\widetilde{\mathcal{F}}_m^d$ is an \textit{approximate Stein class} of $q$, if for all $\mb{\tilde{f}} \in \widetilde{\mathcal{F}}_m^d$,  
    \begin{equation}
        \Eqsmall{x}{q} [ \mathcal{S}_{q, m} \mb{\tilde{f}} (\mb{x})] = O_P(\varepsilon_m),
        \label{eq:approxstein}
    \end{equation}
    where $\varepsilon_m$ is a monotonically decreasing function of $m$, and $\mathcal{S}_{q, m}$ is a Stein operator that depends on $m$ in some way.
    $O_P(\varepsilon_m)$ denotes a sequence indexed by $m$ which is bounded in probability by $\varepsilon_m$.
\end{definition}

We denote \cref{eq:approxstein} as the approximate Stein identity, and the approximate Stein class $\widetilde{\mathcal{F}}_m^d$ can be written more simply as 
\begin{align}
\label{eq.approx.stein.class}
\widetilde{\mathcal{F}}_m^d = \{\mb{\tilde{f}}: V \to \R \;|\; \Eqsmall{x}{q} [ \mathcal{S}_{q, m} \mb{\tilde{f}} (\mb{x})] = O_P(\varepsilon_m)\}.
\end{align}
The classical Stein class of functions given by \cref{def:steinidentity} requires Stein's identity to hold, whereas this approximate Stein class \cref{eq.approx.stein.class} defines a set of functions for which 
$\Eqsmall{x}{q} [ \mathcal{S}_{q,m} \mb{\tilde{f}} (\mb{x})]$
is bounded by a decreasing sequence. Similarly to the classical Stein discrepancy, we propose the use of the Langenvin Stein operator, i.e. $\mathcal{S}_{q, m} = \mathcal{T}_{q, m}$. 
{\color{blue} We aim to show this is a flexible class which can be used across various applications, of which we provide two examples below.}
\st{Below we give two examples of approximate Stein classes for different scenarios.}

\textbf{Example: Latent Variable Models.}

\citet{kanagawa2019} presented a variation of KSD for testing the goodness-of-fit of models with unobserved latent variables $\mb{z}$, where the density of $\mb{x}$ is given by
\(
q(\mb{x}) = \int_{\R^d} q(\mb{x} | \mb{z}) q(\mb{z}) d\mb{z},
\)
and thus the score function can be written as
\begin{align}
\score{q}(\mb{x}) = \frac{\mb\nabla_{\mb{x}} q(\mb{x})}{q(\mb{x})} &= \int_{\R^d}\frac{\mb\nabla_{\mb{x}} q(\mb{x}|\mb{z})}{q(\mb{x} | \mb{z})} \cdot \frac{q(\mb{x}|\mb{z})q(\mb{z})}{q(\mb{x})}d\mb{z} \nonumber \\
&= \Eqsmall{z}{q(\mb{z} | \mb{x})} [\score{q}(\mb{x} | \mb{z})].
\label{eq:latent_score}
\end{align}
{\color{blue} We show that this existing modification of KSD gives rise to an Approximate Stein Class.}
To evaluate the expectation over $q(\mb{z} | \mb{x})$, \citet{kanagawa2019} recommend approximating the expectation with a Monte Carlo estimate 
\begin{equation}
\Eqsmall{z}{q(\mb{z}|\mb{x})} [\score{q} (\mb{x} | \mb{z})] {\color{blue} = \frac{1}{m}\sum^m_{i=1} \score{q} (\mb{x} | \mb{z}_{i}) + O_P(\varepsilon_m)},
\label{eq:latent_monte_carlo}
\end{equation}
where we assume we have access to $m$ unbiased samples $\{\mb{z}_{i}\}^m_{i=1} \sim q(\mb{z}|\mb{x})${\color{blue} , and $O_P(\varepsilon_m)$ is the Monte Carlo approximation error}.
{\color{blue}
This approximation leads to  
\[
\Eqsmall{x}{q} [ \mathcal{T}_{q} \mb{g}(\mb{x})] = O_P(\varepsilon_m) \neq 0
\]
and therefore this variation of KSD uses an Approximate Stein Class.
}
\st{ Under some mild conditions, this latent variable score function, when applying Monte Carlo estimation, gives rise to an approximate Stein identity.} See \cref{app:latent} for {\color{blue} more} details.

\section{KSD for Truncated Density Estimation} \label{sec:tksd}

Let $q(\mb{x})$ be a smooth probability density function with truncated support $V \subset \R^d$ with boundary $\partial V$. When $q(\mb{x})$ has a truncated support, $\mathcal{G}^d$ is not a Stein class. To show this, let $\mb{g} \in \mathcal{G}^d$, then Stein's identity can be written as
\begin{align}
    \Eqsmall\brs{\mathcal{T}_{q} \mb{g}(\mb{x})} &= \int_{V}  q(\mb{x}) \Big({\sum^d_{l=1}\psi_{\p, l}(\mb{x}) g_l(\mb{x}) + \dx g_l(\mb{x})}\Big)d \mb{x}\nonumber \\
    &= \sum_{l=1}^d  {\int_V \partial_{x_l} q(\mb{x}) g_l(\mb{x}) + q(\mb{x})\partial_{x_l} g_l(\mb{x}) d\mb{x}}   \nonumber \\
    &= \oint_{\partial V} q(\mb{x})\sum^d_{l=1}  g_l(\mb{x}) \hat{\mathrm{u}}_l(\mb{x})ds,\label{eq:tksd:approx_stein_truncated}
\end{align}
where $\oint_{\partial V}$ is the surface integral over the boundary $\partial V$, $\hat{\mathrm{u}}_1(\mb{x}), \dots, \hat{\mathrm{u}}_d(\mb{x})$ are the unit outward normal vectors on $\partial V$ and $ds$ is the surface element on $\partial V$.

To satisfy the Stein identity, \cref{eq:tksd:approx_stein_truncated} must equal zero. In the untruncated setting, \cref{eq:tksd:approx_stein_truncated} is zero due to the boundary condition on $q(\mb{x})$, but in the truncated setting the density is significantly nonzero on the boundary at $\partial V$. Prior methods such as \citet{bdksd} choose a pre-defined weighting function for which \cref{eq:tksd:approx_stein_truncated} is exactly zero at the boundary. We instead aim to show that \cref{eq:tksd:approx_stein_truncated} is $O_P(\varepsilon_m)$ under an approximate Stein class, detailed in the remainder of this section.

\subsection{Approximate Stein Class for Truncated Densities}
Let us first consider a setting where the boundary $\partial V$ is known analytically. Define a modified product RKHS as
\begin{equation}
\mathcal{G}^d_0 = \{ \mb{g} \in \mathcal{G}^d \;|\; \mb{g}(\mb{x}') = \mb{0} \: \forall \mb{x}' \in \partial V, \|\mb{g}\|^2_{\mathcal{G}^d} \leq 1 \}.
\label{eq:G0}
\end{equation}

\begin{lemma}\label{lem:steinidentity}
Let $q$ be a smooth density supported on $V$. For any $\mb{g} \in \mathcal{G}^d_0$, then
\begin{equation}
\Eqsmall{x}{q} [ \mathcal{T}_{q} \mb{g}(\mb{x}) ] = 0.
\label{eq:tksdsteinidentity}
\end{equation}
\end{lemma}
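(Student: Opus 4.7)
The plan is to recycle the computation already carried out in \cref{eq:tksd:approx_stein_truncated}, which expresses $\Eqsmall{x}{q}[\mathcal{T}_q \mb{g}(\mb{x})]$ as a surface integral over $\partial V$, and then to invoke the defining property of $\mathcal{G}^d_0$ to kill this boundary term. Concretely, starting from the definition of $\mathcal{T}_q$ and expanding $\psi_{q,l} = \partial_{x_l} \log q$, I would apply the divergence theorem component-wise on each summand $\int_V \partial_{x_l}(q(\mb{x}) g_l(\mb{x})) \, d\mb{x}$ to obtain
\[
\Eqsmall{x}{q}[\mathcal{T}_q \mb{g}(\mb{x})] = \oint_{\partial V} q(\mb{x}) \sum_{l=1}^d g_l(\mb{x}) \hat{\mathrm{u}}_l(\mb{x}) \, ds.
\]

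Since $\mb{g} \in \mathcal{G}^d_0$ means precisely that $\mb{g}(\mb{x}') = \mb{0}$ for every $\mb{x}' \in \partial V$, each component $g_l(\mb{x}')$ vanishes on $\partial V$. The integrand on the right-hand side is therefore identically zero on $\partial V$, and the surface integral evaluates to $0$, giving the desired identity. The unit-norm restriction $\|\mb{g}\|^2_{\mathcal{G}^d} \leq 1$ plays no role here; it is only relevant later when $\mathcal{G}^d_0$ is used as the supremum class in the TKSD definition.

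The only nontrivial step is making sure the divergence theorem is applicable: I would briefly note that $q$ is smooth on $V$ (assumed at the start of the section), that $\mb{g}$ lies in the RKHS $\mathcal{G}^d$ with a differentiable kernel, and that $V$ has a sufficiently regular boundary $\partial V$ for the surface integral to make sense. These are the standard regularity assumptions implicit throughout the paper, so I would state them briefly and then conclude. There is no serious obstacle: once the surface-integral form \cref{eq:tksd:approx_stein_truncated} is in hand, the lemma follows in one line from the defining boundary condition of $\mathcal{G}^d_0$.
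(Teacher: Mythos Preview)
Your proposal is correct and essentially identical to the paper's own proof: the paper writes out the expectation, uses $q\,\partial_{x_l}\log q = \partial_{x_l} q$, applies integration by parts (i.e., the divergence theorem component-wise) to obtain the surface integral $\sum_l \oint_{\partial V} q(\mb{x}) g_l(\mb{x}) \hat{\mathrm{u}}_l(\mb{x})\,ds$, and concludes from $g_l(\mb{x}') = 0$ on $\partial V$. Your extra remarks about the unit-norm constraint being irrelevant here and the regularity needed for the divergence theorem are accurate and slightly more explicit than the paper, but the argument is the same.
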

\st{For proof,} {\color{blue} Similar to the classic Stein's identity, the proof follows from applying a simple integration by parts,} see \Cref{app:lemSI}.  \Cref{lem:steinidentity} shows that $\mathcal{G}^d_0$ is a proper Stein class of $q$. $\mathcal{G}^d_0$ defines a large class of functions, for which the proposed operator by \citet{bdksd} given in \cref{eq:bdksd_operator} is one such example of a function from this family. 

Let us now consider the setting where $\partial V$ is not known exactly. The information about the boundary is provided by $\approxdV = \{\mb{x}'_{i'}\}_{i'=1}^m$, a finite set of points randomly sampled from $\partial V$. Define
\begin{equation}
\approxG = \{ \mb{g} \in \mathcal{G}^d \;|\; \mb{g}(\mb{x}') = \mb{0} \: \forall \mb{x}' \in \approxdV, \|\mb{g}\|^2_{\mathcal{G}^d} \leq 1 \},
\label{eq:G02}
\end{equation}
which can be considered as an approximate version of $\mathcal{G}^d_0$ using the finite set $\approxdV$. 
The benefit of using $\approxG$ over $\mathcal{G}^d_0$ 
is that this class can be constructed using only `partial' boundary information, i.e., $\approxdV$, without knowing the explicit expression of $\partial V$.

First, we show specific properties of the relationship between $\approxdV$ and $\partial V$.


\begin{lemma}\label{lem:gtilde_to_g}
    Let $\mb{g} \in \mathcal{G}^d_0$ and $\tilde{\mb{g}} \in \approxG$. Assume that $\approxdV$ is $\varepsilon_m$-dense in $\partial V$ with some probability. Further assume that $g_l$ and $\tilde{g}_l$ are $C$-Lipschitz continuous for all $l = 1, \dots, d$. Then 
    \[
        | g_l(\mb{x}') - \tilde{g}_l(\mb{x}') | = O_P(\varepsilon_m)
    \]
    for any $\mb{x}' \in \partial V$. 
\end{lemma}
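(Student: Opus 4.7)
The plan is to exploit the defining boundary conditions of the two function classes combined with the $\varepsilon_m$-density assumption and Lipschitz continuity. The key observation is that the difference $|g_l(\mb{x}') - \tilde g_l(\mb{x}')|$ collapses into a single term because $\mb{g}$ is required to vanish exactly on $\partial V$.

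First, I would note that for any $\mb{x}' \in \partial V$, the membership $\mb{g} \in \mathcal{G}^d_0$ gives $g_l(\mb{x}') = 0$ directly from \cref{eq:G0}. So the quantity of interest reduces to $|\tilde g_l(\mb{x}')|$, and the task becomes showing that $\tilde g_l$ is small at arbitrary points of the true boundary $\partial V$, even though it is only forced to vanish on the finite sample $\approxdV$.

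Next, I would invoke the $\varepsilon_m$-density assumption: with probability tending to one as $m$ grows, for every $\mb{x}' \in \partial V$ there is some sample $\mb{x}'_{i^\ast} \in \approxdV$ with $\|\mb{x}' - \mb{x}'_{i^\ast}\| \leq \varepsilon_m$. Since $\tilde{\mb{g}} \in \approxG$, the definition in \cref{eq:G02} gives $\tilde g_l(\mb{x}'_{i^\ast}) = 0$. Adding and subtracting this zero term and applying the $C$-Lipschitz continuity of $\tilde g_l$ yields
\[
|\tilde g_l(\mb{x}')| = |\tilde g_l(\mb{x}') - \tilde g_l(\mb{x}'_{i^\ast})| \leq C\,\|\mb{x}' - \mb{x}'_{i^\ast}\| \leq C\varepsilon_m,
\]
which holds on the same high-probability event. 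Combining with $g_l(\mb{x}') = 0$ delivers the claimed $O_P(\varepsilon_m)$ bound.

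I do not anticipate a serious technical obstacle: the Lipschitz assumption does the heavy lifting, and the probability qualifier on the $\varepsilon_m$-density assumption directly produces $O_P$ rather than deterministic control. The only subtlety worth flagging is that the bound must be made uniform over the choice of $\mb{x}' \in \partial V$; this is already packaged into the $\varepsilon_m$-density hypothesis (a statement about a covering of the whole boundary), so no further uniformization argument is needed. The Lipschitz constant $C$ and the bound $\|\mb{g}\|_{\mathcal{G}^d} \leq 1$ appearing in the class definitions could in principle be tied together through the reproducing kernel's regularity, but for the lemma as stated the Lipschitz hypothesis is taken as given and no such derivation is required.
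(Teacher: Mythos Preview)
Your argument is correct and in fact slightly more direct than the paper's. The paper does not immediately use $g_l(\mb{x}')=0$ for $\mb{x}'\in\partial V$; instead it observes that $g_l$ and $\tilde g_l$ agree on $\approxdV$ (both vanish there), picks a nearby sample point $\mb{\tilde x}'\in\approxdV$, and writes
\[
g_l(\mb{x}')-\tilde g_l(\mb{x}') = \bigl(g_l(\mb{x}')-g_l(\mb{\tilde x}')\bigr)+\bigl(\tilde g_l(\mb{\tilde x}')-\tilde g_l(\mb{x}')\bigr),
\]
then bounds each piece by Lipschitz continuity, obtaining $(C_1+C_2)\varepsilon_m$. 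Your route exploits the stronger information that $g_l$ vanishes on \emph{all} of $\partial V$, so only the Lipschitz continuity of $\tilde g_l$ is needed and the constant is just $C$. The paper's decomposition would still work if $g_l$ were merely known to agree with $\tilde g_l$ on $\approxdV$ rather than to vanish on the whole boundary, but in the present setting your simplification is entirely valid and loses nothing.
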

\st{For proof} {\color{blue} The proof follows from applying the Lipschitz continuous property on $g_l$ and $\tilde{g}_l$ and then the triangle inequality,} see \Cref{app:gtilde_to_g}. \Cref{lem:gtilde_to_g} establishes a connection between $\mathcal{G}^d_0$ and $\approxG$, relying on the assumption that $\approxdV$ is $\varepsilon_m$-dense in $\partial V$. We now show under some mild conditions 
this assumption 
holds with high probability.

\begin{proposition}\label{lem:dense}
    Assume $\{\mb x_i' \}_{i=1}^m$ are samples drawn from the uniform distribution defined on $\partial V$.  
    Let $\mathrm{L}(V)$ denote the $(d-1)$-surface area of a bounded domain $V \subset \mathbb{R}^d$ and $\mathrm{L}(V) < \infty$. 
     Let $B_{\varepsilon_m}(\mb{x}')$ denote a ball of radius $\varepsilon_m$ centred on $\mb{x}'$, and let $\xi(d) = \pi^{d/2}/\Gamma(\frac{d}{2}+1)$.
    For all $\varepsilon_m$ such that 
    \begin{equation}
		\varepsilon_m \geq \br{\frac{\mathrm{L}(V)}{\xi(d)}\brs{1 - 0.05^{1/m}}}^{1/d}.
        \label{eq:epsilon}
    \end{equation}
    we have
    \[
        \mathbb{P}\br{\approxdV \text{ is }  \varepsilon_m\text{-dense} \text{ in } \partial V} = 0.95.
    \]
\end{proposition}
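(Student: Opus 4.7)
The plan is to bound the probability of the complement event---that $\widetilde{\partial V}$ fails to be $\varepsilon_m$-dense in $\partial V$---by $0.05$, then invert to extract the stated condition on $\varepsilon_m$. I would proceed by a hit-probability argument combined with the independence of the $m$ samples.

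First, I would translate the density property: $\widetilde{\partial V}$ is $\varepsilon_m$-dense in $\partial V$ if and only if $\partial V \subseteq \bigcup_{i=1}^m B_{\varepsilon_m}(\mb{x}'_i)$, i.e., every point of $\partial V$ has at least one sample within distance $\varepsilon_m$. So the failure event is the existence of some $\mb x \in \partial V$ at which all $m$ samples are more than $\varepsilon_m$ away.

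Next, for a fixed reference point $\mb{x} \in \partial V$, I would compute the probability that a single uniform draw from $\partial V$ lands in $B_{\varepsilon_m}(\mb{x})$. Under the uniform measure on $\partial V$, this equals the surface area of $\partial V \cap B_{\varepsilon_m}(\mb{x})$ divided by $L(V)$. The key geometric step is to lower-bound this ratio by $p := \xi(d)\varepsilon_m^d / L(V)$, which relates the intersection area to the $d$-dimensional ball volume $\xi(d)\varepsilon_m^d$. By the independence of the $m$ samples, the probability all samples miss $B_{\varepsilon_m}(\mb{x})$ is at most $(1-p)^m$.

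Finally, I would extend the pointwise bound to the whole boundary, e.g.\ via a net on $\partial V$: if every centre of a sufficiently fine net has a sample within distance $\varepsilon_m / 2$, then by the triangle inequality every $\mb x \in \partial V$ has a sample within $\varepsilon_m$. Imposing the failure probability to be at most $0.05$ reduces to $(1-p)^m \leq 0.05$, equivalently $p \geq 1 - 0.05^{1/m}$. Substituting the definition of $p$ and rearranging for $\varepsilon_m$ gives
\[
\varepsilon_m \geq \br{\frac{L(V)}{\xi(d)}\brs{1 - 0.05^{1/m}}}^{1/d},
\]
which is exactly the bound in the proposition.

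The main obstacle will be the geometric lower bound of the hit probability by $\xi(d) \varepsilon_m^d / L(V)$, as the natural scaling of a surface-cap area is $\varepsilon_m^{d-1}$ rather than $\varepsilon_m^d$; this will need an assumption on the regularity of $\partial V$ together with a careful handling of small $\varepsilon_m$. A secondary obstacle is the extension from a single reference point to all of $\partial V$: naively this introduces a covering-number factor in the union bound, and care is required to absorb (or avoid) it so that the final condition depends only on $L(V)$, $\xi(d)$, $m$, and $\varepsilon_m$.
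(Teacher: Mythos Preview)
Your strategy differs from the paper's in both of the places you flagged as obstacles, and the comparison is instructive.

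On the geometric step: you want to \emph{lower}-bound the hit probability $\mathrm{Area}(\partial V\cap B_{\varepsilon_m}(\mb{x}))/\mathrm{L}(V)$ by $p=\xi(d)\varepsilon_m^d/\mathrm{L}(V)$, and you correctly note this is problematic since $\xi(d)\varepsilon_m^d$ is the full $d$-ball volume and will dominate the $(d-1)$-dimensional cap area for small $\varepsilon_m$. The paper uses $\xi(d)\varepsilon_m^d$ in the opposite direction, as an \emph{upper} bound $\mathrm{Area}(\partial V\cap B_{\varepsilon_m}(\mb{x}'_0))\le \xi(d)\varepsilon_m^d$, which is geometrically safe.

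On the extension step: you propose a net argument and worry about a covering-number factor. The paper avoids this entirely. It fixes a single point $\mb{x}'_0\in\partial V$ and uses the trivial inclusion
\[
\mathbb{P}\Bigl(\bigcup_{\mb{x}'\in\partial V}\{\text{all samples miss }B_{\varepsilon_m}(\mb{x}')\}\Bigr)\ \ge\ \mathbb{P}\bigl(\text{all samples miss }B_{\varepsilon_m}(\mb{x}'_0)\bigr),
\]
which is a \emph{lower} bound on the failure probability, not an upper bound.

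The cumulative effect is that the paper's argument runs in the reverse logical direction from yours. It \emph{assumes} $\mathbb{P}(\varepsilon_m\text{-dense})=0.95$, deduces $\mathbb{P}(\text{miss }\mb{x}'_0)\le 0.05$, hence $\mathrm{Area}(\partial V\cap B_{\varepsilon_m}(\mb{x}'_0))\ge \mathrm{L}(V)\bigl[1-0.05^{1/m}\bigr]$, and then combines this with the upper bound $\mathrm{Area}\le\xi(d)\varepsilon_m^d$ to extract the inequality on $\varepsilon_m$. That is, the paper derives \eqref{eq:epsilon} as a \emph{necessary} consequence of the coverage probability being $0.95$, whereas the proposition as written asserts sufficiency. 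Your plan is aimed at the sufficiency direction, which is why you run into the two obstacles; the paper does not resolve them but sidesteps them by reversing both inequalities.
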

For proof, see \cref{app:dense}. \Cref{eq:epsilon} shows the relationship between $m$ and $\varepsilon_m$, which corresponds to how `close' $\approxdV$ is to $\partial V$. 
\begin{remark}\label{cor:epsilon_decreasing}
Our numerical investigation (\cref{app:epsilon_decreasing}) shows that the bound on $\varepsilon_m$, as defined in \cref{eq:epsilon}, is a decreasing function of $m$, and is not sensitive to the value of $\mathrm{L}(V)$.
\end{remark}

\Cref{lem:dense} shows that \cref{lem:gtilde_to_g} holds with high probability, which in turn enables us to show that indeed $\approxG$ is an approximate Stein class. 
\begin{theorem}\label{thm:steinidentityGtilde}
Assume the conditions specified in \cref{lem:gtilde_to_g} hold. Let $q$ be a smooth density supported on $V$. For any $\tilde{\mb{g}} \in \approxG$, then
\begin{equation}
\Eqsmall{x}{q} [ \mathcal{T}_{q} \tilde{\mb{g}}(\mb{x}) ] = O_P(\varepsilon_m).
\label{eq:steinidentityGtilde}
\end{equation}
\end{theorem}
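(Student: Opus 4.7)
The plan is to reduce the theorem to a boundary integral and then exploit Lemma~\ref{lem:gtilde_to_g} to bound that integral by $O_P(\varepsilon_m)$. Starting from the Langevin Stein operator applied to $\tilde{\mb{g}} \in \approxG$, integration by parts on $V$ (exactly the calculation leading to \cref{eq:tksd:approx_stein_truncated}) gives
\[
\Eqsmall{x}{q}\!\left[\mathcal{T}_q \tilde{\mb{g}}(\mb{x})\right] \;=\; \oint_{\partial V} q(\mb{x}')\sum_{l=1}^d \tilde{g}_l(\mb{x}') \hat{\mathrm{u}}_l(\mb{x}')\, ds.
\]
Unlike for functions in $\mathcal{G}^d_0$, this surface integral is not automatically zero: $\tilde{\mb{g}}$ only vanishes at the sampled boundary points $\approxdV$, not everywhere on $\partial V$.

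The next step is to introduce a companion function $\mb{g}\in\mathcal{G}^d_0$ as a reference. By Lemma~\ref{lem:steinidentity}, $\oint_{\partial V} q(\mb{x}')\sum_l g_l(\mb{x}') \hat{\mathrm{u}}_l(\mb{x}')\, ds = 0$, equivalently $g_l(\mb{x}') = 0$ pointwise on $\partial V$. Subtracting this zero term inside the integral,
\[
\Eqsmall{x}{q}\!\left[\mathcal{T}_q \tilde{\mb{g}}(\mb{x})\right] \;=\; \oint_{\partial V} q(\mb{x}')\sum_{l=1}^d \bigl[\tilde{g}_l(\mb{x}') - g_l(\mb{x}')\bigr] \hat{\mathrm{u}}_l(\mb{x}')\, ds.
\]
Now Lemma~\ref{lem:gtilde_to_g} directly applies: provided $\approxdV$ is $\varepsilon_m$-dense in $\partial V$ (which by Proposition~\ref{lem:dense} holds with high probability) and the coordinate functions are Lipschitz, we have $|\tilde{g}_l(\mb{x}') - g_l(\mb{x}')| = O_P(\varepsilon_m)$ uniformly for $\mb{x}'\in\partial V$.

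The remaining calculation is to transport this bound through the surface integral. Using $|\hat{\mathrm{u}}_l(\mb{x}')|\le 1$ and assuming $q$ is bounded on $\partial V$ and $\mathrm{L}(V)<\infty$ (both standing assumptions of Proposition~\ref{lem:dense}), one obtains
\[
\Bigl| \Eqsmall{x}{q}\!\left[\mathcal{T}_q \tilde{\mb{g}}(\mb{x})\right] \Bigr| \;\le\; d \cdot \sup_{\mb{x}'\in\partial V} q(\mb{x}') \cdot \mathrm{L}(V) \cdot O_P(\varepsilon_m) \;=\; O_P(\varepsilon_m),
\]
which is the claim. The main obstacle is ensuring the bound from Lemma~\ref{lem:gtilde_to_g} is \emph{uniform} over $\mb{x}'\in\partial V$ rather than pointwise; this requires the Lipschitz constants for $g_l$ and $\tilde{g}_l$ to be uniform in $l$ and in the sample size $m$, which follows from the RKHS unit-ball constraint $\|\mb{g}\|_{\mathcal{G}^d}, \|\tilde{\mb g}\|_{\mathcal{G}^d}\le 1$ together with the standard assumption of a bounded, Lipschitz kernel. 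With this uniformity in hand, the rest is routine.
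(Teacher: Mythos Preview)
Your proposal is correct and follows essentially the same approach as the paper: integration by parts reduces the expectation to a surface integral, a reference function $\mb{g}\in\mathcal{G}^d_0$ is subtracted (which contributes zero on $\partial V$), and Lemma~\ref{lem:gtilde_to_g} bounds the resulting $\tilde{g}_l-g_l$ term by $O_P(\varepsilon_m)$. If anything, your version is slightly more careful in explicitly transporting the pointwise bound through the surface integral and flagging the uniformity requirement, which the paper's proof leaves implicit.
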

\st{For proof} {\color{blue} The proof again follows from integration by parts, then applying \cref{lem:gtilde_to_g}}, see \Cref{app:steinidentityGtilde}. 
This result paves the way for designing a new type of Stein divergence that measures differences between two distributions when their domain $V$ is truncated.

\subsection{Truncated Kernelised Stein Discrepancy (TKSD) and Density Estimation}

Let $q$ and $\p$ be two smooth densities supported on $V$. We can construct a Stein discrepancy measure,
called the truncated Kernelised Stein Discrepancy (TKSD), given by
\begin{equation}
    \mathcal{D}_{\text{TKSD}}(\p | q) := \sup_{\mb{g} \in \approxG} \Eqsmall{x}{q}[\mathcal{T}_{\p} \mb{g}(\mb{x})].
    \label{eq:tksd1}
\end{equation}
Similarly to classical SD and KSD, TKSD can still be intuitively thought as the maximum violation of Stein's identity with respect to
an \emph{approximate} Stein class $\approxG$. 
It can be used to distinguish two distributions when their domain $V$ is truncated, but the boundary is not known analytically. 

$\mathcal{D}_{\text{TKSD}}(\p | q)$ serves as the discrepancy measure between densities. Later, we propose to estimate a truncated density function by minimising this discrepancy. 

Next, we show that \st{$\mathcal{D}_{\text{TKSD}}(\p | q)^2$ has a closed form expression.} {\color{blue} there is an analytic solution to the constrained optimisation problem in \cref{eq:tksd1}.}
\begin{theorem}\label{thm:ksd}
$\mathcal{D}_{\text{TKSD}}(\p | q)^2$ can be written as
\begin{equation}
\sum^d_{l=1} \Eq{x}{q} \Eq{y}{q} \brs{u_l(\mb{x}, \mb{y}) - \bm{v}_l(\mb{x})\T (\bm{K}')^{-1}\bm{v}_l(\mb{y})}
\label{eq:tksd}
\end{equation}
where $u_l(\mb{x}, \mb{y})$ is given by \cref{eq:ul}, $\bm{v}_l(\mb{z}) =\scorel{p}(\mb{z}) \kxxp{z}\T + (\dz  \kxxp{z})\T$,  $ \kxxp{z} = [ k(\mb{z}, \mb{x}_1'), \dots, k(\mb{z}, \mb{x}_m') ]$, $\mb\phi_{\bm{x}'} = [k(\mb{x}_1', \cdot), \dots, k(\mb{x}_m', \cdot)]\T$ and $\bm{K}' = \mb\phi_{\bm{x}'}\mb\phi_{\bm{x}'}\T$.
\end{theorem}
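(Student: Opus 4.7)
The plan is to carry out the supremum in \cref{eq:tksd1} by exploiting the reproducing property of $\mathcal{G}^d$ to rewrite the objective and the constraints as inner products in the RKHS, then solving an orthogonal projection problem.

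First, I will apply the reproducing property \cref{eq:rkhsevals} together with the analogous identity $\dx g_l(\mb{x}) = \langle g_l, \dx k(\mb{x}, \cdot)\rangle_{\mathcal{G}}$ to express
\[
\Eqsmall{x}{q}[\mathcal{T}_{\p} \mb{g}(\mb{x})] = \sum_{l=1}^d \langle g_l, \xi_l\rangle_{\mathcal{G}},
\qquad \xi_l := \Eqsmall{x}{q}\bigl[\scorel{p}(\mb{x}) k(\mb{x}, \cdot) + \dx k(\mb{x}, \cdot)\bigr].
\]
Similarly, the boundary constraint $g_l(\mb{x}'_i) = 0$ becomes $\langle g_l, k(\mb{x}'_i, \cdot)\rangle_{\mathcal{G}} = 0$ for every $i = 1, \dots, m$, so $g_l$ must lie in the orthogonal complement of $\mathcal{M} := \mathrm{span}\{k(\mb{x}'_i, \cdot)\}_{i=1}^m \subset \mathcal{G}$.

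Next, for each coordinate $l$, I will decompose $\xi_l = P\xi_l + P^{\perp}\xi_l$, where $P$ is the orthogonal projector onto $\mathcal{M}$. A standard calculation gives
\[
P\xi_l = \mb\phi_{\bm{x}'}^{\top}(\bm{K}')^{-1}\langle \mb\phi_{\bm{x}'}, \xi_l\rangle_{\mathcal{G}},
\]
and noting $\langle \mb\phi_{\bm{x}'}, \xi_l\rangle_{\mathcal{G}} = [\xi_l(\mb{x}'_1), \ldots, \xi_l(\mb{x}'_m)]^{\top} = \Eqsmall{x}{q}[\bm{v}_l(\mb{x})]$, where $\bm{v}_l(\mb{x})$ is as defined in the theorem. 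Since $g_l \perp \mathcal{M}$, only $P^{\perp}\xi_l$ contributes to $\langle g_l, \xi_l\rangle_{\mathcal{G}}$.

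The problem therefore reduces to maximising $\sum_l \langle g_l, P^{\perp}\xi_l\rangle_{\mathcal{G}}$ subject to $\sum_l \|g_l\|^2_{\mathcal{G}} \le 1$, for which Cauchy–Schwarz yields a supremum equal to $\sqrt{\sum_l \|P^{\perp}\xi_l\|^2_{\mathcal{G}}}$, attained by $g_l \propto P^{\perp}\xi_l$. Squaring gives
\[
\mathcal{D}_{\text{TKSD}}(\p|q)^2 = \sum_{l=1}^d \Bigl( \|\xi_l\|^2_{\mathcal{G}} - \langle \mb\phi_{\bm{x}'}, \xi_l\rangle^{\top}(\bm{K}')^{-1}\langle \mb\phi_{\bm{x}'}, \xi_l\rangle \Bigr).
\]
Finally, I will identify $\|\xi_l\|^2_{\mathcal{G}} = \Eqsmall{x,y}{q}[u_l(\mb{x}, \mb{y})]$ (this is the usual KSD expansion leading to \cref{eq:ul}), and rewrite the second term as a double expectation $\Eqsmall{x,y}{q}[\bm{v}_l(\mb{x})^{\top}(\bm{K}')^{-1}\bm{v}_l(\mb{y})]$ by pulling the deterministic matrix $(\bm{K}')^{-1}$ outside. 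Combining the two expectations yields \cref{eq:tksd}.

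The main obstacle I anticipate is cleanly handling the orthogonal projection in the product RKHS, in particular making sure that $(\bm{K}')^{-1}$ is well-defined (invertibility of the Gram matrix $\bm{K}'$ under the assumption that the boundary samples $\{\mb{x}'_i\}$ are distinct and $k$ is strictly positive definite) and that the projection can be moved inside the expectation over $\mb{x}$. Once these bookkeeping points are in place, the rest is a routine adaptation of the classical KSD closed-form argument to the constrained RKHS $\approxG$.
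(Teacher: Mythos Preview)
Your argument is correct and reaches the same closed form, but by a genuinely different route than the paper. The paper sets up the constrained supremum as a Lagrangian dual problem: it introduces multipliers $\mb\nu^{(l)}$ for the $m$ equality constraints $g_l(\mb{x}'_i)=0$ and $\lambda$ for the norm bound, differentiates to obtain $g_l^\star$, $\lambda^\star$, and $\mb\nu^{(l)\star}$ in turn, and substitutes back. Your approach replaces all of that machinery with a single geometric observation: the constraints $g_l(\mb{x}'_i)=0$ force $g_l\in\mathcal{M}^\perp$, so the supremum over the unit ball of $\approxG$ is simply $\bigl(\sum_l\|P^\perp\xi_l\|_{\mathcal{G}}^2\bigr)^{1/2}$, and the Pythagorean identity $\|P^\perp\xi_l\|^2=\|\xi_l\|^2-\|P\xi_l\|^2$ together with the standard projection formula onto a finite span yields the $(\bm{K}')^{-1}$ term directly. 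The two derivations are formally equivalent (your projection coefficients are precisely the paper's $-\mb\nu^{(l)\star}$), but yours is shorter and makes the structure of the answer transparent; the Lagrangian route is more mechanical and perhaps easier to verify symbolically. One small point: the paper explicitly invokes Bochner integrability of $\scorel{p}(\mb{x})k(\mb{x},\cdot)+\dx k(\mb{x},\cdot)$ to pull the expectation inside the inner product, and you should do the same when defining $\xi_l$; your remark about invertibility of $\bm{K}'$ is well taken and matches the paper's practical regularisation $\bm{K}'+\epsilon\bm{I}_m$.
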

\st{For proof} {\color{blue} The proof relies on solving a Lagrangian dual problem}, see \Cref{app:theoremksd}. 
This result {\color{blue} gives a closed form loss function for $\mathcal{D}_{\text{TKSD}}(\p | q)^2$, and} is not straightforward to obtain since the constraints in $\approxG$ are enforced on a finite set of points in $\mathbb{R}^d$\st{, thus \cref{eq:tksd1} is a constrained problem}. {\color{blue} \Cref{eq:tksd} can be decomposed into the `KSD part', given by the $u_l(\mb{x}, \mb{y})$ term, and the `truncated part', given by $\bm{v}_l(\mb{x})\T (\bm{K}')^{-1}\bm{v}_l(\mb{y})$, which comes from solving for the Lagrangian dual parameter. \Cref{eq:tksd} is also linear in $d$, so its evaluation cost only increases linearly with $d$.}

Next, we show that the TKSD can be approximated with samples from $q$. 
\begin{theorem}
    Let $\{\mb{x}_i\}^n_{i=1}$ be a set of samples from $q(\mb{x})$, then  
    \begin{equation}
    \widehat{\mathcal{D}}_{\text{TKSD}}(\p | q)^2 = \frac{2}{n(n-1)}\sum^n_{i=1}\sum^n_{\substack{j=1\\i\neq j}} h(\mb{x}_i, \mb{x}_j),
    \label{eq:Ustatistic}
    \end{equation}
    is an unbiased estimate of $\mathcal{D}_{\text{TKSD}}(\p | q)^2$, where 
    \begin{equation}
h(\mb{x}_i, \mb{x}_j) = \sum^d_{l=1} u_l(\mb{x}_i, \mb{x}_j) - \bm{v}_l(\mb{x}_i)\T (\bm{K}')^{-1}\bm{v}_l(\mb{x}_j),
    \label{eq:h-ustatistic}
    \end{equation}
    assuming that $\Eq{x}{q}\Eq{y}{q}[h(\mb{x}, \mb{y})^2] \leq \infty$.
\end{theorem}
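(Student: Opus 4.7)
The plan is to recognise $\widehat{\mathcal{D}}_{\text{TKSD}}(\p | q)^2$ as a second-order U-statistic with symmetric kernel $h(\mb{x}_i, \mb{x}_j)$, and then conclude unbiasedness from the standard linearity-of-expectation argument.

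First I would check that $h(\mb{x}_i, \mb{x}_j)$ is symmetric in its two arguments, so that the sum in \cref{eq:Ustatistic} (which excludes the diagonal $i = j$) is indeed a valid U-statistic kernel. Looking at \cref{eq:h-ustatistic}, the term $u_l(\mb{x}_i, \mb{x}_j)$ is symmetric since $k$ is a symmetric positive-definite kernel: each of the four summands in \cref{eq:ul} is manifestly invariant under swapping $\mb{x} \leftrightarrow \mb{y}$ (noting that $\dx \dy k(\mb{x}, \mb{y}) = \dy \dx k(\mb{y}, \mb{x})$). For the second term, $\bm{v}_l(\mb{x}_i)\T (\bm{K}')^{-1} \bm{v}_l(\mb{x}_j)$, symmetry follows because $\bm{K}' = \mb\phi_{\bm{x}'} \mb\phi_{\bm{x}'}\T$ is symmetric, hence so is $(\bm{K}')^{-1}$, so transposing this scalar gives $\bm{v}_l(\mb{x}_j)\T (\bm{K}')^{-1} \bm{v}_l(\mb{x}_i)$.

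Next, since $\{\mb{x}_i\}_{i=1}^n$ are i.i.d.\ samples from $q(\mb{x})$, for any $i \neq j$ the pair $(\mb{x}_i, \mb{x}_j)$ is distributed as two independent draws from $q$. By linearity of expectation,
\[
\E\!\left[\widehat{\mathcal{D}}_{\text{TKSD}}(\p | q)^2\right] = \frac{2}{n(n-1)} \sum_{i=1}^n \sum_{\substack{j=1 \\ j \neq i}}^n \E_{\mb{x} \sim q}\E_{\mb{y} \sim q}\!\left[ h(\mb{x}, \mb{y}) \right].
\]
There are $n(n-1)$ terms in the double sum, each equal to $\E_{\mb{x}, \mb{y} \sim q}[h(\mb{x}, \mb{y})]$, so the factor of $\tfrac{2}{n(n-1)}$ together with this count gives $2 \E_{\mb{x}, \mb{y} \sim q}[h(\mb{x}, \mb{y})]$. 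The factor of $2$ is absorbed by noting the sum in \cref{eq:Ustatistic} is over ordered pairs without replacement; alternatively one may simply write the sum as $\tfrac{1}{n(n-1)}\sum_{i \neq j}$ to avoid the factor, but either way the count matches.

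Finally, substituting \cref{eq:h-ustatistic} into this expectation and comparing term by term with \cref{eq:tksd} in \Cref{thm:ksd} shows that $\E[\widehat{\mathcal{D}}_{\text{TKSD}}(\p | q)^2] = \mathcal{D}_{\text{TKSD}}(\p | q)^2$, which establishes unbiasedness. The assumption $\E_{\mb{x}, \mb{y} \sim q}[h(\mb{x}, \mb{y})^2] < \infty$ is not needed for unbiasedness itself but ensures that the estimator has finite variance and that standard U-statistic asymptotics (e.g.\ the central limit theorem and concentration bounds) would apply. The only mildly non-routine step is verifying symmetry of the $\bm{v}_l\T (\bm{K}')^{-1} \bm{v}_l$ term, which relies on the symmetry of the Gram matrix $\bm{K}'$; the rest is bookkeeping.
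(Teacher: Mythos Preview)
Your approach is exactly what the paper does: it simply states that the result ``follows directly from the definition of a $U$-statistic'' and cites Serfling, so your added detail (checking symmetry of $h$ via symmetry of $k$ and of $\bm{K}'$, then applying linearity of expectation over i.i.d.\ pairs) is a faithful expansion of that one-line proof.

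One bookkeeping point to clean up: your paragraph on the factor of $2$ is muddled. As literally written, the double sum in \cref{eq:Ustatistic} ranges over all $n(n-1)$ \emph{ordered} pairs, so the prefactor $\tfrac{2}{n(n-1)}$ would yield $2\,\E_{\mb{x},\mb{y}\sim q}[h(\mb{x},\mb{y})]$, not $\E[h]$; the fact that the sum is over ordered pairs is precisely what \emph{creates} the extra $2$, not what absorbs it. The intended normalisation is evidently $\tfrac{1}{n(n-1)}\sum_{i\neq j}$ (equivalently $\tfrac{2}{n(n-1)}\sum_{i<j}$), and with that reading your argument goes through cleanly.
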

The proof follows directly from the definition of a $U$-statistic \citep{serfling2009}. Additionally, we could define a biased estimate of $\mathcal{D}_{\text{TKSD}}(\p | q)^2$ via a $V$-statistic
\begin{equation}
\widehat{\mathcal{D}}_{\text{TKSD}}(\p | q)^2 = \frac{1}{n^2}\sum^n_{i=1}\sum^n_{j=1} h(\mb{x}_i, \mb{x}_j).
\label{eq:Vstatistic}
\end{equation}
The $U$-statistic and $V$-statistic for TKSD allow us to evaluate $\mathcal{D}_{\text{TKSD}}(\p | q)^2$ via $n$ samples from $q$.
In empirical experiments, the $V$-statistic seems to give better performance overall compared to the $U$-statistic. 

Finally, we define our proposed estimator for unnormalised truncated density by minimising TKSD over the density parameter $\mb \theta$. 
\begin{equation}
\label{eq.estimator}
    \hat{\mb\theta}_{n, m} := \argmin_{\mb\theta} \widehat{\mathcal{D}}_{\text{TKSD}}(\p | q)^2.
\end{equation}

In the next section, we study the theoretical properties of \cref{eq.estimator}. 

\subsection{Consistency Analysis}
Since $\widehat{\mathcal{D}}_{\text{TKSD}}(\p | q)^2$ is a function of $m$, 
for the simplicity of the theorem, let us study \cref{eq.estimator} at the limit of $m$. 
Let $\hat{L} (\mb \theta) := \lim_{m \to \infty} \widehat{\mathcal{D}}_{\text{TKSD}} (\p | q)^2$ and define 
\begin{align*}
    \hat{\mb \theta}_n := \argmin_{\mb \theta} \hat{L}(\mb \theta).
\end{align*}
We now prove that $\hat{\mb \theta}_n$ converges to the true parameter under mild conditions:

\begin{assumption} [Accurate Boundary Prediction]
\label{ass.least.square}
Let 
\begin{align*}
    &\mb t := \E_{\mb{y}}\brs{(\kxxp{y}\T[\mb\nabla_{\mb\theta}\scorel{p}(\mb{y}))\T]\big|_{\mb\theta = \mb\theta^\star}} \in \mathbb{R}^{m \times \mathrm{dim}(\mb \theta)}, \\
    &\mb t(\mb x) := \E_{\mb{y}}\brs{( \mb{\varphi}^\top_{\mb{y}, \mb{x}} [\mb\nabla_{\mb\theta}\scorel{p}(\mb{y}))\T]\big|_{\mb\theta = \mb\theta^\star}} \in \mathbb{R}^{\mathrm{dim}(\mb \theta)}. 
\end{align*}
Assume the following holds:
\begin{equation}
\begin{split}
& \left[\oint_{\partial V} q(\mb{x})\kxxp{x} (\bm{K}')^{-1} \mb t \hat{\mathrm{u}}_l(\mb{x}) ds \right]_i\\
=& \left[\oint_{\partial V} q(\mb{x}) \mb t(\mb x) \hat{\mathrm{u}}_l(\mb{x}) ds \right]_i + O_P(\hat\varepsilon_{m}), \\
\forall i& \in \{1, \dots, \mathrm{dim}(\mb \theta)\}, l \in \{1, \dots, d\}
\end{split}
\label{eq:approx1}
\end{equation}
\end{assumption}
where $\hat\varepsilon_{m}$ is a positive decaying sequence with respect to $m$ and $\lim_{m\to \infty} \hat\varepsilon_{m} = 0$. 

One can see that $\kxxp{x} (\bm{K}')^{-1} \mb t$ is the kernel least-square regression prediction of $\mb t(\mb x')$, for a $\mb x' \in \partial V$.  \cref{ass.least.square} essentially states that the least squares `trained' on our boundary samples, $\{\mb x'_{i'}\}_{i'=1}^m$, should be asymptotically accurate in terms of a testing error computed over a surface integral as $m$ increases. 

\begin{assumption}
\label{ass.eigenvalue}
    The smallest eigenvalue of the Hessian of $\hat{L} (\mb \theta)$  is lower bounded, i.e., 
    $\lambda_\mathrm{min}\left[ \nabla^2_{\mb \theta} \hat{L}(\mb \theta) \right] \ge \Lambda_\mathrm{{min}} > 0$ with high probability. 
\end{assumption}
In fact, we could make the same assumption on the population version of the objective function, and the convergence of the sample hessian matrix would guarantee Assumption \ref{ass.eigenvalue} holds with high probability. To simplify our theoretical statement we stick to the simpler version. 

\begin{theorem}\label{thm:theta_star}
Suppose there exists a unique $\mb\theta^\star$ such that $q = p_{\mb\theta^\star}$, \cref{ass.least.square}  and \cref{ass.eigenvalue} hold, 
Then
\[
\|\hat{\mb\theta}_{n} - \mb\theta^\star\| = O_P\br{\frac{1}{\sqrt{n}}}. 
\]
\end{theorem}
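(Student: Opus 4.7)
The plan is to follow a standard M-estimator consistency argument: a Taylor expansion around $\mb\theta^\star$, inversion of the Hessian via \cref{ass.eigenvalue}, and a rate bound on the gradient $\nabla_{\mb\theta}\hat L(\mb\theta^\star)$.

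First I would record that, since $q = p_{\mb\theta^\star}$, the point $\mb\theta^\star$ is a population minimiser of the TKSD criterion, so the ``noise'' in $\hat L$ at $\mb\theta^\star$ comes entirely from (i) the finite sample $\{\mb{x}_i\}_{i=1}^n$ and (ii) the finite set $\approxdV$. By the first-order optimality condition $\nabla_{\mb\theta}\hat L(\hat{\mb\theta}_n) = 0$ and a first-order Taylor expansion,
\[
0 = \nabla_{\mb\theta}\hat L(\mb\theta^\star) + \nabla^2_{\mb\theta}\hat L(\tilde{\mb\theta})(\hat{\mb\theta}_n - \mb\theta^\star),
\]
for some $\tilde{\mb\theta}$ on the segment between $\hat{\mb\theta}_n$ and $\mb\theta^\star$. \Cref{ass.eigenvalue} gives $\|[\nabla^2_{\mb\theta}\hat L(\tilde{\mb\theta})]^{-1}\| \le \Lambda_{\min}^{-1}$ with high probability, hence
\[
\|\hat{\mb\theta}_n - \mb\theta^\star\| \le \Lambda_{\min}^{-1}\,\|\nabla_{\mb\theta}\hat L(\mb\theta^\star)\|.
\]
So the theorem reduces to showing $\|\nabla_{\mb\theta}\hat L(\mb\theta^\star)\| = O_P(1/\sqrt n)$.

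Next I would decompose
\[
\nabla_{\mb\theta}\hat L(\mb\theta^\star) = \big(\nabla_{\mb\theta}\hat L(\mb\theta^\star) - \nabla_{\mb\theta} L(\mb\theta^\star)\big) + \nabla_{\mb\theta} L(\mb\theta^\star),
\]
where $L(\mb\theta) := \lim_{m\to\infty}\mathcal{D}_{\text{TKSD}}(\p|q)^2$ is the population criterion. The first term is the empirical–population fluctuation of a $V$-statistic with kernel $h$ from \cref{eq:h-ustatistic}, so under the moment condition used in the $U$-statistic construction, differentiating under the expectation and applying the standard CLT for $V$-statistics gives $O_P(1/\sqrt n)$. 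For the second term, differentiating the TKSD formula from \cref{thm:ksd} under the expectation yields terms of the form $\Eq{x}{q}\Eq{y}{q}[\nabla_{\mb\theta}u_l(\mb x,\mb y) - 2\,\nabla_{\mb\theta}\mb v_l(\mb x)^\top(\bm K')^{-1}\mb v_l(\mb y)]$, which after using $q = p_{\mb\theta^\star}$ reduce (via exact integration by parts, as in \cref{eq:tksd:approx_stein_truncated}) to a pair of surface integrals on $\partial V$. These are precisely the two sides of \cref{eq:approx1}; \cref{ass.least.square} forces their difference to be $O_P(\hat\varepsilon_m) \to 0$, so $\nabla_{\mb\theta}L(\mb\theta^\star) = 0$.

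The main obstacle is this second decomposition step: translating the $\mb\theta$-gradient of the closed-form TKSD into surface integrals and then matching the resulting expressions term-by-term with the quantities $\mb t$ and $\mb t(\mb x)$ in \cref{ass.least.square}, so that the assumption can be applied cleanly. The rest is routine differentiation under the integral sign, the CLT for $V$-statistics, and combining the two rate bounds to obtain $\|\hat{\mb\theta}_n - \mb\theta^\star\| = O_P(1/\sqrt n)$.
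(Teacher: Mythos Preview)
Your proposal is correct and follows essentially the same route as the paper: a mean-value/Taylor argument combined with \cref{ass.eigenvalue} reduces the problem to bounding $\|\nabla_{\mb\theta}\hat L(\mb\theta^\star)\|$, which is then split into an empirical--population fluctuation handled by $U$/$V$-statistic convergence and a population gradient shown to be $O_P(\hat\varepsilon_m)\to 0$ via integration by parts and \cref{ass.least.square}. The only cosmetic difference is that the paper carries out the surface-integral computation at finite $m$ (obtaining $\nabla_{\mb\theta}\mathcal{D}_{\text{TKSD}}(p_{\mb\theta^\star}|q)^2 = O_P(\hat\varepsilon_m)$) and then passes to the limit inside the final inequality chain, whereas you take the limit first and work directly with $L$; the substance is identical.
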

For proof, see \cref{app:theta_star}.
This result shows, although our TKSD is an approximate version of KSD, the density estimator derived from TKSD is still a consistent estimator. {\color{blue} We empirically verify this consistency in \cref{app:consistency}.}

\section{Experimental Results}
To show the validity of our proposed method, we experiment on benchmark settings against \textit{TruncSM} and an adaptation of bd-KSD for truncated density estimation. We also provide additional empirical experiments in the appendices; empirical consistency (\cref{app:consistency}), a demonstration on the Gaussian mixture distribution (\cref{app:mixture}), an implementation for truncated regression (\cref{app:regression}) and a investigation into the effect of the distribution of the boundary points (\cref{app:boundary_dist}).

\subsection{Computational Considerations}


TKSD requires the selection of hyperparameters: the number of boundary points, $m$, the choice of kernel function, $k$, and the corresponding kernel hyperparameters. We focus on the Gaussian kernel, $k(\mb{x}, \mb{y}) = \exp\{-(2\sigma^2)^{-1}\|\mb{x} - \mb{y}\|^2\}$, and the bandwidth parameter $\sigma$ is chosen heuristically as the median of pairwise distances on the data matrix. 

In choosing $m$, there is a trade-off between accuracy and computational expense, since $\bm{K}'$ in \cref{eq:h-ustatistic} is an $m\times m$ matrix which requires inversion. In experiments, we let $m$ scale with $d^2$. We provide more computational details of the method in \cref{app:extracomp}.

When the boundary's functional form is unknown, the recommended distance functions by \citet{bdksd} and \citet{song} cannot be used, and instead \textit{TruncSM} and bd-KSD must use approximate boundary points. This approximation to the distance function is given by
\begin{equation}
\min_{\mb{x}' \in \approxdV} \|\mb{x} - \mb{x}'\|_\alpha^\gamma,
\label{eq:projections}
\end{equation}
for each dataset point $\mb{x}$, where $\gamma$ and $\alpha$ are chosen based on the application. This may not provide an accurate approximation to the true distance function when $m$ is small.

\subsection{Density Truncated by the Boundary of the United States} \label{sec:usa}

\begin{figure}[t!]
    \centering
    \includegraphics[width=\linewidth]{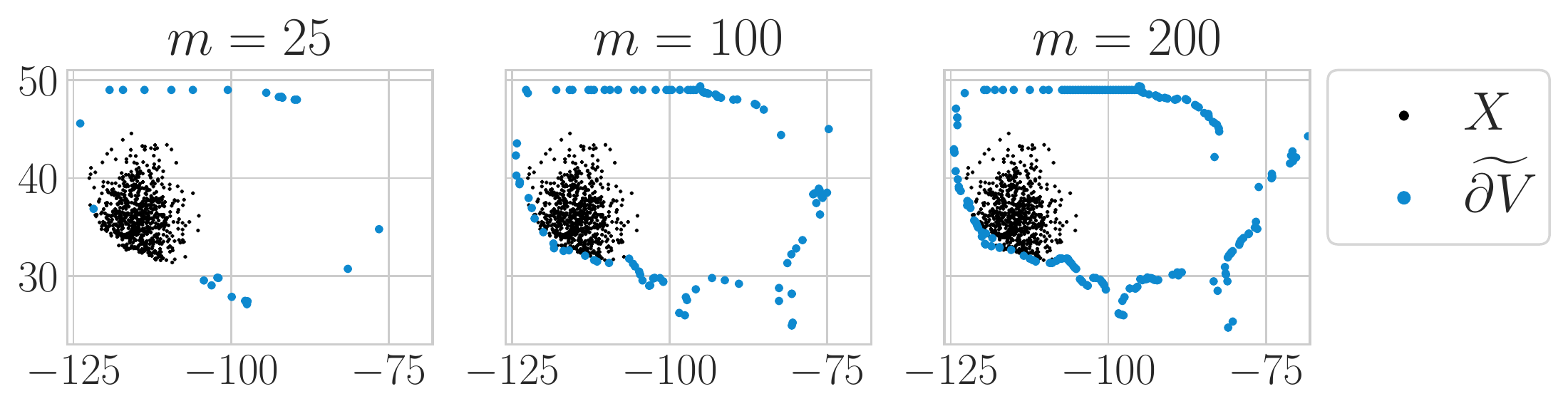}
    \includegraphics[width=\linewidth]{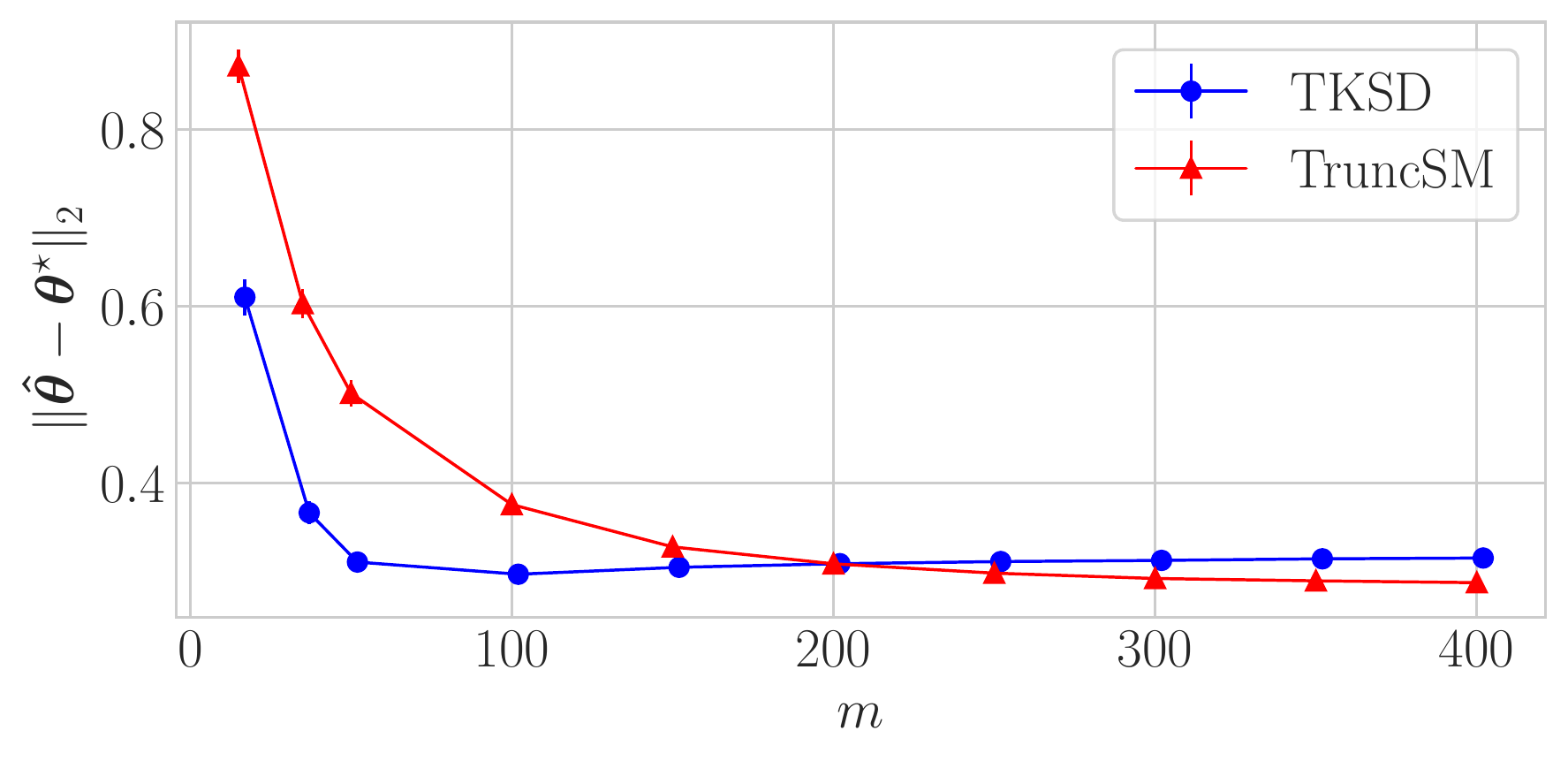}
    \caption{Density estimation when the truncation boundary is the border of the U.S., as described in \cref{sec:usa}. Top: example of increasing the number of boundary points $m$. Bottom: across 256 seeds for each value of $m$, mean estimation error with standard error bars for the mean of a 2D Gaussian, for TKSD and \textit{TruncSM} as $m$ increases.}
    \label{fig:usaexample}
\end{figure}

Let us consider the complicated boundary of the United States (U.S.). Let $V$ be the interior of the U.S. and let $\approxdV$ be a set of coordinates (longitude and latitude) which define the country's borders. The boundary of the U.S. is a highly irregular shape, and as such, there is no explicit expression of the boundary in this case. \textit{TruncSM} and bd-KSD must use an approximate distance function (given by \cref{eq:projections}), whereas TKSD can readily use the set of coordinates that give rise to the boundary. 

The experiment is set up as follows. Let $\mb\mu^{\star} = [-115, 35]$ and $\mb\Sigma = 10\cdot I_d$. Samples are simulated from $\mathcal{N}(\mb\mu^{\star}, \mb\Sigma)$ and we select only those which are in the interior of the U.S. until we reach $n=400$ points. \st{We estimate the mean $\mb{\hat{\mu}}$ of the truncated dataset, assuming $\mb\Sigma$ is known, using the TKSD objective function, and also compare it to a fit using the \textit{TruncSM} objective only.} {\color{blue} Assuming $\mb\Sigma$ is known, we estimate $\mb\mu^{\star}$ with $\mb{\hat{\mu}}$ using TKSD and compare it to the estimation using \textit{TruncSM}.} We also vary \st{the number of boundary points ($m$)}{\color{blue} $m$} by uniformly sampling from the perimeter of the U.S., demonstrated in \cref{fig:usaexample} (top).

Figure \ref{fig:usaexample} (bottom) shows the mean and standard error of the $\ell_2$ estimation error between $\mb\mu^{\star}$ and $\mb{\hat{\mu}}$, measured over 256 trials for each value of $m$. \textit{TruncSM} improves with higher values of $m$ as the approximate distance function increases in accuracy, whilst TKSD performs significantly better with fewer boundary points.

\subsection{Estimation Error and Dimensionality}\label{sec:dimensionbench}

\begin{figure}[t!]
    \centering
    \includegraphics[width=\linewidth]{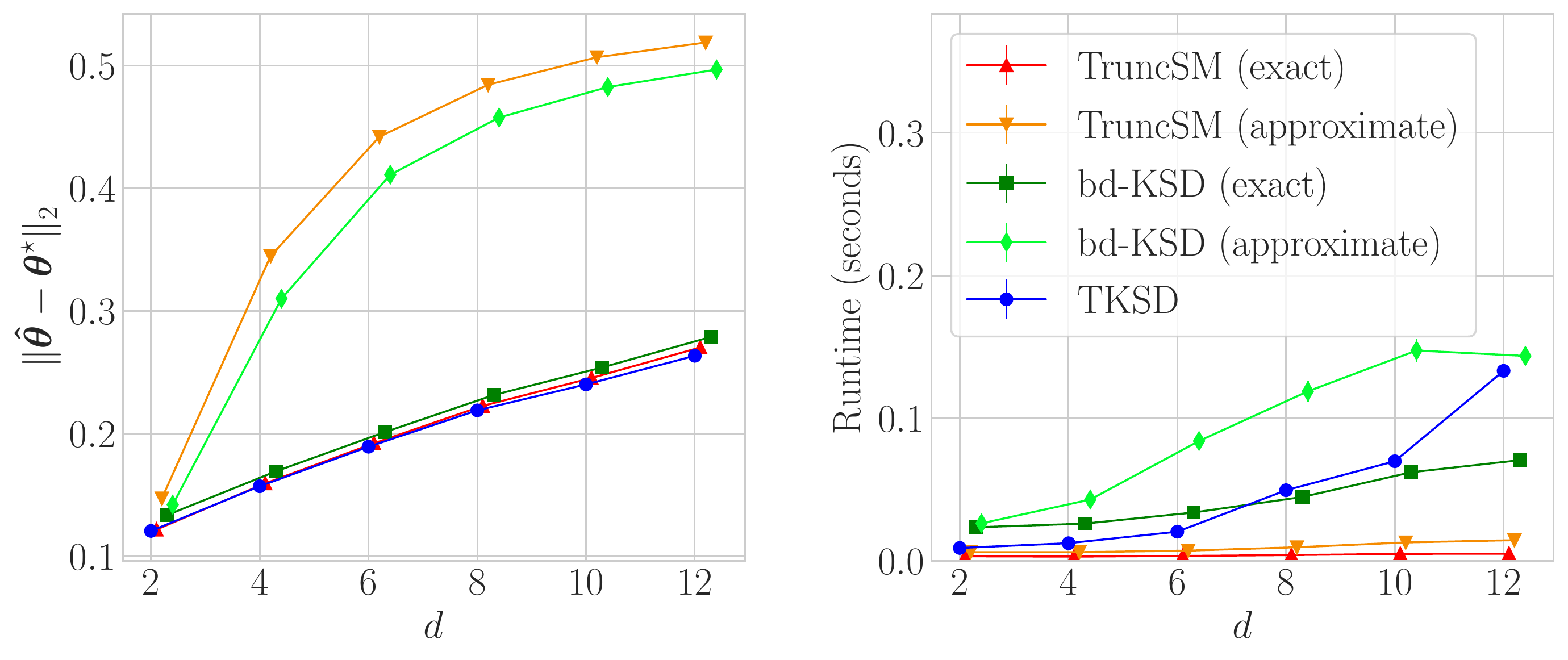}
    \includegraphics[width=\linewidth]{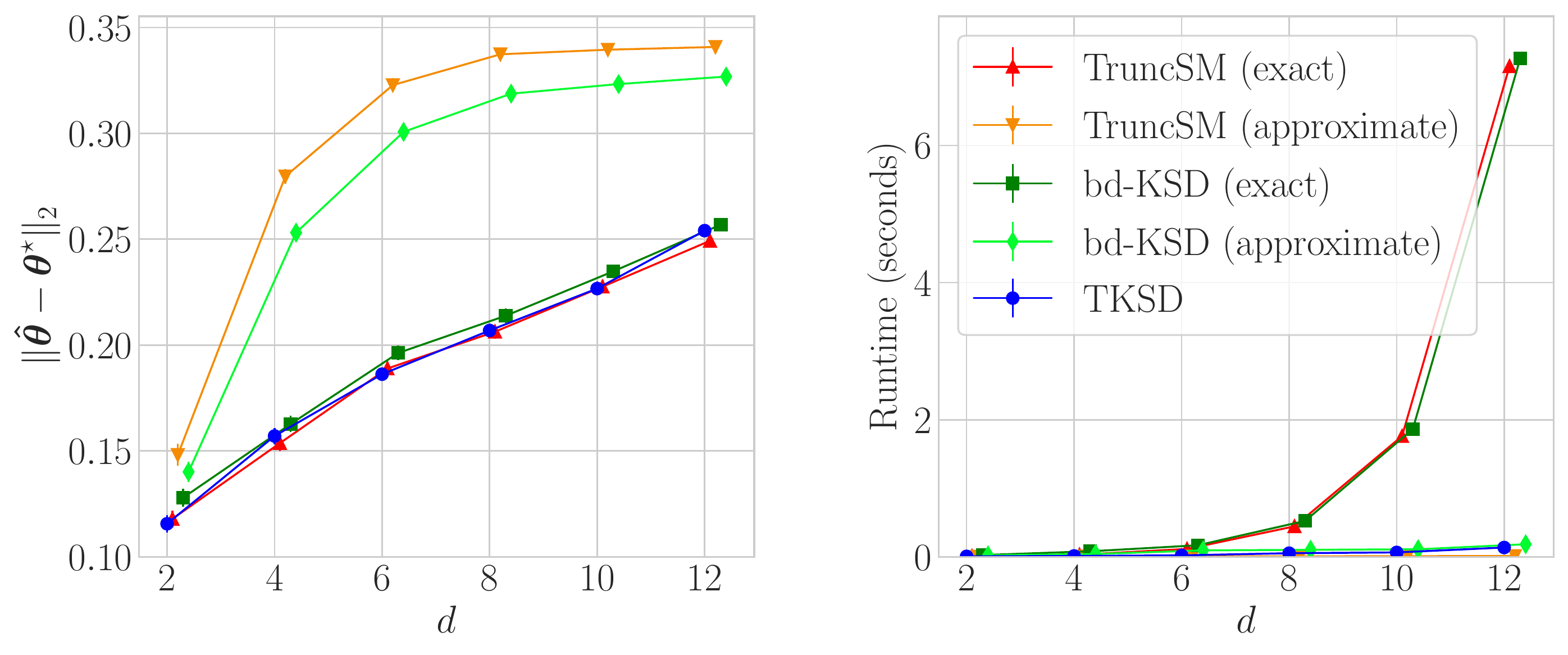}
    \caption{Mean estimation error across 256 seeds, with standard error bars, as dimension $d$ increases (left) and runtime for each method (right). The truncation domain is the $\ell_2$ ball of radius $d^{0.53}$ (top) and $\ell_1$ ball of radius $d$ (bottom).}
    \label{fig:dimension_bench}
\end{figure}

Consider a simple experiment setup where $\mb\mu_d^{\star}= \mb{1}_d \cdot 0.5$, samples are simulated from $\mathcal{N}(\mb\mu_d^{\star}, \mb{I}_d)$ and are truncated be within the $\ell_c$ ball for $c=1$ and $c=2$, each of radius $r$, until we reach $n=300$ data points. We choose radii $r=d^{0.53}$ for $c=1$ and $r=d^{0.98}$ for $c=2$, chosen so that the amount of points truncated across dimension is roughly 50\% (see \cref{app:bsize} for details). \st{In this experiment, we are interested in estimating $\mb\mu$, and we measure the estimation error of $\mb{\hat{\mu}}$ against the true $\mb\mu^{\star}$ as dimension $d$ increases, measured by the $\ell_2$ norm, $\|\mb{\hat{\mu}} - \mb\mu^{\star}\|_2$. Each boundary point $\mb{x}' \in \approxdV$, is simulated from $\mb{x}' \sim \mathcal{N}(\mb{0}_d, \mb{I}_d)$, then normalised by dividing by $\|\mb{x}'\|_c$ and multiplying by $r$, giving a random set of points on the boundary.}
{\color{blue}
We estimate $\mb\mu_d^\star$ with $\mb{\hat{\mu}}$, and measure the $\ell_2$ estimation error against $\mb\mu_d^\star$ as $d$ increases. Each boundary point $\mb{x}' \in \approxdV$ is simulated from $\mathcal{N}(\mb{0}_d, \mb{I}_d)$, normalized by $\|\mb{x}'\|_c$, and multiplied by $r$, resulting in a set of random points on the boundary.
}

\Cref{fig:dimension_bench} shows the error and computation time for the following estimators: 
\begin{itemize}
    \itemsep0em
    \item TKSD: our method as described in \cref{sec:tksd}, using a randomly sampled $\approxdV$.
    \item \textit{TruncSM}/bd-KSD (exact): the implementation by \citet{song}/\citet{bdksd} respectively, where the distance function is computed exactly using the known boundaries.
    \item \textit{TruncSM}/bd-KSD (approximate): the implementation by \citet{song}/\citet{bdksd} respectively\st{, where the distance function is approximated by solving \cref{eq:projections}} {\color{blue} with distance function given by \cref{eq:projections}}, using the same $\approxdV$ as given to TKSD.
\end{itemize}

For the $\ell_2$ case, TKSD marginally outperforms all competitors across all dimensions, at only a slight computational expense. 
For the $\ell_1$ case, TKSD, bd-KSD and \textit{TruncSM} have similar estimation errors across all dimensions. However, \textit{TruncSM} (exact) and bd-KSD (exact) have an increasing computation time due to the costly evaluation of the distance function.
In this implementation, we follow the advice of \citet{song}, Section 7, where the distance function to the $\ell_1$ ball is calculated via 
a closed-form expression, for which the computational complexity increases combinatorically with dimension.

\textit{TruncSM} (approximate) and bd-KSD (approximate) have significantly higher estimation error than other methods across all benchmarks. TKSD is able to achieve the same level of accuracy as the exact methods, using the same finite set of boundary points, $\approxdV$.

\section{Discussion}

We have proposed an alternative to the classical Stein class, called an \textit{approximate Stein class}, bounded by a decreasing sequence instead of being strictly equal to zero. By maximising the KSD objective over this approximate Stein class, we have constructed a truncated density estimator based on KSD, called truncated KSD (TKSD), and shown that it is consistent. TKSD has advantages over the prior works by \citet{bdksd} and \citet{song}, as it does not require a functional form of a boundary.

Some limitations of this method include the requirement of selecting hyperparameters, such as the kernel function $k$ and its associated hyperparameters, and the number of samples from the boundary, $m$. {\color{blue} Choice of $m$ may depend on applications. Still, a larger value is preferred when the complexity of the truncation boundary is higher, or the dimension increases.} However, even with the heuristic approaches presented in this paper, TKSD provides competitive results. 

In experimental results, we have shown that even though we assume no access to a functional form of the boundary, TKSD performs similarly to previous methods which require the boundary to be computed. In some scenarios, TKSD performs better than these methods, or takes less time to achieve the same result.

\section*{Reproducibility}
All results in this paper can be reproduced using the GitHub repository located at \url{https://github.com/dannyjameswilliams/tksd}.

\section*{Acknowledgements}
We thank all four reviewers for their insightful feedback and suggestions to improve the paper. We are particularly grateful for their recommendations of additional experiments. We would also like to thank Jake Spiteri, Jack Simons, Michael Whitehouse, Mingxuan Yi and Dom Owens, for their helpful input throughout the development of this work.

Daniel J. Williams was supported by a PhD studentship from the EPSRC Centre for Doctoral Training in Computational Statistics and Data Science.

\bibliography{main}

\begin{thebibliography}{35}
\providecommand{\natexlab}[1]{#1}
\providecommand{\url}[1]{\texttt{#1}}
\expandafter\ifx\csname urlstyle\endcsname\relax
  \providecommand{\doi}[1]{doi: #1}\else
  \providecommand{\doi}{doi: \begingroup \urlstyle{rm}\Url}\fi

\bibitem[Barp et~al.(2019)Barp, Briol, Duncan, Girolami, and Mackey]{barp2019}
Barp, A., Briol, F.-X., Duncan, A., Girolami, M., and Mackey, L.
\newblock Minimum stein discrepancy estimators.
\newblock In \emph{Advances in Neural Information Processing Systems}, volume~32, 2019.

\bibitem[Boyd \& Vandenberghe(2004)Boyd and Vandenberghe]{convex}
Boyd, S. and Vandenberghe, L.
\newblock \emph{Convex optimization}.
\newblock Cambridge university press, 2004.

\bibitem[Chen et~al.(2011)Chen, Goldstein, and Shao]{chen2011}
Chen, L.~H., Goldstein, L., and Shao, Q.-M.
\newblock \emph{Normal approximation by Stein's method}, volume~2.
\newblock Springer, 2011.

\bibitem[Chwialkowski et~al.(2016)Chwialkowski, Strathmann, and Gretton]{kernelgoodness}
Chwialkowski, K., Strathmann, H., and Gretton, A.
\newblock A kernel test of goodness of fit.
\newblock In \emph{Proceedings of The 33rd International Conference on Machine Learning}, volume~48 of \emph{Proceedings of Machine Learning Research}, pp.\  2606--2615. PMLR, 2016.

\bibitem[Gorham \& Mackey(2015)Gorham and Mackey]{gorham2015}
Gorham, J. and Mackey, L.
\newblock Measuring sample quality with stein\textquotesingle s method.
\newblock In \emph{Advances in Neural Information Processing Systems}, volume~28. Curran Associates, Inc., 2015.

\bibitem[Gorham et~al.(2020)Gorham, Raj, and Mackey]{Gorham2020}
Gorham, J., Raj, A., and Mackey, L.
\newblock Stochastic stein discrepancies.
\newblock In \emph{Advances in Neural Information Processing Systems}, volume~33, pp.\  17931--17942. Curran Associates, Inc., 2020.

\bibitem[Gutmann \& Hyvärinen(2010)Gutmann and Hyvärinen]{gutmann2010}
Gutmann, M. and Hyvärinen, A.
\newblock Noise-contrastive estimation: A new estimation principle for unnormalized statistical models.
\newblock In \emph{Proceedings of the 13th International Conference on Artificial Intelligence and Statistics}, volume~9 of \emph{Proceedings of Machine Learning Research}, pp.\  297--304. PMLR, 2010.

\bibitem[Gutmann \& Hyv{{\"a}}rinen(2012)Gutmann and Hyv{{\"a}}rinen]{gutmann2012}
Gutmann, M.~U. and Hyv{{\"a}}rinen, A.
\newblock Noise-contrastive estimation of unnormalized statistical models, with applications to natural image statistics.
\newblock \emph{Journal of Machine Learning Research}, 13\penalty0 (11):\penalty0 307--361, 2012.

\bibitem[Hyv{{\"a}}rinen(2005)]{scorematching1}
Hyv{{\"a}}rinen, A.
\newblock Estimation of non-normalized statistical models by score matching.
\newblock \emph{Journal of Machine Learning Research}, 6\penalty0 (24):\penalty0 695--709, 2005.

\bibitem[Hyv{\"a}rinen(2007)]{scorematching2}
Hyv{\"a}rinen, A.
\newblock Some extensions of score matching.
\newblock \emph{Computational statistics \& data analysis}, 51\penalty0 (5):\penalty0 2499--2512, 2007.

\bibitem[Jitkrittum et~al.(2017)Jitkrittum, Xu, Szabo, Fukumizu, and Gretton]{jitkrittum2017}
Jitkrittum, W., Xu, W., Szabo, Z., Fukumizu, K., and Gretton, A.
\newblock A linear-time kernel goodness-of-fit test.
\newblock In \emph{Advances in Neural Information Processing Systems}, volume~30. Curran Associates, Inc., 2017.

\bibitem[Kalos \& Whitlock(2009)Kalos and Whitlock]{kalos2009monte}
Kalos, M.~H. and Whitlock, P.~A.
\newblock \emph{Monte carlo methods}.
\newblock John Wiley \& Sons, 2009.

\bibitem[Kanagawa et~al.(2019)Kanagawa, Jitkrittum, Mackey, Fukumizu, and Gretton]{kanagawa2019}
Kanagawa, H., Jitkrittum, W., Mackey, L., Fukumizu, K., and Gretton, A.
\newblock A kernel stein test for comparing latent variable models.
\newblock \emph{arXiv preprint arXiv:1907.00586}, 2019.

\bibitem[Krishnamoorthy \& Menon(2013)Krishnamoorthy and Menon]{cholesky}
Krishnamoorthy, A. and Menon, D.
\newblock Matrix inversion using cholesky decomposition.
\newblock In \emph{2013 signal processing: Algorithms, architectures, arrangements, and applications (SPA)}, pp.\  70--72. IEEE, 2013.

\bibitem[Lin et~al.(2016)Lin, Drton, and Shojaie]{highdgraphical}
Lin, L., Drton, M., and Shojaie, A.
\newblock Estimation of high-dimensional graphical models using regularized score matching.
\newblock \emph{Electronic journal of statistics}, 10\penalty0 (1):\penalty0 806, 2016.

\bibitem[Liu et~al.(2016)Liu, Lee, and Jordan]{ksd}
Liu, Q., Lee, J., and Jordan, M.
\newblock A kernelized stein discrepancy for goodness-of-fit tests.
\newblock In \emph{Proceedings of The 33rd International Conference on Machine Learning}, volume~48 of \emph{Proceedings of Machine Learning Research}, pp.\  276--284. PMLR, 2016.

\bibitem[Liu et~al.(2022)Liu, Kanamori, and Williams]{song}
Liu, S., Kanamori, T., and Williams, D.~J.
\newblock Estimating density models with truncation boundaries using score matching.
\newblock \emph{Journal of Machine Learning Research}, 23\penalty0 (186):\penalty0 1--38, 2022.

\bibitem[Pang et~al.(2020)Pang, Xu, Li, Song, Ermon, and Zhu]{pang2020}
Pang, T., Xu, K., Li, C., Song, Y., Ermon, S., and Zhu, J.
\newblock Efficient learning of generative models via finite-difference score matching.
\newblock In \emph{Advances in Neural Information Processing Systems}, volume~33, pp.\  19175--19188, 2020.

\bibitem[Serfling(2009)]{serfling2009}
Serfling, R.~J.
\newblock \emph{Approximation theorems of mathematical statistics}.
\newblock John Wiley \& Sons, 2009.

\bibitem[Sharrock et~al.(2022)Sharrock, Simons, Liu, and Beaumont]{jack}
Sharrock, L., Simons, J., Liu, S., and Beaumont, M.
\newblock Sequential neural score estimation: Likelihood-free inference with conditional score based diffusion models.
\newblock \emph{arXiv preprint arXiv:2210.04872}, 2022.

\bibitem[Shi et~al.(2021)Shi, Liu, and Mackey]{shi2021}
Shi, J., Liu, C., and Mackey, L.
\newblock Sampling with mirrored stein operators.
\newblock \emph{arXiv preprint arXiv:2106.12506}, 2021.

\bibitem[Song \& Ermon(2019)Song and Ermon]{song2019generative}
Song, Y. and Ermon, S.
\newblock Generative modeling by estimating gradients of the data distribution.
\newblock In \emph{Advances in Neural Information Processing Systems}, volume~32, 2019.

\bibitem[Song \& Kingma(2021)Song and Kingma]{trainebm}
Song, Y. and Kingma, D.~P.
\newblock How to train your energy-based models.
\newblock \emph{arXiv preprint arXiv:2101.03288}, 2021.

\bibitem[Song et~al.(2021)Song, Sohl-Dickstein, Kingma, Kumar, Ermon, and Poole]{song2020score}
Song, Y., Sohl-Dickstein, J., Kingma, D.~P., Kumar, A., Ermon, S., and Poole, B.
\newblock Score-based generative modeling through stochastic differential equations.
\newblock In \emph{International Conference on Learning Representations}, 2021.

\bibitem[Stein(1972)]{stein1972}
Stein, C.
\newblock A bound for the error in the normal approximation to the distribution of a sum of dependent random variables.
\newblock In \emph{Proceedings of the sixth Berkeley symposium on mathematical statistics and probability, volume 2: Probability theory}, pp.\  583--602. University of California Press, 1972.

\bibitem[Steinwart \& Christmann(2008)Steinwart and Christmann]{steinwart2008support}
Steinwart, I. and Christmann, A.
\newblock \emph{Support vector machines}.
\newblock Springer Science \& Business Media, 2008.

\bibitem[{UCLA: Statistical Consulting Group}(2022)]{statatruncated}
{UCLA: Statistical Consulting Group}.
\newblock Truncated regression | stata data analysis examples, 2022.
\newblock URL \url{https://stats.oarc.ucla.edu/stata/dae/truncated-regression/}.
\newblock Accessed March 17, 2023.

\bibitem[Williams \& Liu(2022)Williams and Liu]{williams2022}
Williams, D.~J. and Liu, S.
\newblock Score matching for truncated density estimation on a manifold.
\newblock In \emph{Topological, Algebraic and Geometric Learning Workshops 2022}, pp.\  312--321. PMLR, 2022.

\bibitem[Wu et~al.(2022)Wu, Diao, Elkhalil, Ding, and Tarokh]{Wu2022}
Wu, S., Diao, E., Elkhalil, K., Ding, J., and Tarokh, V.
\newblock Score-based hypothesis testing for unnormalized models.
\newblock \emph{IEEE Access}, 10:\penalty0 71936--71950, 2022.

\bibitem[Xu(2022)]{bdksd}
Xu, W.
\newblock Standardisation-function kernel stein discrepancy: A unifying view on kernel stein discrepancy tests for goodness-of-fit.
\newblock In \emph{International Conference on Artificial Intelligence and Statistics}, pp.\  1575--1597. PMLR, 2022.

\bibitem[Xu \& Matsuda(2021)Xu and Matsuda]{xu2021}
Xu, W. and Matsuda, T.
\newblock Interpretable stein goodness-of-fit tests on riemannian manifold.
\newblock In \emph{Proceedings of the 38th International Conference on Machine Learning}, volume 139 of \emph{Proceedings of Machine Learning Research}, pp.\  11502--11513. PMLR, 2021.

\bibitem[Yang et~al.(2018)Yang, Liu, Rao, and Neville]{yang2018}
Yang, J., Liu, Q., Rao, V., and Neville, J.
\newblock Goodness-of-fit testing for discrete distributions via stein discrepancy.
\newblock In \emph{Proceedings of the 35th International Conference on Machine Learning}, volume~80 of \emph{Proceedings of Machine Learning Research}, pp.\  5561--5570. PMLR, 2018.

\bibitem[Yu et~al.(2016)Yu, Kolar, and Gupta]{pairwisegraphical}
Yu, M., Kolar, M., and Gupta, V.
\newblock Statistical inference for pairwise graphical models using score matching.
\newblock In \emph{Advances in Neural Information Processing Systems}, volume~29, 2016.

\bibitem[Yu et~al.(2019)Yu, Drton, and Shojaie]{yu2019}
Yu, S., Drton, M., and Shojaie, A.
\newblock Generalized score matching for non-negative data.
\newblock \emph{Journal of Machine Learning Research}, 20\penalty0 (76):\penalty0 1--70, 2019.

\bibitem[Yu et~al.(2021)Yu, Drton, and Shojaie]{yu2021}
Yu, S., Drton, M., and Shojaie, A.
\newblock Generalized score matching for general domains.
\newblock \emph{Information and Inference: A Journal of the IMA}, 2021.

\end{thebibliography}
\bibliographystyle{icml2023_cr/icml2023}

\newpage
\appendix
\onecolumn

\section{Proofs and Additional Theoretical Results}

\subsection{Latent Variable Approximate Stein Identity} \label{app:latent}
Firstly, we show that the score function, $\mb\psi_q(\mb{x})$, is given by
\begin{align}
\score{q}(\mb{x}) = \frac{\nabla_{\mb{x}} q(\mb{x})}{q(\mb{x})} &= \int_{\R^d} \frac{\nabla_{\mb{x}} (q(\mb{x} | \mb{z}) q(\mb{z}))}{q(\mb{x})}\frac{q(\mb{x}|\mb{z})}{q(\mb{x}|\mb{z})} d\mb{z} \nonumber \\
&= \int_{\R^d} \frac{q(\mb{x}|\mb{z})q(\mb{z})}{q(\mb{x})} \brs{\frac{\nabla_{\mb{x}} q(\mb{x} | \mb{z})}{q(\mb{x} | \mb{z})}}  d\mb{z} \nonumber \\
&= \int_{\R^d} q(\mb{z}|\mb{x}) \nabla_{\mb{x}} \log q(\mb{x}|\mb{z})  d\mb{z} \nonumber\\
&= \E_{\mb{z} \sim q(\mb{z} | \mb{x})} [\latentscorexz],
\label{eq:app:latent_score}
\end{align}
where $\latentscorexz = \nabla_{\mb{x}} \log q(\mb{x} | \mb{z})$. To evaluate the expectation over $q(\mb{z} | \mb{x})$, \citet{kanagawa2019} recommend approximating each element with a Monte Carlo estimate 
\begin{equation}
\E_{\mb{z} \sim  q(\mb{z} | \mb{x})} [\latentscorexzj]  = \frac{1}{m}\sum^m_{i=1} \latentscorexzji + O_P(\varepsilon_m),
\label{eq:app:latent_monte_carlo}
\end{equation}
where $\latentscorexzj = \partial_{x_j} \log q(\mb{x} | \mb{z})$ and we assume we have access to $m$ samples $\{\mb{z}_{i}\}^m_{i=1} \sim q(\mb{z}|\mb{x})$, and $O_P(\varepsilon_m)$ is the Monte Carlo approximation error which decreases as the number of samples on the boundary, $m$, increases. 

The Stein identity with this score function is written asWe show that this existing modification of Stein discrepancies give rise to an Approximate Stein Class. Suppose $\mb{f}\in \mathcal{F}^d$, where $\mathcal{F}^d$ is a regular Stein class. Stein's identity with the score function from \cref{eq:app:latent_score} can be written as
\begin{align*}
\Eqsmall{x}{q}[\mathcal{T}_q \mb{f}(\mb{x})] &= \Eqsmall{x}{q}\brs{\sum^d_{j=1} \psi_{q,j}(\mb{x}) f_j(\mb{x}) + \partial_{x_j} f_j(\mb{x})} \\
&= \Eqsmall{x}{q}\brs{\E_{\mb{z} \sim q(\mb{z} | \mb{x})} \brs{\latentscorexzj}f_j(\mb{x}) + \partial_{x_j} f_j(\mb{x})}.
\end{align*}
Substituting \eqref{eq:app:latent_monte_carlo} into the above gives
\begin{align*}
    \Eqsmall{x}{q}[\mathcal{T}_q \mb{f}(\mb{x})] &=  \Eqsmall{x}{q}\brs{\sum^d_{j=1} \brc{ \br{\frac{1}{m}\sum^m_{i=1}\latentscorexzji + O_P(\varepsilon_m)}f_j(\mb{x}) + \partial_{x_j} f_j(\mb{x})}} \\ 
    &= \Eqsmall{x}{q}\brs{\sum^d_{j=1} \brc{ \frac{1}{m}\sum^m_{i=1}\latentscorexzji f_j(\mb{x}) + \partial_{x_j} f_j(\mb{x}) + O_P(\varepsilon_m) f_j(\mb{x})}} \\
    &\myeq{(a)} \Eqsmall{x}{q}\brs{\sum^d_{j=1}\brc{ \frac{1}{m}\sum^m_{i=1}\latentscorexzji f_j(\mb{x}) + \partial_{x_j} f_j(\mb{x})}} + O_P(\varepsilon_m) \\
    &\myeq{(b)}  O_P(\varepsilon_m),
\end{align*}
where equality (a) follows from the fact that the error in the Monte Carlo approximation is accounted for by the $O_P(\varepsilon_m)$ term, and equality (b) follows by definition of $\mathcal{F}^d$ being a Stein class. Therefore, $\Eqsmall{x}{q}[\mathcal{T}_q \mb{f}(\mb{x})] = O_P(\varepsilon_m)$, and this latent variable modification uses an Approximate Stein Class.

\subsection{Proof of \Cref{lem:steinidentity}}\label{app:lemSI}
Let $\mb{g} \in \mathcal{G}^d_0$. Begin by writing the expectation as an integral,
\begin{align*}
    \Eqsmall{x}{q} [ \mathcal{T}_q \mb{g}(\mb{x}) ] = \int_V q(\mb{x}) \mathcal{T}_q \mb{g}(\mb{x}) d\mb{x} &= \sum^d_{l=1} \int_V q(\mb{x}) \dx \log q(\mb{x}) g_l(\mb{x})  + \dx g_l(\mb{x}) d\mb{x} \\
 &= \sum^d_{l=1} \int_V \dx q(\mb{x}) g_l(\mb{x})  + \dx g_l(\mb{x}) d\mb{x}.
\end{align*}
This can be expanded by integration by parts,
\begin{align*}
    \sum^d_{l=1} \int_V \dx q(\mb{x}) g_l(\mb{x})  + \dx g_l(\mb{x}) d\mb{x} &= \sum^d_{l=1} \oint_{\partial V} q(\mb{x}) g_l(\mb{x}) \hat{\mathrm{u}}_l(\mb{x}) ds + \sum^d_{l=1} \int_V q(\mb{x}) (\dx g_l(\mb{x})  - \dx g_l(\mb{x}))d\mb{x}  \\
    &= \sum^d_{l=1} \oint_{\partial V} q(\mb{x}) g_l(\mb{x}) \hat{\mathrm{u}}_l(\mb{x}) ds = 0,
\end{align*}
where the final equality comes from all evaluations $g_l(\mb{x}') = 0\; \forall \mb{x}' \in \partial V$.

\subsection{Proof of \Cref{lem:gtilde_to_g}.}\label{app:gtilde_to_g}

Let $\mb{g} \in \mathcal{G}^d_0$ and $\tilde{\mb{g}} \in \approxG$. First note that $\mb{g}$ and $\tilde{\mb{g}}$ agree on $\approxdV$, i.e.
\begin{equation}
g_l(\mb{\tilde{x}}') = \tilde{g}_l(\mb{\tilde{x}}'), \;\;\forall l=1,\dots, d
\label{eq:gagree}
\end{equation}
for all $\mb{\tilde{x}}' \in \approxdV$, since all $\mb{\tilde{x}}'$ are elements of $\partial V$ also, as $\approxdV\subset \partial V$.
First, we note that since $g_l$ and $\tilde{g}_l$ are both Lipschitz continuous, then
\begin{align}
    |g_l(\mb{x}') - g_l(\mb{\tilde{x}}')| &\leq C_1\|\mb{x}' - \mb{\tilde{x}}'\| \leq C_1\varepsilon_m \label{eq:glipdense1}\\
    |\tilde{g}_l(\mb{\tilde{x}}') - \tilde{g}_l(\mb{x}')| &\leq C_2\|\mb{x}' - \mb{\tilde{x}}'\| \leq C_2\varepsilon_m \label{eq:glipdense2},
\end{align}
where $\|\mb{x}' - \mb{\tilde{x}}'\| \leq \varepsilon_m$. This follows under the assumption that $\approxdV$ is $\varepsilon_m$-dense in $\partial V$, therefore the distance $\|\mb{x}' - \mb{\tilde{x}}'\|$ is at most $\varepsilon_m$. 

We seek to quantify
\[
| g_l(\mb{x}') - \tilde{g}_l(\mb{x}') |,
\]
which is how far apart the `approximate' $\tilde{g}_l$ is from the `true' $g_l$ for any point $\mb{x}' \in \partial V$. Note that this includes points that are not in $\approxdV$. We let
\[
g_l(\mb{x}') - \tilde{g}_l(\mb{x}') = (g_l(\mb{x}') - \tilde{g}_l(\mb{x}')) + (g_l(\mb{\tilde{x}}') - g_l(\mb{\tilde{x}}')) + (\tilde{g}_l(\mb{\tilde{x}}') - \tilde{g}_l(\mb{\tilde{x}}')),
\]
and by \eqref{eq:gagree}, 
\begin{align*}
g_l(\mb{x}') - \tilde{g}_l(\mb{x}') &= (g_l(\mb{x}') - \tilde{g}_l(\mb{x}')) - g_l(\mb{\tilde{x}}') + \tilde{g}_l(\mb{\tilde{x}}') \\
&= (g_l(\mb{x}') - g_l(\mb{\tilde{x}}')) + (\tilde{g}_l(\mb{\tilde{x}}') - \tilde{g}_l(\mb{x}')).
\end{align*}
Using \eqref{eq:glipdense1} and \eqref{eq:glipdense2}, we have the following inequality, 
\[
| g_l(\mb{x}') - \tilde{g}_l(\mb{x}') | \leq |g_l(\mb{x}') - g_l(\mb{\tilde{x}}')| + |\tilde{g}_l(\mb{\tilde{x}}') - \tilde{g}_l(\mb{x}')| \leq (C_1 + C_2)\varepsilon_m,
\]
where the last step follows by the triangle inequality. Therefore, we have
\[
| g_l(\mb{x}') - \tilde{g}_l(\mb{x}') | = O_P(\varepsilon_m)
\]
as desired.

\subsection{Proof of \Cref{lem:dense}}\label{app:dense}

First, let $\mathrm{L}(V)$ denote the $(d-1)$-surface area of a bounded domain $V \subset \mathbb{R}^d$ and $\mathrm{L}(V) < \infty$. For example, in 2D, $\mathrm{L}(V)$ corresponds to the line length of $\partial V$. We also define $B_{\varepsilon_m}(\mb{x}')$ as the ball of radius $\varepsilon_m$ centred on $\mb{x}'$.

Before continuing to the proof, recall the definition of an $\varepsilon$-dense set: $\forall \mb{x}' \in \partial V, \exists\hspace{0.5mm} \widetilde{\mb{x}}' \in \approxdV$ such that $d(\mb{x}', \widetilde{\mb{x}}') \leq \varepsilon$, where $d$ is some measure of distance. The statement $d(\mb{x}', \widetilde{\mb{x}}') \leq \varepsilon_m$ is equivalent to $\widetilde{\mb{x}}' \in B_{\varepsilon_m}(\mb{x}')$. Now write that the probability that the definition of a $\varepsilon_m$-dense set holds for $\approxdV$ being dense in $\partial V$ is equal to 0.95:
\begin{align}
0.95 = \mathbb{P}(\forall \mb{x}' \in \partial V, \exists\hspace{0.5mm} \widetilde{\mb{x}}' \in \approxdV,  \widetilde{\mb{x}}' \in B_{\varepsilon_m}(\mb{x}')) &= 1 - \mathbb{P}(\exists\hspace{0.5mm} \mb{x}' \in \partial V, \forall \widetilde{\mb{x}}' \in \approxdV, \widetilde{\mb{x}}' \notin B_{\varepsilon_m}(\mb{x}')) \nonumber \\
&= 1 - \mathbb{P}\Big(\bigcup_{\mb{x}' \in \partial V} \forall \widetilde{\mb{x}}' \in \approxdV, \widetilde{\mb{x}}' \notin B_{\varepsilon_m}(\mb{x}')\Big). \nonumber 
\end{align}
Equivalently, by rearranging the above, we have
\[
\mathbb{P}\Big(\bigcup_{\mb{x}' \in \partial V} \forall \widetilde{\mb{x}}' \in \approxdV, \widetilde{\mb{x}}' \notin B_{\varepsilon_m}(\mb{x}')\Big) = 0.05.
\]
This probability can be bounded as follows
\[
0.05 = \mathbb{P}\Big(\bigcup_{\mb{x}' \in \partial V} \forall \widetilde{\mb{x}}' \in \approxdV, \widetilde{\mb{x}}' \notin B_{\varepsilon_m}(\mb{x}')\Big) \geq \mathbb{P}\Big(\forall \widetilde{\mb{x}}' \in \approxdV, \widetilde{\mb{x}}' \notin B_{\varepsilon_m}(\mb{x}'_0)\Big)
\]
where ${\mb{x}}_0'$ is a single point in $\partial V$. This follows by basic laws of probability.
Note that the probability on the right hand side can be written as 
\begin{align}
    \mathbb{P}\br{\forall \widetilde{\mb{x}}' \in \approxdV, \widetilde{\mb{x}}' \notin B_{\varepsilon_m}(\mb{x}'_0)} &= \mathbb{P}\br{\widetilde{\mb{x}}'_0 \notin B_{\varepsilon_m}(\mb{x}'_0)}^m \nonumber\\
    &= \brs{1 - \mathbb{P}\br{\widetilde{\mb{x}}'_0 \in B_{\varepsilon_m}(\mb{x}'_0)}}^m \nonumber\\
    &= \brs{1 - \frac{\mathrm{Area}(\partial V \cap B_{\varepsilon_m}(\mb{x}'_0))}{\mathrm{L}(V)}}^m \leq 0.05 \label{eq:tksd:propdense_1}
\end{align}
where the first equality comes from all $\widetilde{\mb{x}}' \in \approxdV$ being independently distributed. $\mathrm{Area}(S)$ denotes the surface area of the boundary set $S$. For example, $\mathrm{Area}(\partial V) = \mathrm{L}(V)$. Therefore $\mathrm{Area}(\partial V \cap B_{\varepsilon_m}(\mb{x}'_0))$ represents the size of the region of $\partial V$ that is inside $B_{\varepsilon_m}(\mb{x}'_0)$, which will be the area of the boundary hyperplane of $\partial V$ which passes through $B_{\varepsilon_m}(\mb{x}'_0)$. We obtain the probability in the final equality by assuming that all $\widetilde{\mb{x}}'$ are uniformly sampled from $\partial V$, so the probability is the proportion of $\partial V$ inside $B_{\varepsilon_m}(\mb{x}'_0)$ as a ratio of the full $\partial V$. This probability exists under the assumption that $\mathrm{L}(V) < \infty$. 

Next, we can rearrange \cref{eq:tksd:propdense_1} to
\[
\mathrm{Area}(\partial V \cap B_{\varepsilon_m}(\mb{x}'_0)) \geq \mathrm{L}(V)\brs{1 - 0.05^{1/m}}. 
\]
Consider an upper bound for $\mathrm{Area}(\partial V \cap B_{\varepsilon_m}(\mb{x}'_0))$ given by the volume of $B_{\varepsilon_m}(\mb{x}'_0)$, i.e.
\[
\mathrm{Area}(\partial V \cap B_{\varepsilon_m}(\mb{x}'_0)) \leq \xi(d) \varepsilon_m^d, \qquad \xi(d) = \frac{\pi^{d/2}}{\Gamma(\frac{d}{2}+1)},
\]
where $\Gamma$ is the Gamma function. Combining these bounds gives
\[
\xi(d) \varepsilon_m^d \geq \mathrm{L}(V)\brs{1 - 0.05^{1/m}}
\]
and therefore
\[
\varepsilon_m \geq \br{\frac{\mathrm{L}(V)}{\xi(d)}\brs{1 - 0.05^{1/m}}}^{1/d},
\]
which completes the proof.

\subsection{Proof of \Cref{thm:steinidentityGtilde}} \label{app:steinidentityGtilde}
Let $\mb{g} \in \mathcal{G}^d_0$ and $\tilde{\mb{g}} \in \approxG$. Begin with writing the expectation in its integral form,
\begin{align}
\Eqsmall{x}{q} [ \mathcal{T}_{q} \tilde{\mb{g}}(\mb{x}) ] &= \int_V q(\mb{x}) \brs{\sum_{l=1}^d  \dx \log q(\mb{x}) \tilde{g}_l(\mb{x}) + \sum_{l=1}^d \dx \tilde{g}_l(\mb{x})} d\mb{x} \nonumber\\
&=\sum_{l=1}^d \int_V q(\mb{x}) \dx \log q(\mb{x}) \tilde{g}_l(\mb{x})d\mb{x} + \sum_{l=1}^d \int_V q(\mb{x}) \dx \tilde{g}_l(\mb{x}) d\mb{x} \nonumber \\
&\myeq{(a)}\sum_{l=1}^d \int_V \dx q(\mb{x}) \tilde{g}_l(\mb{x}) d\mb{x}  + \sum_{l=1}^d \int_V q(\mb{x}) \dx \tilde{g}_l(\mb{x}) d\mb{x} \nonumber \\
&\myeq{(b)}\sum_{l=1}^d \oint_{\partial V} q(\mb{x}) \tilde{g}_l(\mb{x})\hat{\mathrm{u}}_l(\mb{x}) ds - \sum_{l=1}^d \int_V q(\mb{x}) \dx \tilde{g}_l(\mb{x}) d\mb{x}  + \sum_{l=1}^d \int_V q(\mb{x}) \dx \tilde{g}_l(\mb{x}) d\mb{x} \nonumber  \\
&=\sum_{l=1}^d \oint_{\partial V} q(\mb{x}) \tilde{g}_l(\mb{x})\hat{\mathrm{u}}_l(\mb{x}) ds \nonumber
\end{align}
where $\oint_{\partial V}$ is the surface integral over the boundary $\partial V$, (a) comes from the identity that $q(\mb{x}) \dx \log q(\mb{x}) = \dx q(\mb{x})$, and (b) is from integration by parts. Substitute $\tilde{g}_l(\mb{x}) = \tilde{g}_l(\mb{x}) + g_l(\mb{x}) - g_l(\mb{x})$ to obtain
\begin{align*}
&\sum_{l=1}^d \oint_{\partial V} q(\mb{x}) (\tilde{g}_l(\mb{x}) + g_l(\mb{x}) - g_l(\mb{x}))\hat{\mathrm{u}}_l(\mb{x}) ds \\
=&\sum_{l=1}^d \oint_{\partial V} q(\mb{x}) (\tilde{g}_l(\mb{x}) - g_l(\mb{x}))\hat{\mathrm{u}}_l(\mb{x}) ds + \sum_{l=1}^d\oint_{\partial V} q(\mb{x}) g_l(\mb{x}) \hat{\mathrm{u}}_l(\mb{x}) ds.
\end{align*}

By \cref{lem:gtilde_to_g}, $| \tilde{g}_l (\mb{x}') - g_l(\mb{x}') | = O_P(\varepsilon_m)$, leaving
\[
\Eqsmall{x}{q} [ \mathcal{T}_{q} \tilde{\mb{g}}(\mb{x}) ] = O_P(\varepsilon_m) + \sum_{l=1}^d \oint_{\partial V} q(\mb{x}) g_l(\mb{x}) \hat{\mathrm{u}}_l(\mb{x}) ds
\]
As all evaluations of $g_l(\mb{x}) = 0\; \forall \mb{x} \in \partial V$ by definition of $\mathcal{G}^d_0$, the second term equals zero, leaving only 
\[
\Eqsmall{x}{q} [ \mathcal{T}_{q} \tilde{\mb{g}}(\mb{x}) ] = O_P(\varepsilon_m)
\]
as desired.

\subsection{Proof of \Cref{thm:ksd}}\label{app:theoremksd}

Recall $\approxG = \{ \mb{g} \in \mathcal{G}^d \;|\; \mb{g}(\mb{x}') = \mb{0} \: \forall \mb{x}' \in \approxdV, \|\mb{g}\|^2_{\mathcal{G}^d} \leq 1 \}$, where $\mathcal{G}^d$ is the product RKHS with $d$ elements, $\mb{g} = (g_1, \dots, g_d)$, and $g_l \in \mathcal{G}, \forall l=1,\dots, d$. 

Begin with the definition of TKSD as defined in \cref{eq:tksd1},
\[
\sup_{\mb{g} \in \approxG} \Eqsmall{x}{q}[\mathcal{T}_{\p} \mb{g}(\mb{x})].
\]
To solve this supremum analytically, we can reframe it as a constrained maximisation problem,
\begin{align}
    \max_{\mb{g} \in \mathcal{G}^d}\: &\Eqsmall{x}{q}[\mathcal{T}_{\p} \mb{g}(\mb{x})], \label{eq:constr_opt}\\
    \text{subject to } & g_l(\mb{x}') = 0\; \forall \mb{x}' \in \approxdV, \; \forall l = 1, \dots, d \nonumber \\
                       & \|\mb{g}\|_{\mathcal{G}^d} \leq 1,\nonumber
\end{align}
where the constraints in the definition for $\approxG$ have been included as optimisation constraints, and the maximisation is now with respect to the RKHS function family $\mathcal{G}^d$ only. To solve this, we can formulate a Lagrangian dual function \citep{convex}.
\begin{align}
     \inf_{\mb{\nu}^{(1)}, \dots, \mb\nu^{(d)}, \lambda}&\mathcal{L}(\mb\theta, \mb{g}, \mb\nu^{(1)}, \dots, \mb\nu^{(d)}, \lambda)\\
     \mathcal{L} = \mathcal{L}(\mb\theta, \mb{g}, \mb\nu^{(1)}, \dots, \mb\nu^{(d)}, \lambda) := \;&\Eqsmall{x}{q}[\mathcal{T}_{\p} \mb{g}(\mb{x})] + \sum^d_{l=1} \sum^m_{i'=1} \nu^{(l)}_{i'} g_l (\mb{x}'_{i'}) + \lambda (\|\mb{g}\|^2_{\mathcal{G}^d} - 1),
    \label{eq:dual} 
\end{align}
for $\mb\nu^{(l)} \in \R^m \; \forall l$, $\lambda \geq 0$ and $\mathcal{L}$ is our Lagrangian. The overall optimisation problem that needs solving is given by
\begin{equation}
    \min_{\mb{\nu}^{(1)}, \dots, \mb\nu^{(d)}, \lambda} \;\;\max_{\mb{g}\in\mathcal{G}^d} \; \mathcal{L}(\mb\theta, \mb{g}, \mb\nu^{(1)}, \dots, \mb\nu^{(d)}, \lambda).
    \label{eq:minmax}
\end{equation}
By solving the dual problem, \cref{eq:minmax}, we solve the primal problem, \cref{eq:constr_opt}.
We can rewrite \eqref{eq:dual} as 
\[
\mathcal{L} = \Eqsmall{x}{q}\brs{\sum^d_{l=1}\dx \log \p(\mb{x})g_l(\mb{x}) + \dx g_l(\mb{x})}+ \sum^d_{l=1}\sum^m_{i'=1} \nu^{(l)}_{i'} g_l (\mb{x}'_{i'}) + \lambda (\|\mb{g}\|^2_{\mathcal{G}^d} - 1),
\]
and expand evaluations of $g_l(\mb{x})$ via the reproducing property of $\mathcal{G}$, given by $g_l(\mb{x}) = \langle g_l, k(\mb{x}, \cdot)\rangle_{\mathcal{G}}$, to
\begin{align}
\mathcal{L} &= \Eqsmall{x}{q}\brs{\sum^d_{l=1}\dx \log \p(\mb{x})\brangle{g_l, k(\mb{x}, \cdot)}_{\mathcal{G}} + \brangle{g_l, \dx k(\mb{x}, \cdot)}_{\mathcal{G}}} + \sum^m_{i'=1} \sum^d_{i=1}\nu^{(l)}_{i'} \brangle{g_l, k(\mb{x}'_{i'}, \cdot)}_{\mathcal{G}} + \lambda \br{\sum^d_{l=1}\langle g_l, g_l\rangle_{\mathcal{G}} - 1} 
\nonumber \\
&= \sum^d_{l=1}\Eqsmall{x}{q}\brs{\brangle{g_l, \dx \log \p(\mb{x})k(\mb{x}, \cdot) + \dx k(\mb{x}, \cdot) + \sum^m_{i'=1} \nu^{(l)}_{i'} k(\mb{x}'_{i'}, \cdot)}_{\mathcal{G}}} + \lambda \br{\sum^d_{l=1}\langle g_l, g_l\rangle_{\mathcal{G}} - 1}
\nonumber \\
&=\sum^d_{l=1}\Big\langle g_l, \Eqsmall{x}{q}[\dx \log \p(\mb{x})k(\mb{x}, \cdot) + \dx k(\mb{x}, \cdot)] + \sum^m_{i'=1} \nu^{(l)}_{i'} k(\mb{x}'_{i'}, \cdot)\Big\rangle_{\mathcal{G}} + \lambda \br{\sum^d_{l=1}\langle g_l, g_l\rangle_{\mathcal{G}} - 1}. \label{eq:dual2}
\end{align}

The final equality holds provided that $\Eqsmall{x}{q}[\dx \log \p(\mb{x})k(\mb{x}, \cdot) + \dx k(\mb{x}, \cdot) + \sum^m_{i'=1} \nu^{(l)}_{i'} k(\mb{x}'_{i'}, \cdot)] < \infty$, i.e. the term inside the expectation is Bochner integrable \citep{steinwart2008support}. This same assumption was made in \citet{kernelgoodness}, as we can consider $\sum^m_{i'=1} \nu^{(l)}_{i'} k(\mb{x}'_{i'}, \cdot)$ as a constant with respect to the expectation. 

We solve for each parameter via differentiation to obtain a closed form solution. Across dimensions, each $g_l$ in \eqref{eq:dual2} appears only additively to another, and so we can consider the $l$-th element of the derivative, and solve the inner maximisation for each $g_l$ to give a solution for $\mb{g}$. This differentiation gives
\[
    \frac{\partial \mathcal{L}}{\partial g_l} = \Eqsmall{x}{q}\brs{\dx \log \p(\mb{x})k(\mb{x}, \cdot) + \dx k(\mb{x}, \cdot)} + \sum^m_{i'=1} \nu^{(l)}_{i'} k(\mb{x}'_{i'}, \cdot) + 2\lambda g_l = 0.
\]
Rearranging for $g_l$ gives the solution
\[
g_l^{\star} = -\frac{1}{2\lambda}\Eqsmall{x}{q}\brs{\dx \log \p(\mb{x})k(\mb{x}, \cdot) + \dx k(\mb{x}, \cdot)} - \frac{1}{2\lambda}\sum^m_{i'=1} \nu^{(l)}_{i'} k(\mb{x}'_{i'}, \cdot) =-\frac{z_l}{2\lambda},
\]
where for convenience we have denoted $z_l = \Eqsmall{x}{q}\brs{\dx \log \p(\mb{x})k(\mb{x}, \cdot) + \dx k(\mb{x}, \cdot)} + \sum^m_{i'=1} \nu^{(l)}_{i'} k(\mb{x}'_{i'}, \cdot)$. Substituting this back into \eqref{eq:dual2} gives
\begin{equation}
\mathcal{L}(\mb\theta, \mb{g}^{\star}, \mb\nu^{(1)}, \dots, \mb\nu^{(d)}, \lambda) = \sum^d_{l=1} \brangle{-\frac{z_l}{2\lambda}, z_l}_{\mathcal{G}} + \lambda \br{\sum^d_{i=1}\brangle{-\frac{z_l}{2\lambda}, -\frac{z_l}{2\lambda}}_{\mathcal{G}} - 1} = \sum^d_{l=1}\frac{1}{4\lambda}\brangle{z_l, z_l}_{\mathcal{G}}+\lambda,
\label{eq:dual3}
\end{equation}
where $\mb{g}^{\star} = (g_1^{\star}, \dots, g_d^{\star})$. Solve for $\lambda$ by differentiating
\[
\frac{\partial \mathcal{L}}{\partial \lambda} = -\frac{1}{4\lambda^2}\sum^d_{i=1}\brangle{z_l, z_l}_{\mathcal{G}}+1 =0.
\]
Rearranging for $\lambda$ gives $\lambda = \pm \sqrt{{\sum^d_{l=1}\brangle{z_l, z_l}_{\mathcal{G}}}/{4}}$. Since $\lambda \geq 0$, we take the positive solution, giving $\lambda^{\star} = \sqrt{\sum^d_{l=1}\brangle{z_l, z_l}_{\mathcal{G}}}/2$. Substituting $\lambda^{\star}$ into \eqref{eq:dual3} gives
\begin{equation}
\mathcal{L}(\mb\theta, \mb{g}^{\star}, \mb\nu^{(1)}, \dots, \mb\nu^{(d)}, \lambda^{\star}) = \sum^d_{l=1}\frac{1}{4\lambda^{\star}}\brangle{z_l, z_l}_{\mathcal{G}}+\lambda^{\star} = \sqrt{\sum^d_{l=1}\brangle{z_l, z_l}_{\mathcal{G}}}.
\label{eq:dual4}
\end{equation}
To solve for the final Lagrangian parameters, $\mb\nu^{(1)}, \dots, \mb\nu^{(d)}$,  let us first introduce some notation. Denote
\[
\mb\phi_{\bm{x}'} = \begin{bmatrix}
k(\mb{x}_1', \cdot), \\
\vdots, \\
k(\mb{x}_m', \cdot) 
\end{bmatrix}, \qquad  \kxxp{z} = \begin{bmatrix}
k(\mb{z}, \mb{x}_1') &
\hdots &
k(\mb{z}, \mb{x}_m') 
\end{bmatrix}, \qquad \bm{K}' = \mb\phi_{\bm{x}'}\mb\phi_{\bm{x}'}\T,
\]
where $\mb{x}'_1, \dots, \mb{x}'_m \in \approxdV$. Now consider the equivalence
\[
{\mb{\nu}^{(l)}}^{\star} := \argmin_{\mb\nu^{(l)}}\sqrt{\sum^d_{l=1}\brangle{z_l, z_l}_{\mathcal{G}}}= \argmin_{\mb\nu^{(l)}}\sum^d_{l=1}\brangle{z_l, z_l}_{\mathcal{G}}= \argmin_{\mb\nu^{(l)}}\brangle{z_l, z_l}_{\mathcal{G}},
\]
as each $\mb{\nu}^{(l)}$ is independent. By rewriting $z_l$ as
\(
z_l = \Eqsmall{x}{q}\brs{\dx \log \p(\mb{x})k(\mb{x}, \cdot) + \dx k(\mb{x}, \cdot)} + {\mb\nu^{(l)}}\T\mb\phi_{\bm{x}'} ,
\)
we solve this again by differentiating,
\begin{align*}
    \frac{d}{d\mb{\nu}^{(l)}}\brangle{z_l, z_l}_{\mathcal{G}} = 2 \brangle{\frac{dz_l}{d\mb\nu^{(l)}}, z_l}_{\mathcal{G}} &=2\brangle{\mb\phi_{\bm{x}'}, \Eqsmall{x}{q}\brs{\dx \log \p(\mb{x})k(\mb{x}, \cdot) + \dx k(\mb{x}, \cdot)} + {\mb\nu^{(l)}}\T\mb\phi_{\bm{x}'} }_{\mathcal{G}} \\
    &=2\;\Eqsmall{x}{q}\brs{\dx \log \p(\mb{x}) \kxxp{x} + \dx  \kxxp{x}} + 2 {\mb\nu^{(l)}}\T\mb\phi_{\bm{x}'}\mb\phi_{\bm{x}'}\T\\
    &=2\;\Eqsmall{x}{q}\brs{\dx \log \p(\mb{x}) \kxxp{x} + \dx  \kxxp{x}} + 2{\mb\nu^{(l)}}\T\bm{K}' .
\end{align*}
Setting equal to zero and rearranging gives
\begin{equation}
{\mb\nu^{(l)}}^{\star} = -(\bm{K}')^{-1}\Eqsmall{x}{q}\brs{\dx \log \p(\mb{x}) \kxxp{x}\T + (\dx \kxxp{x})\T}.
\label{eq:nustar}
\end{equation}
Before substituting back into \eqref{eq:dual4}, let us first square it and expand it as 
\begin{equation}
\begin{split}
    \mathcal{L}^2 &= \sum^d_{l=1} \Eq{x}{q}\Eq{y}{q} [u_l(\mb{x}, \mb{y})] + \br{\Eq{x}{q}\Big[ \dx \log \p(\mb{x}) \kxxp{x}\T + (\dx  \kxxp{x})\T \Big]}\T\mb\nu^{(l)}\\
    &\qquad\qquad\qquad\qquad\qquad  + {\mb\nu^{(l)}}\T\Eq{y}{q}\Big[ \dy \log \p(\mb{y}) \kxxp{y}\T + (\dy \kxxp{y})\T \Big]+ {\mb\nu^{(l)}}\T\bm{K}'{\mb\nu^{(l)}} \label{eq:expandeddual1}
\end{split}
\end{equation}
where
\(
u_l(\mb{x}, \mb{y}) =\scorel{p}(\mb{x})\scorel{p}(\mb{y})k(\mb{x},\mb{y}) +\scorel{p}(\mb{x}) \dy k(\mb{x}, \mb{y}) +\scorel{p}(\mb{y}) \dx k(\mb{x}, \mb{y}) + \dx\dy k(\mb{x}, \mb{y}).
\) 
We can substitute ${\mb\nu^{(l)}}^{\star}$ from \eqref{eq:nustar} into $\mathcal{L}^2$, denoting ${\mathcal{L}^{\star}}^2 := \mathcal{L}(\mb\theta, \mb{g}^{\star}, {\mb\nu^{(1)}}^{\star}, \dots, {\mb\nu^{(d)}}^{\star}, \lambda)$, giving
\begin{align}
    \mathcal{L}^2 &= \sum^d_{l=1} \Eq{x}{q}\Eq{y}{q} [u_l(\mb{x}, \mb{y})] + \br{\Eq{x}{q}\Big[ \scorel{\mb{x}} \kxxp{x}\T + (\dx  \kxxp{x})\T \Big]}\T\br{-(\bm{K}')^{-1}\Eq{y}{q}\Big[\scorel{\mb{y}} \kxxp{y}\T + (\dy  \kxxp{y})\T\Big]} \nonumber\\
    &\qquad \qquad + \br{-(\bm{K}')^{-1}\Eq{x}{q}\brs{\scorel{\mb{x}} \kxxp{x}\T + (\dx  \kxxp{x})\T}}\T\Eq{y}{q}\Big[ \scorel{\mb{y}} \kxxp{y}\T + (\dy  \kxxp{y})\T \Big] \nonumber \\
    &\qquad\qquad  + \br{-(\bm{K}')^{-1}\Eq{x}{q}\brs{\scorel{\mb{x}} \kxxp{x}\T + (\dx  \kxxp{x})\T}}\T\bm{K}'\br{-(\bm{K}')^{-1}\Eq{y}{q}\brs{\scorel{\mb{y}} \kxxp{y}\T + (\dy  \kxxp{y})\T}}, \nonumber 
\end{align}
where, for brevity, we have written $\scorel{\mb{x}} = \dx \log \p(\mb{x})$ and $\scorel{\mb{y}} = \dy \log \p(\mb{y})$. This can be further simplified to
\begin{equation}
    \mathcal{L}^2=\sum^d_{l=1} \Eq{x}{q}\Eq{y}{q} [u_l(\mb{x}, \mb{y})] - \br{\Eq{x}{q}\Big[ \scorel{\mb{x}} \kxxp{x}\T + (\dx  \kxxp{x})\T \Big]}\T(\bm{K}')^{-1}\Eq{y}{q}\Big[\scorel{\mb{y}} \kxxp{y}\T + (\dy  \kxxp{y})\T\Big],
\end{equation}
and equivalently
\begin{equation}
    \mathcal{L}^2=\sum^d_{l=1} \Eq{x}{q}\Eq{y}{q} \brs{u_l(\mb{x}, \mb{y}) - \br{\scorel{\mb{x}} \kxxp{x}\T + (\dx  \kxxp{x})\T}\T (\bm{K}')^{-1}\br{\scorel{\mb{y}} \kxxp{y}\T + (\dy  \kxxp{y})\T}},
\end{equation}

giving the desired result.

\subsection{Proof of \Cref{thm:theta_star}} \label{app:theta_star}

First, we state a general result for proving the consistency of an empirical estimator of $\mb \theta$, which is second-order differentiable with respect to $\mb \theta$. 

\begin{lemma}
\label{lem:tksd:consistency}
    Define the unique solution of empirical objective $\hat{\mb \theta} := \argmin_{\mb \theta} \hat{L}(\mb \theta)$ and the population $\mb \theta^{\star} := \argmin_{\mb \theta} L(\mb \theta)$, where $L(\mb\theta) := \mathbb{E}[\hat{L}(\mb\theta)]$. If $\lambda_\mathrm{min}\left[ \nabla^2_{\mb \theta} \hat{L}(\mb \theta) \right] \ge \Lambda_\mathrm{{min}} > 0, \forall \mb \theta$ and $\|\nabla_{\mb\theta} \hat{L}(\mb\theta^{\star})\| = O_P(\frac{1}{\sqrt{n}})$, then $\|\hat{\mb\theta} - {\mb \theta}^\star\| = O_P(\frac{1}{\sqrt{n}})$. Here, $\lambda_\mathrm{min} (M)$ is the smallest eigenvalue of a matrix $M$.
\end{lemma}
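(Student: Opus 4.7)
The plan is a standard M-estimator argument driven by the two assumptions: a uniform strong-convexity bound on $\hat{L}$ and a $1/\sqrt{n}$ rate on the gradient at $\mb\theta^\star$. The uniqueness of the minimizer $\hat{\mb\theta}$ together with $\hat{L}$ being second-order differentiable gives the first-order optimality condition $\nabla_{\mb\theta} \hat{L}(\hat{\mb\theta}) = \mb 0$. I would then linearize $\nabla_{\mb\theta} \hat{L}$ between $\hat{\mb\theta}$ and $\mb\theta^\star$ and invert the Hessian along the path, using the eigenvalue bound to control its spectral norm.

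Concretely, by the fundamental theorem of calculus applied to $t \mapsto \nabla_{\mb\theta} \hat{L}(\mb\theta^\star + t(\hat{\mb\theta} - \mb\theta^\star))$,
\[
\nabla_{\mb\theta} \hat{L}(\hat{\mb\theta}) - \nabla_{\mb\theta} \hat{L}(\mb\theta^\star) = \bar{H}\,(\hat{\mb\theta} - \mb\theta^\star),\qquad \bar{H} := \int_0^1 \nabla^2_{\mb\theta} \hat{L}\bigl(\mb\theta^\star + t(\hat{\mb\theta} - \mb\theta^\star)\bigr)\,dt.
\]
Since $\nabla_{\mb\theta} \hat{L}(\hat{\mb\theta}) = \mb 0$, this rearranges to $\bar{H}\,(\hat{\mb\theta} - \mb\theta^\star) = -\nabla_{\mb\theta} \hat{L}(\mb\theta^\star)$. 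The assumption $\lambda_{\min}[\nabla^2_{\mb\theta} \hat{L}(\mb\theta)] \ge \Lambda_{\min} > 0$ holds pointwise in $\mb\theta$, and because the smallest eigenvalue is concave over positive semidefinite matrices and the integral is a limit of convex combinations, $\lambda_{\min}(\bar{H}) \ge \Lambda_{\min}$ as well. Hence $\bar{H}$ is invertible with $\|\bar{H}^{-1}\|_{\mathrm{op}} \le 1/\Lambda_{\min}$.

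Taking norms and applying this operator-norm bound gives
\[
\|\hat{\mb\theta} - \mb\theta^\star\| = \|\bar{H}^{-1}\,\nabla_{\mb\theta}\hat{L}(\mb\theta^\star)\| \le \frac{1}{\Lambda_{\min}}\,\|\nabla_{\mb\theta}\hat{L}(\mb\theta^\star)\| = O_P\!\left(\frac{1}{\sqrt{n}}\right),
\]
where the last equality uses the hypothesis on the gradient at $\mb\theta^\star$.

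The only real subtlety, and the step I would be most careful about, is justifying the eigenvalue bound for the \emph{averaged} Hessian $\bar{H}$: one must note that the pointwise lower bound $\lambda_{\min}(\nabla^2_{\mb\theta}\hat{L}(\cdot)) \ge \Lambda_{\min}$ transfers to convex combinations via $\mb v^\top \bar{H} \mb v = \int_0^1 \mb v^\top \nabla^2_{\mb\theta} \hat{L}(\cdots) \mb v\, dt \ge \Lambda_{\min} \|\mb v\|^2$ for any unit vector $\mb v$, so no further regularity on the Hessian path is needed. Everything else is bookkeeping, and the overall $O_P(1/\sqrt{n})$ rate is inherited directly from the gradient assumption.
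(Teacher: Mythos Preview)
Your proof is correct and follows essentially the same strong-convexity argument as the paper: both use first-order optimality $\nabla_{\mb\theta}\hat{L}(\hat{\mb\theta})=\mb 0$, linearize the gradient between $\hat{\mb\theta}$ and $\mb\theta^\star$, and invoke the uniform lower bound $\Lambda_{\min}$ on the Hessian to obtain $\|\hat{\mb\theta}-\mb\theta^\star\|\le \Lambda_{\min}^{-1}\|\nabla_{\mb\theta}\hat{L}(\mb\theta^\star)\|$. The only cosmetic difference is that the paper applies the scalar mean value theorem to the auxiliary function $h(\mb\theta)=\langle \hat{\mb\theta}-\mb\theta^\star,\nabla_{\mb\theta}\hat{L}(\mb\theta)\rangle$ (landing on the Hessian at a single intermediate point) rather than the integral form you use, but the substance and the resulting bound are identical.
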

\begin{proof}
First, we define a function $h(\mb{\theta}) := \langle \hat{\mb{\theta}} - \mb{\theta}^{\star}, \hat{L}(\mb{\theta}) \rangle$. Using the mean value theorem, we obtain 
$h(\hat{ \mb{\theta}}) - h( \mb{\theta}^{\star}) = \langle  \nabla_{\mb{\theta}} h(\bar{\mb{\theta}}), \hat{\mb{\theta}} - \mb{\theta}^{\star} \rangle$, where $\bar{\mb\theta}$ is the midpoint between $\hat{ \mb{\theta}}$ and $\mb{\theta}^\star$ in a co-ordinate wise fashion. Therefore, due to the fact that $\nabla_{\mb{\theta}} \hat{L}(\hat{\mb{\theta}}) = 0 $, 
\begin{align*}
    0 - h(\mb{\theta}^{\star}) = \langle \hat{\mb{\theta}} - \mb{\theta}^{\star},  \nabla_{\mb{\theta}} \hat{L}(\mb{\theta}^{\star}) \rangle = \langle \hat{\mb{\theta}} - \mb{\theta}^{\star}, \nabla^2_{\mb{\theta}} \hat{L}(\bar{\mb{\theta}}) (\hat{\mb{\theta}} - \mb{\theta}^{\star}) \rangle.
\end{align*}
This implies the following inequality
\begin{align*}
    \|\hat{\mb{\theta}} - \mb{\theta}^{\star}\| \|\nabla_{\mb{\theta}} \hat{L}(\mb{\theta}^{\star})\| \ge \|(\hat{\mb{\theta}} - \mb{\theta}^{\star})^\top \mb{\nabla}_{\mb\theta}^2 \hat{L}(\hat{\mb{\theta}}) (\hat{\mb{\theta}} - \mb{\theta}^{\star}) \| \ge \lambda_\mathrm{min} \|(\hat{\mb{\theta}} - \mb{\theta}^{\star})\|^2. 
\end{align*}
Assume $\|\hat{\mb{\theta}} - \mb{\theta}^{\star}\| \neq 0$, 
$\frac{1}{\lambda_\mathrm{min}} \|{\nabla}_{\mb\theta} \hat{L}(\mb{\theta}^{\star})\| \ge \|\hat{\mb{\theta}} - \mb{\theta}^{\star}\|^2$.
\end{proof}

For the remainder, we need only to show that $\mathcal{D}_{\text{TKSD}}(\p | q)^2$ satisfies \cref{lem:tksd:consistency}. 
\begin{proof}

First, we need to show that

\begin{equation}
\left [\nabla_{\mb\theta} \mathcal{D}_{\text{TKSD}}(\p | q)^2 \Big|_{\mb\theta = \mb\theta^\star} \right]_i = O_P(\hat{\varepsilon}_m), \forall i. 
\label{eq:tksd:tksd_deriv}
\end{equation}
    
For brevity let us define $\E_{\mb{x}} = \Eq{x}{q}$ and similarly $\E_{\mb{y}} = \Eq{y}{q}$. Recall
\begin{equation}
\mathcal{D}_{\text{TKSD}}(\p | q)^2 = \sum^d_{l=1} \E_{\mb{x}  }\E_{\mb{y}  } \brs{u_l(\mb{x}, \mb{y}) - \bm{v}_l(\mb{x})\T (\bm{K}')^{-1}\bm{v}_l(\mb{y})}
\label{eq:tksd:tksd_sq_proof}
\end{equation}
where 
\begin{align*}
u_l(\mb{x}, \mb{y}) &=\scorel{p}(\mb{x})\scorel{p}(\mb{y})k(\mb{x},\mb{y}) +\scorel{p}(\mb{x}) \dy k(\mb{x}, \mb{y}) +\scorel{p}(\mb{y})  \dx k(\mb{x}, \mb{y}) + \dx\dy \mb{y}), \\
\bm{v}_l(\mb{z}) &=\scorel{p}(\mb{z})\kxxp{z}\T + (\dz \kxxp{z})\T,
\end{align*}
$ \kxxp{z} = [ k(\mb{z}, \mb{x}_1'), \dots, k(\mb{z}, \mb{x}_m') ]$, $\mb\phi_{\bm{x}'} = [k(\mb{x}_1', \cdot), \dots, k(\mb{x}_m', \cdot)]\T$ and $\bm{K}' = \mb\phi_{\bm{x}'}\mb\phi_{\bm{x}'}\T$.
Let us take the derivative of \eqref{eq:tksd:tksd_sq_proof} with respect to $\mb\theta$. We first consider for the $l$-th dimension of the sum, and then note that this will apply to all $d$ dimensions. Therefore consider
\begin{align}
\nabla_{\mb\theta}&\mathcal{D}_{\text{TKSD}}(\p | q)^2\Big|_{\mb\theta = \mb\theta^\star}  = \nabla_{\mb\theta}\E_{\mb{x}  }\E_{\mb{y}  } \brs{u_l(\mb{x}, \mb{y}) - \bm{v}_l(\mb{x})\T (\bm{K}')^{-1}\bm{v}_l(\mb{y})}\Big|_{\mb\theta = \mb\theta^\star} \nonumber \\
&= \E_{\mb{x}  }\E_{\mb{y}  } \brs{\nabla_{\mb\theta}u_l(\mb{x}, \mb{y})  - \mb{v}_l(\mb{x})\T (\bm{K}')^{-1}(\nabla_{\mb\theta}\bm{v}_l(\mb{y})) - (\nabla_{\mb\theta}\mb{v}_l(\mb{x}))\T (\bm{K}')^{-1}\bm{v}_l(\mb{y})}\Big|_{\mb\theta = \mb\theta^\star} \label{eq:tksd:bigtksdderiv}.
\end{align}
We begin by expanding the first term in \eqref{eq:tksd:bigtksdderiv}:
\begin{align}
\E_{\mb{x}  }\E_{\mb{y}  } \brs{\nabla_{\mb\theta}u_l(\mb{x}, \mb{y})}\Big|_{\mb\theta = \mb\theta^\star} &= \E_{\mb{x}  }\E_{\mb{y}  } \big[\scorel{p}(\mb{x})k(\mb{x},\mb{y})(\nabla_{\mb\theta}\scorel{p}(\mb{y})) 
 + (\nabla_{\mb\theta}\scorel{p}(\mb{x}))k(\mb{x},\mb{y})\scorel{p}(\mb{y})\nonumber \\ 
&\quad\qquad + (\nabla_{\mb\theta}\scorel{p}(\mb{x}))  \dy k(\mb{x}, \mb{y}) + (\nabla_{\mb\theta}\scorel{p}(\mb{y}))  \dx k(\mb{x}, \mb{y})\big]\Big|_{\mb\theta = \mb\theta^\star}. \label{eq:tksd:ksdderivpart0}
\end{align}
Now expand the first term of \eqref{eq:tksd:ksdderivpart0}, giving
\begin{align}
\E_{\mb{x}  }\E_{\mb{y}  } \brs{\scorel{p}(\mb{x})k(\mb{x},\mb{y})(\nabla_{\mb\theta}\scorel{p}(\mb{y}))}\Big|_{\mb\theta = \mb\theta^\star}
&= \int_{V} q(\mb{x})\scorel{p}(\mb{x}) \E_{\mb{y}}\brs{k(\mb{x},\mb{y})(\nabla_{\mb\theta}\scorel{p}(\mb{y}))} d\mb{x} \Big|_{\mb\theta = \mb\theta^\star}   \nonumber\\
&=  \int_{V} \dx q(\mb{x})\E_{\mb{y}} \brs{k(\mb{x},\mb{y}) [\nabla_{\mb\theta}\scorel{p}(\mb{y})]\big|_{\mb\theta = \mb\theta^\star}} d\mb{x} \label{eq:tksd:ksdderivpart1}
\end{align}
due to
\[
q(\mb{z})\scorel{p}(\mb{z})\Big|_{\mb\theta = \mb\theta^\star} = q(\mb{z})  \dy \log p_{\mb{\theta}^\star}(\mb{z}) = q(\mb{z})\frac{ \dy q(\mb{z})}{q(\mb{z})} =  \dy q(\mb{z}).
\]
Then, via integration by parts, \eqref{eq:tksd:ksdderivpart1} becomes
\begin{align*}
    \oint_{\partial V} q(\mb{x}) \E_{\mb{y}}\brs{k(\mb{x},\mb{y}) [\nabla_{\mb\theta}\scorel{p}(\mb{y})]\big|_{\mb\theta = \mb\theta^\star}}\hat{\mathrm{u}}_l(\mb{x}) ds - \int_V q(\mb{x})\brs{\dx k(\mb{x},\mb{y}) [\nabla_{\mb\theta}\scorel{p}(\mb{y})]\big|_{\mb\theta = \mb\theta^\star}}.
\end{align*}

We can expand the second term of \eqref{eq:tksd:ksdderivpart0} in a similar way: 
\[
\begin{split}
(\nabla_{\mb\theta}\scorel{p}(\mb{x}))k(\mb{x},\mb{y})\scorel{p}(\mb{y}) &= \oint_{\partial V} q(\mb{y}) \E_{\mb{x}}\brs{[\nabla_{\mb\theta}\scorel{p}(\mb{x})]\big|_{\mb\theta = \mb\theta^\star}k(\mb{x},\mb{y}) }\hat{\mathrm{u}}_l(\mb{y}) ds \\
&\qquad\qquad\qquad- \int_V q(\mb{y})\brs{[\nabla_{\mb\theta}\scorel{p}(\mb{x})]\big|_{\mb\theta = \mb\theta^\star} \dy k(\mb{x},\mb{y})}
\end{split}
\]
Both of these together, as they appear in \eqref{eq:tksd:ksdderivpart0}, gives 
\begin{equation}
\begin{split}
B := &\oint_{\partial V} q(\mb{x}) \E_{\mb{y}}\brs{k(\mb{x},\mb{y}) [\nabla_{\mb\theta}\scorel{p}(\mb{y})]\big|_{\mb\theta = \mb\theta^\star}}\hat{\mathrm{u}}_l(\mb{x}) ds \\
&\;\;\;\;\;\;\;\;\;+  \oint_{\partial V} q(\mb{y}) \E_{\mb{x}}\brs{[\nabla_{\mb\theta}\scorel{p}(\mb{x})]\big|_{\mb\theta = \mb\theta^\star}k(\mb{x},\mb{y}) }\hat{\mathrm{u}}_l(\mb{y}) ds.
\end{split}
\label{eq:tksd:B}
\end{equation}

Now consider the second term in \eqref{eq:tksd:bigtksdderiv}, 
\begin{align}
\E_{\mb{x}}\E_{\mb{y}}&\brs{\mb{v}_l(\mb{x})\T (\bm{K}')^{-1}(\nabla_{\mb\theta}\bm{v}_l(\mb{y}))}\Big|_{\mb\theta = \mb\theta^\star} \\
&= \E_{\mb{x}}\E_{\mb{y}}\brs{(\scorel{p}(\mb{x})\kxxp{x} + \dx \kxxp{x}) (\bm{K}')^{-1}(\kxxp{y}\T(\nabla_{\mb\theta}\scorel{p}(\mb{y}))\T)}\Big|_{\mb\theta = \mb\theta^\star} \nonumber \\
&= \E_{\mb{x}}\E_{\mb{y}}\Big[\scorel{p}(\mb{x})\kxxp{x} (\bm{K}')^{-1}(\kxxp{y}\T(\nabla_{\mb\theta}\scorel{p}(\mb{y}))\T) \\
&\qquad\qquad\qquad+ (\dx \kxxp{x})(\bm{K}')^{-1}(\kxxp{y}\T(\nabla_{\mb\theta}\scorel{p}(\mb{y}))\T))\Big]\Big|_{\mb\theta = \mb\theta^\star}\nonumber \\
&= \E_{\mb{x}}\brs{\scorel{p}(\mb{x})\kxxp{x} (\bm{K}')^{-1}\E_{\mb{y}}\brs{\kxxp{y}\T(\nabla_{\mb\theta}\scorel{p}(\mb{y})\T)}}\Big|_{\mb\theta = \mb\theta^\star}\nonumber \\
&\qquad\qquad\qquad+ \E_{\mb{x}}\brs{(\dx \kxxp{x})(\bm{K}')^{-1}\E_{\mb{y}}\brs{\kxxp{y}\T(\nabla_{\mb\theta}\scorel{p}(\mb{y}))\T)}}\Big|_{\mb\theta = \mb\theta^\star} \label{eq:tksd:tksd_deriv_part0}
\end{align}

Consider only the first term of the above,
\begin{align*}
&\E_{\mb{x}}\brs{\scorel{p}(\mb{x})\kxxp{x} (\bm{K}')^{-1}\E_{\mb{y}}\brs{(\kxxp{y}\T(\nabla_{\mb\theta}\scorel{p}(\mb{y}))\T)}}\Big|_{\mb\theta = \mb\theta^\star} \\
&\qquad\qquad\qquad= \int_{V} q(\mb{x})\scorel{p}(\mb{x})\kxxp{x} (\bm{K}')^{-1}\E_{\mb{y}}\brs{\kxxp{y}\T(\nabla_{\mb\theta}\scorel{p}(\mb{y}))\T}d\mb{x}\Big|_{\mb\theta = \mb\theta^\star} \\
&\qquad\qquad\qquad= \int_{V} \dx q(\mb{x}) \kxxp{x} (\bm{K}')^{-1}\E_{\mb{y}}\brs{\kxxp{y}\T[\nabla_{\mb\theta}\scorel{p}(\mb{y})\T]\big|_{\mb\theta = \mb\theta^\star}}d\mb{x}.
\end{align*}
Integration by parts can be used to expand this to
\begin{equation}
\begin{split}
\oint_{\partial V} q(\mb{x})&\kxxp{x} (\bm{K}')^{-1}\E_{\mb{y}}\brs{\kxxp{y}\T[\nabla_{\mb\theta}\scorel{p}(\mb{y})\T]\big|_{\mb\theta = \mb\theta^\star}}\hat{\mathrm{u}}_l(\mb{x}) ds \\
&\qquad\qquad\qquad -  \int_V q(\mb{x})\kxxp{x} (\bm{K}')^{-1}\E_{\mb{y}}\brs{\dx\kxxp{y}\T[\nabla_{\mb\theta}\scorel{p}(\mb{y})\T]\big|_{\mb\theta = \mb\theta^\star}} \nu(\mb{x}) d\mb{x}.
\end{split}
\label{eq:tksd:tksd_deriv_part1}
\end{equation}

\Cref{ass.least.square} indicates that we have the following equivalence (in a coordinate-wise fashion): 
\begin{equation}
\begin{split}
\oint_{\partial V} q(\mb{x})\kxxp{x} &(\bm{K}')^{-1}\E_{\mb{y}}\brs{\kxxp{y}\T[\nabla_{\mb\theta}\scorel{p}(\mb{y})\T]\big|_{\mb\theta = \mb\theta^\star}}\hat{\mathrm{u}}_l(\mb{x}) ds \\
&= \oint_{\partial V} q(\mb{x})\E_{\mb{y}}\brs{k(\mb{x}, \mb{y})[\nabla_{\mb\theta}\scorel{p}(\mb{y})\T]\big|_{\mb\theta = \mb\theta^\star}}\hat{\mathrm{u}}_l(\mb{x}) ds + O_P(\hat\varepsilon_{m,1}).
\end{split}
\label{eq:tksd:approx1}
\end{equation}
We interpret this as follows. If we consider $\bm{x}' \in \approxdV$ as our training set, then our regression function is trained on the approximate set of boundary points. The surface integral denoted by $\oint_{\partial V}$ is evaluating on all $\mb{x} \in \partial V$, the true boundary. Therefore the prediction based on $\mb{x} \in \partial V$ is going to be well approximated by the regression function that is trained on $\bm{x}'$, and the approximation error, $\hat\varepsilon_{m,1}$, will decrease as $m$, the number of training points, increases. 

We can apply the same operations (from \eqref{eq:tksd:tksd_deriv_part0} onwards) to the third term in \eqref{eq:tksd:bigtksdderiv}, which gives
\begin{equation}
\E_{\mb{x}}\E_{\mb{y}}\brs{(\nabla_{\mb\theta}\mb{v}_l(\mb{x}))\T (\bm{K}')^{-1}\bm{v}_l(\mb{y})} =  \oint_{\partial V} q(\mb{y})\E_{\mb{x}}\brs{[\nabla_{\mb\theta}\scorel{p}(\mb{x})\T]\big|_{\mb\theta = \mb\theta^\star}k(\mb{x}, \mb{y})}\hat{\mathrm{u}}_l(\mb{y}) ds + O_P(\hat\varepsilon_{m,2}),
\label{eq:tksd:approx2}
\end{equation}
where $\hat\varepsilon_{m,2}$ is the approximation error for the corresponding kernel regression on this term. When taking the sum, as it is in \eqref{eq:tksd:bigtksdderiv}, \eqref{eq:tksd:approx1} + \eqref{eq:tksd:approx2},
\[
\begin{split}
\oint_{\partial V} q(\mb{x})&\E_{\mb{y}}\brs{k(\mb{x}, \mb{y})[\nabla_{\mb\theta}\scorel{p}(\mb{y})\T]\big|_{\mb\theta = \mb\theta^\star}}\hat{\mathrm{u}}_l(\mb{x}) ds + O_P(\hat\varepsilon_{m,1}) \\
& + \oint_{\partial V} q(\mb{y})\E_{\mb{x}}\brs{[\nabla_{\mb\theta}\scorel{p}(\mb{x})\T]\big|_{\mb\theta = \mb\theta^\star}k(\mb{x}, \mb{y})}\hat{\mathrm{u}}_l(\mb{y}) ds + O_P(\hat\varepsilon_{m,2}) = B + O_P(\hat\varepsilon_{m}),
\end{split}
\]
where $\hat\varepsilon_m = \hat\varepsilon_{m,1} + \hat\varepsilon_{m,2}$. Therefore, when substituting all of \eqref{eq:tksd:B}, \eqref{eq:tksd:approx1} and \eqref{eq:tksd:approx2} back into \eqref{eq:tksd:bigtksdderiv}, we have 
\[
\nabla_{\mb\theta}\mathcal{D}_{\text{TKSD}}(\p | q)^2\Big|_{\mb\theta = \mb\theta^\star} = O_P(\hat\varepsilon_{m}),
\]
where $\hat\varepsilon_{m}$ is a decreasing function of $m$.

Now we show $\mb{\theta}^{\star}$ is the true density parameter, i.e., $p_{\mb{\theta}^{\star}} = q$. Let $\mathcal{D}_{\text{TKSD}}(p_{\mb\theta^\star} | q) \defeq \mathcal{D}_{\text{TKSD}}(\p | q)|_{\mb\theta = \mb\theta^\star}$. First, we show that $L(\mb{\theta}^{\star}) = 0$. This is guaranteed as 
\begin{align*}
L(\mb{\theta}^{\star}) &= \lim_{m \to \infty}\mathcal{D}_{\text{TKSD}}(p_{\mb\theta^\star} | q)^2 \\
&= \lim_{m\to \infty} \left\{\sup_{\mb{g} \in \approxG} \mathbb{E}_q [\mathcal{T}_{p_{\mb{\theta}^{\star}}} \mb{g}]  \right\}^2 \\
&= \lim_{m \to \infty} O_P(\varepsilon_m) = 0 \text{ (with high probability)},
\end{align*}
for $\mb{g} \in \approxG$. The change from Stein discrepancy to $O_P(\varepsilon_m)$ is guaranteed by Theorem \ref{thm:steinidentityGtilde}. Since we assume that the minimiser of the population objective $\mb{\theta}^{\star}$ is unique and our density model is identifiable, it shows that the unique solution of the population objective is the unique optimal parameter of the density model. 

Second, we verify that $\|\nabla_{\mb\theta} \hat{L}(\mb{\theta}^{\star})\| = O_P(\frac{1}{\sqrt{n}})$.
\begin{align}
    \|\nabla_{\mb\theta} \hat{L}(\mb{\theta}^{\star})\| &= \| \nabla_{\mb\theta} \lim_{m \to \infty}  \widehat{\mathcal{D}}_{\text{TKSD}}(p_{\mb\theta^\star} | q)^2\|\notag \\
    &= \lim_{m \to \infty} \|\nabla_{\mb\theta} \widehat{\mathcal{D}}_{\text{TKSD}}(p_{\mb\theta^\star} | q)^2\|\notag\\
    &= \lim_{m \to \infty} \|\nabla_{\mb\theta} \widehat{\mathcal{D}}_{\text{TKSD}}(p_{\mb\theta^\star} | q)^2 - \nabla_{\mb\theta} \mathcal{D}_{\text{TKSD}}(p_{\mb\theta^\star} | q)^2 + \nabla_{\mb\theta} \mathcal{D}_{\text{TKSD}}(p_{\mb\theta^\star} | q)^2 \| \nonumber\\
    &\leq \lim_{m \to \infty} \br{\|\nabla_{\mb\theta} \widehat{\mathcal{D}}_{\text{TKSD}}(p_{\mb\theta^\star} | q)^2 - \nabla_{\mb\theta} \mathcal{D}_{\text{TKSD}}(p_{\mb\theta^\star} | q)^2\| + \|\nabla_{\mb\theta} \mathcal{D}_{\text{TKSD}}(p_{\mb\theta^\star} | q)^2 \|} \nonumber\\
    &\leq \lim_{m \to \infty} \br{\|\nabla_{\mb\theta} \widehat{\mathcal{D}}_{\text{TKSD}}(p_{\mb\theta^\star} | q)^2 - \nabla_{\mb\theta} \mathcal{D}_{\text{TKSD}}(p_{\mb\theta^\star} | q)^2\| + O_P(\hat{\varepsilon}_m) } \nonumber\\
    &\myleq{(a)} \lim_{m \to \infty} \left( O_P\br{\frac{1}{\sqrt{n}}} + O_P(\hat{\varepsilon}_m)\right)\\
    & \myeq{(b)} O_P\br{\frac{1}{\sqrt{n}}}, \notag
\end{align}
where the inequality in (a) is due to the convergence of U-statistics \citep{serfling2009}, and the equality in (b) is due to $\lim_{m\to \infty} O_P(\hat\varepsilon_m) = 0$.

\end{proof}

We have $\|\nabla_{\mb\theta} \hat{L}(\mb{\theta}^{\star})\| = O_P(\frac{1}{\sqrt{n}})$, and \eqref{ass.eigenvalue} provides $\lambda_\mathrm{min}\left[ \nabla^2_{\mb \theta} \hat{L}(\mb \theta) \right] \ge \Lambda_\mathrm{{min}} > 0$, then both conditions in \cref{lem:tksd:consistency} are satisfied, and this concludes the proof of the consistency of $\hat{\mb \theta}_n$, i.e., $\|\hat{\mb\theta}_{n} - \mb\theta^\star\| = O_P(\frac{1}{\sqrt{n}})$.

\section{Computation}

\subsection{Empirical Convergence of $\tilde{\mb{g}}$ to $\mb{g}$}

In \cref{lem:gtilde_to_g} we proved that under some conditions, the difference between $\mb{g} \in \mathcal{G}^d_0$ and $\tilde{\mb{g}} \in \approxG$ evaluated on $\mb{x}' \in \partial V$ is bounded in  probability by $\varepsilon_m$ and this probability tends to zero as $m \to \infty$. In this section we aim to show that $\tilde{\mb{g}}$ converges to a specific function as $m$ increases, which we hypothesise is `true' $\mb{g} \in \mathcal{G}^d_0$.

Figure \ref{fig:empirical_g} demonstrates empirical convergence of $\tilde{\mb{g}}$ as $m$ increases for a given dataset. The experiment is setup as follows: we simulate points from a $\mathcal{N}(\mb{0}_2, \mb{I}_2)$ distribution (a unit Multivariate Normal distribution in 2D), then truncate data points to within the shape of the boundary based on location, and repeat until we acquire $n=150$ truncated points. For each value of $m$, we minimise the TKSD objective to estimate $\mb\theta = \mb{0}_2$, and plug in the estimated $\hat{\mb\theta}$ to the formula for $\tilde{\mb{g}}$.

For lower values of $m$, there are not enough boundary points to enforce the constraint well on $\tilde{\mb{g}}$, and the approximate Stein identity has a high error, and the estimator is poor. The evaluations of $\tilde{\mb{g}}$ across the space do not match the $\tilde{\mb{g}}$ output for higher values of $m$. As $m$ increases, $\tilde{\mb{g}}$ begins to converge to a particular shape, and the difference between function evaluations for $m=150$ and $m=300$ is small.

\begin{figure}[t!]
    \centering
    \includegraphics[width=0.95\linewidth]{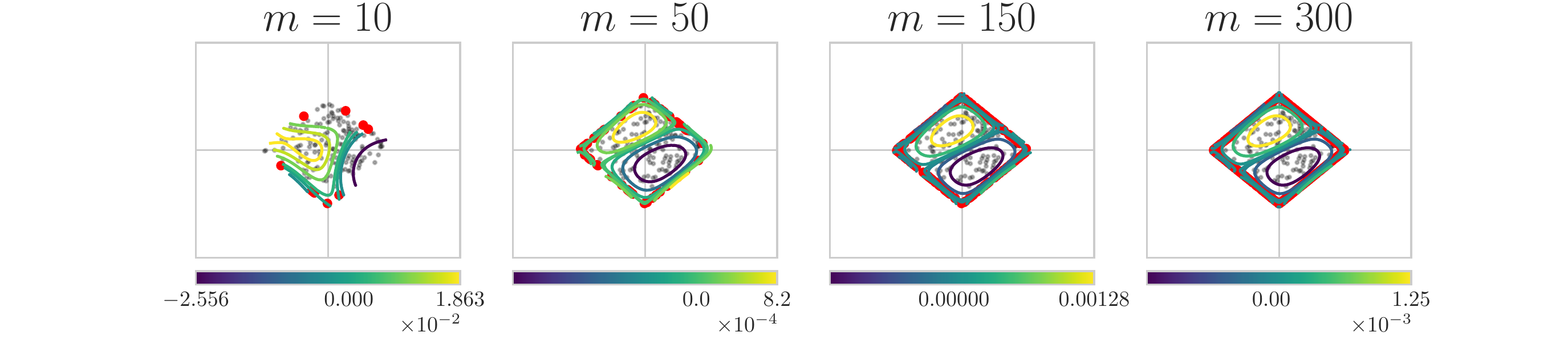}
    \includegraphics[width=0.95\linewidth]{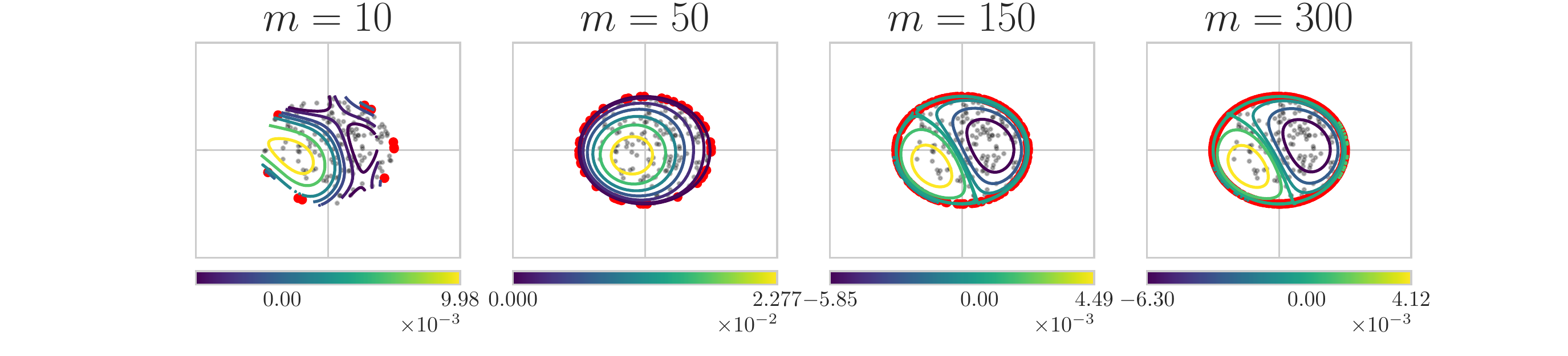}
    \includegraphics[width=0.95\linewidth]{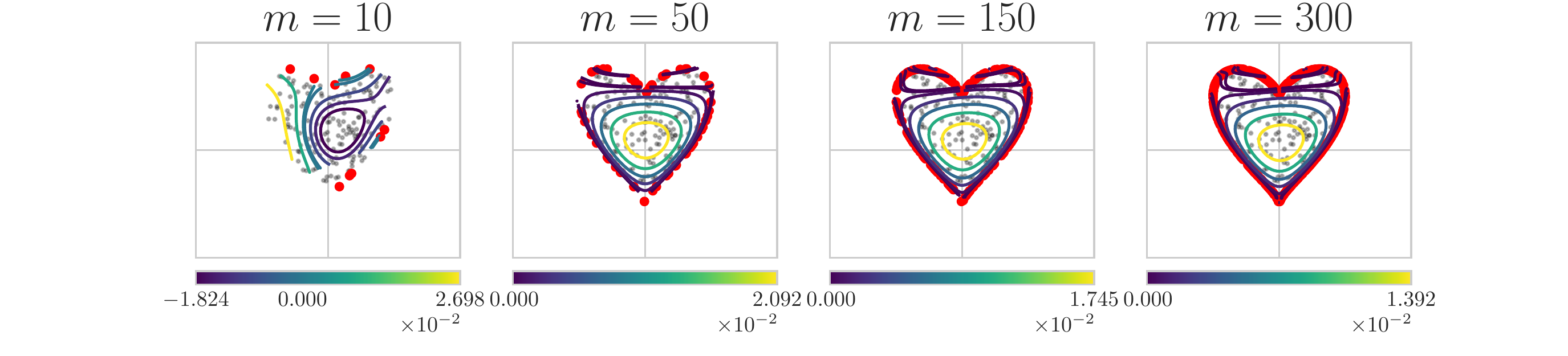}
    \caption{Contour lines of the optimal $\tilde{g}_0$ (first dimension of $\tilde{\mb{g}}$) output by TKSD across different values of $m$ and differently shaped truncation boundaries: the $\ell_1$ ball (top), the $\ell_2$ ball (middle) and a heart shape (bottom). Red points are all $m$ points in $\approxdV$ and grey points are samples from the truncated dataset. Note that we plot only the first dimension of $\tilde{\mb{g}}$ (i.e. $g_1$), but we observe the same pattern with the second dimension.}
    \label{fig:empirical_g}
\end{figure}

\subsection{Evaluating increasing values of $m$ in \cref{lem:dense}} \label{app:epsilon_decreasing}

\begin{figure}[ht!]
    \centering
    \includegraphics[width=\linewidth]{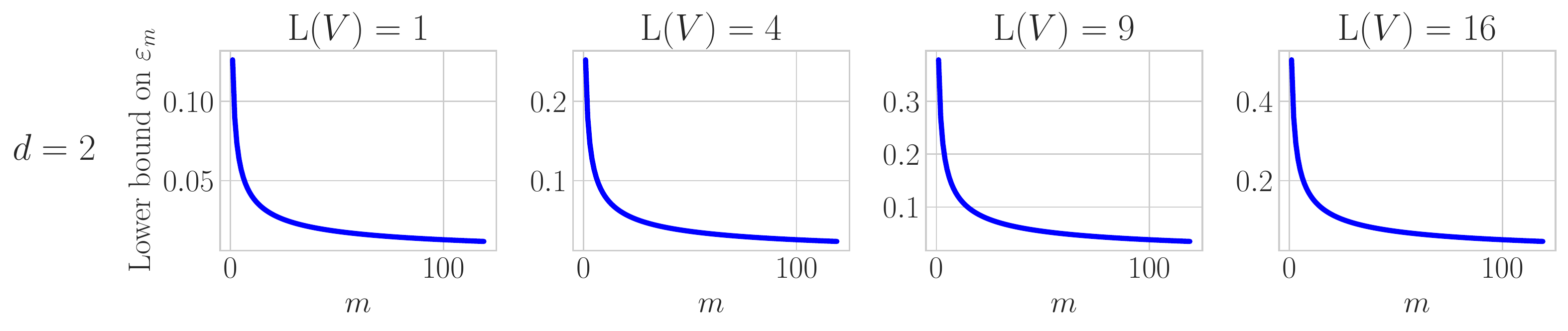}
    \includegraphics[width=\linewidth]{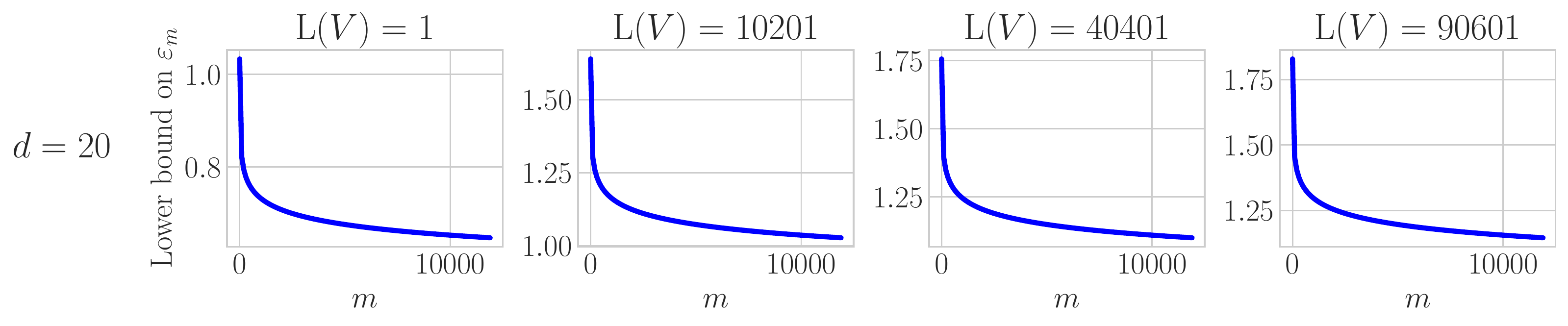}
    \includegraphics[width=\linewidth]{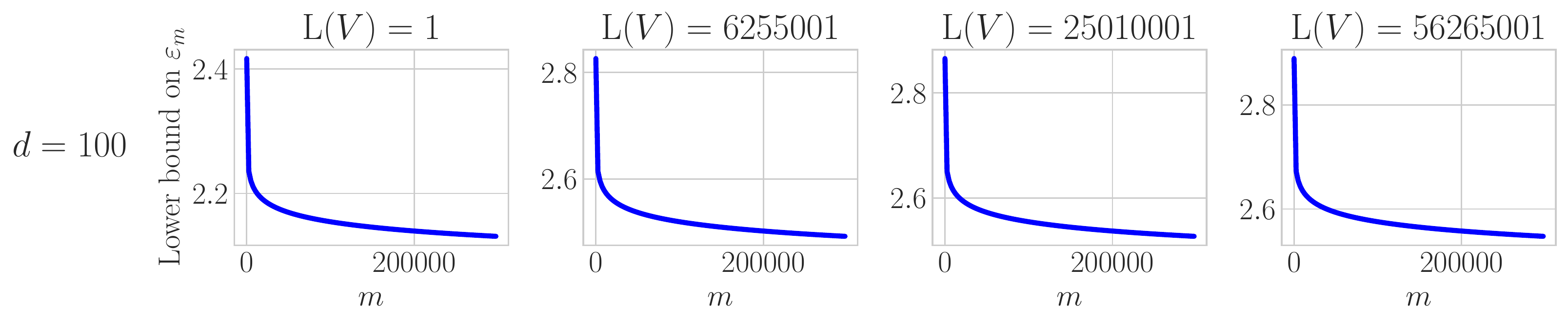}
    \caption{Lower bound on $\varepsilon_m$ (given in \eqref{eq:epsilon}) against $m$, the number of finite boundary points, plotted for different values of fixed dimension $d$ and boundary `size' $L(V)$, which scales quadratically.}
    \label{fig:decreasing_epsilon}
\end{figure}

In \cref{fig:decreasing_epsilon} we show that $\varepsilon_m$, as defined in \cref{lem:dense}, is bounded below by a decreasing function of $m$. In this example, we plot the value of the lower bound in \eqref{eq:epsilon} against $m$, the number of samples on the boundary set, and $\mathrm{L}(V)$, the $(d-1)$-surface area of the bounded domain $V$, which can be considered as a proxy to measure the complexity of the boundary. 

We let $\mathrm{L}(V)$ scale quadratically with dimension $d$, and treat it as a constant. $m$ scales linearly, and measure the effect on the lower bound of $\varepsilon_m$. As $\mathrm{L}(V)$ increases, the bound on $\varepsilon_m$ increases slightly. As $m$ increases, the bound on $\varepsilon_m$ decreases rapidly. Even at extremely high values of $\varepsilon_m$, relatively small values of $m$ are required to shrink the bound. This result highlights how to choose $m$ in experiments - as $m$ increases, the `denseness' of $\approxdV$ in $\partial V$ also increases, as expected. Whilst larger values of $m$ will always be better, there is a less pronounced effect when $m$ becomes significantly larger than zero. As discussed in \cref{app:extracomp}, larger $m$ comes with computational cost, and therefore this experiment shows that it may be beneficial to choose smaller values of $m$. This is explored further in \cref{app:consistency}.

\subsection{Empirical Consistency} \label{app:consistency}

\begin{figure}[!ht]
    \centering
    \includegraphics[width=0.51\linewidth]{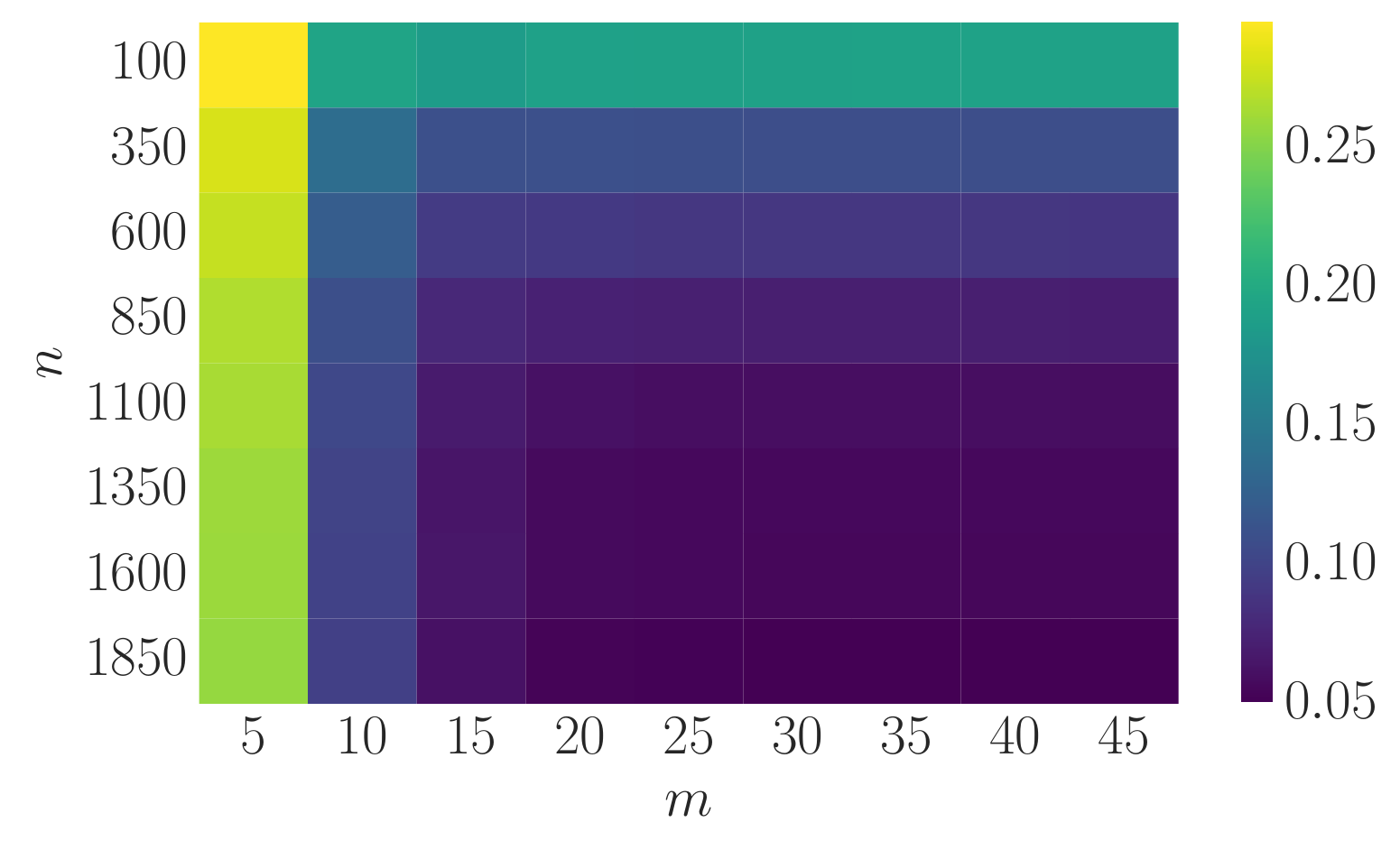}
    \includegraphics[width=0.45\linewidth]{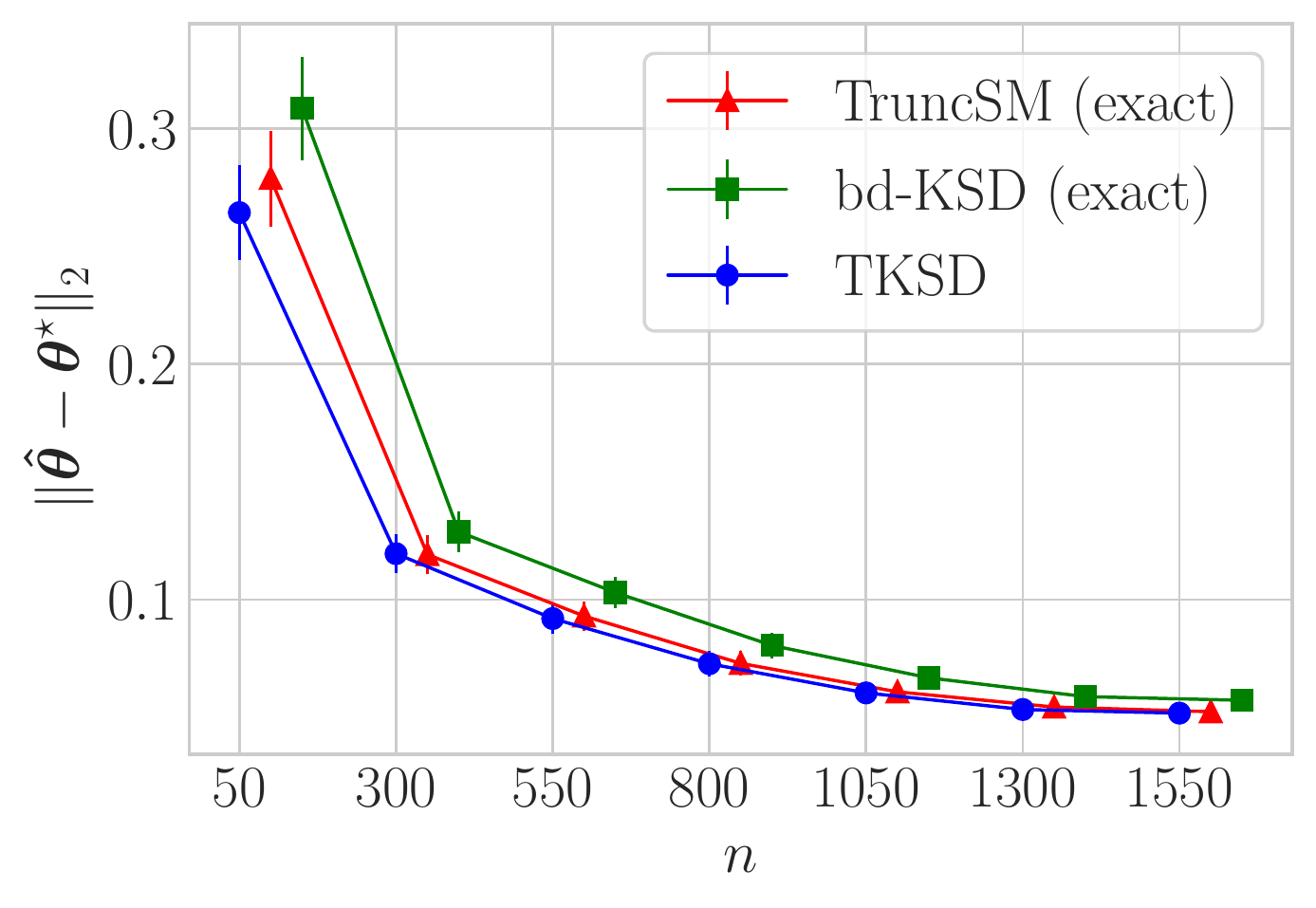}
    \caption{Left: Estimation error as $n$ and $m$ increases for TKSD only. Right: Mean estimation error for the three methods: TKSD, \textit{TruncSM} and bd-KSD, with standard error bars. TKSD uses a fixed $m=32$ across all values of $n$. Both plots report statistics over 64 seeds.}
    \label{fig:consistency}
\end{figure}

We verify that \eqref{eq.estimator} is consistent estimator via empirical experiments. Note that for consistency as proven in \cref{thm:theta_star}, we require taking the limit as $m \to \infty$, after which the estimator is consistent for $n$. So as $m$ and $n$ increases, the estimation error decreases towards zero. We show consistency as $m$ and $n$ increase empirically in \cref{fig:consistency}, for a simple experiment setup, similar to the setup from \cref{sec:dimensionbench}. Data are simulated from $\mathcal{N}(\mb{\mu}^\star, I_d)$, where $d=2$, $\mb{\mu}^\star = [0.5, 0.5]\T$, and $I_d$ is known. Data are truncated to the $\ell_2$ ball until we reach $n$ many data points (after truncation). 

The aim of estimation is $\mb{\mu}^\star$. Across 64 trials, \cref{fig:consistency} shows plots of the mean estimation error, given by $\|\mb{\hat{\mu}} - \mb{\mu}^\star\|$, where $\mb{\hat{\mu}}$ is the corresponding estimate of $\mb{\mu}^\star$ output by a given method. In the first plot (left), we show that as both $n$ and $m$ increase, the estimation error for TKSD decreases towards zero. In the second plot (right), for a fixed $m$, we show that the rate of convergence as $n$ increases for TKSD matches that of bd-KSD and \textit{TruncSM}.

\subsection{Gaussian Mixture Experiment}\label{app:mixture}

\begin{figure}[!ht]
\centering
\includegraphics[width=\linewidth]{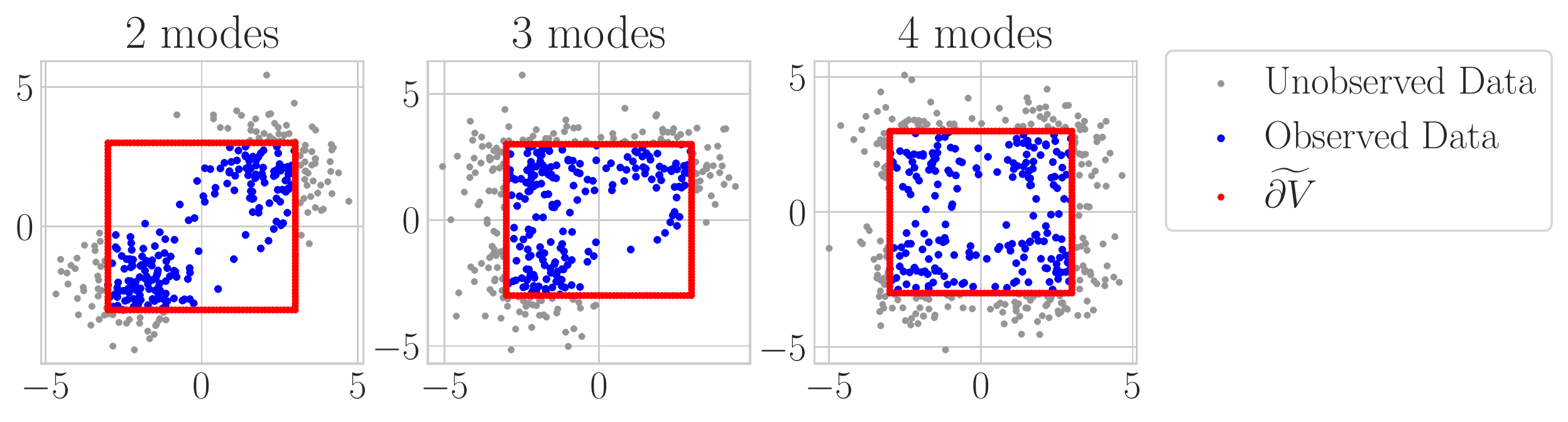}
\includegraphics[width=0.6\linewidth]{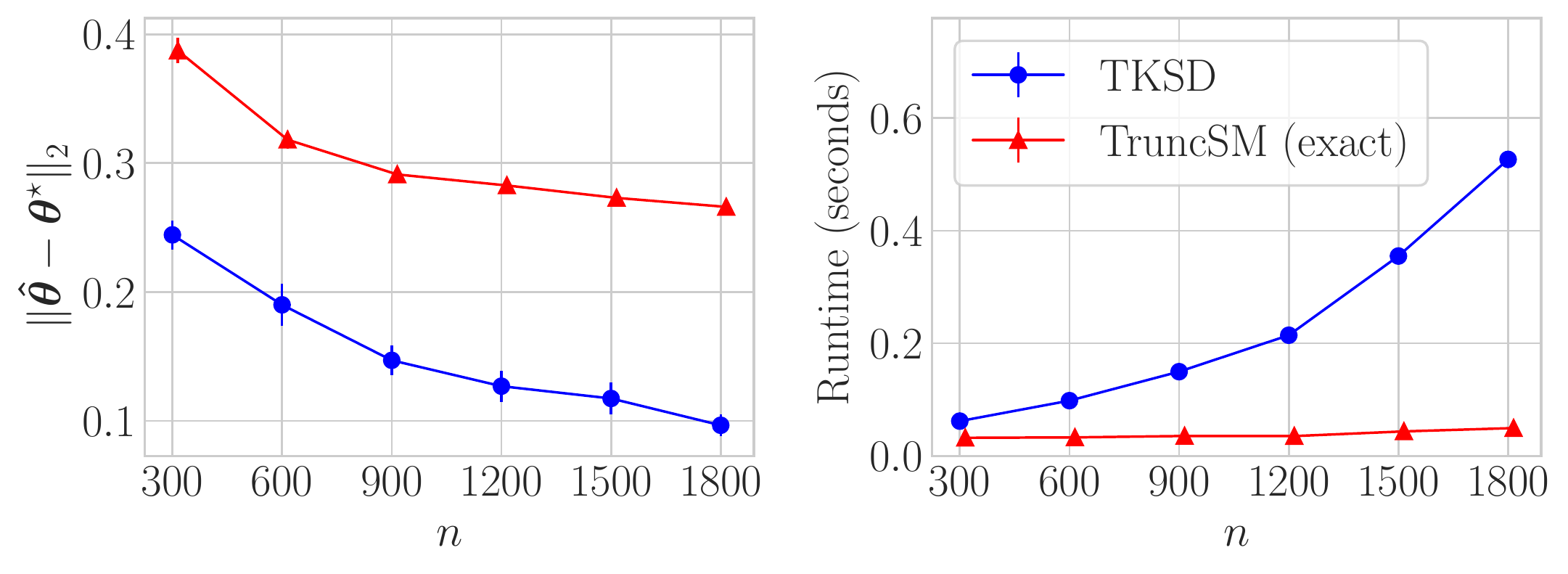}
\includegraphics[width=0.6\linewidth]{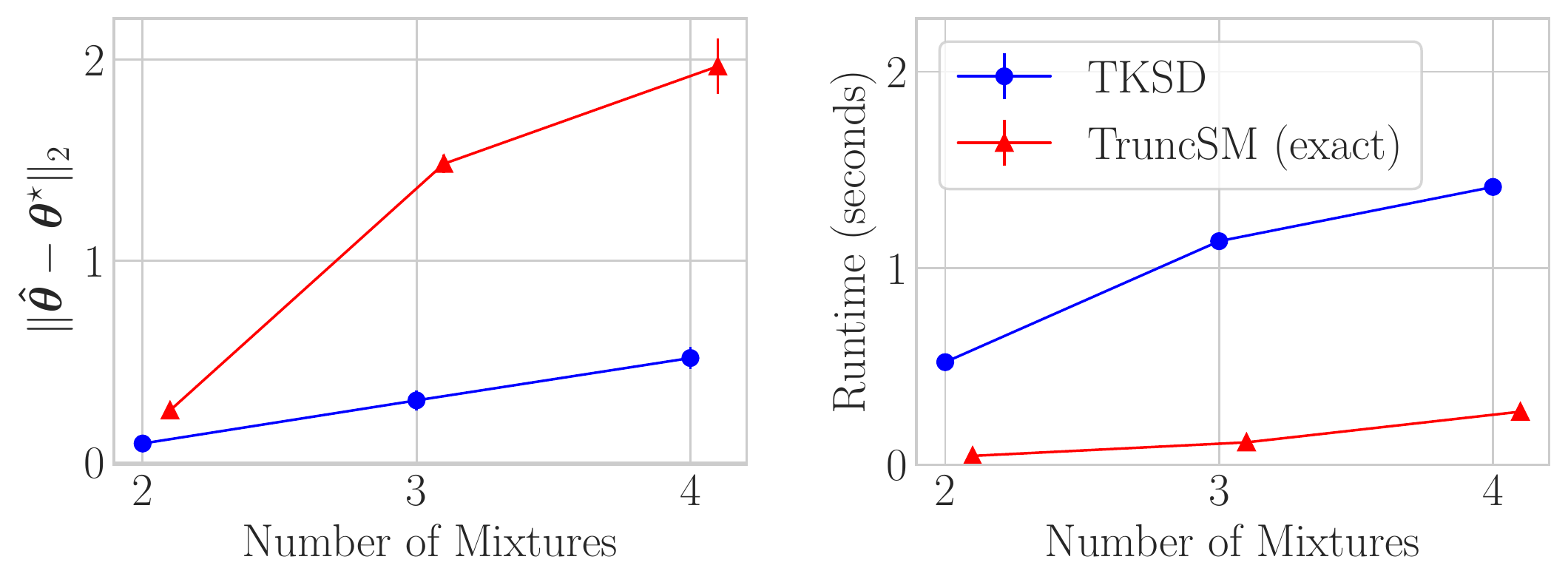}
\caption{Top: example setup for the experiment as the number of mixture modes increases from 2 to 4. Middle: estimation error as $n$ increases for 2 mixture modes, averaged across 256 seeds. Bottom: Estimation error as the number of mixture modes increases for a fixed $n=300$, averaged across 256 seeds.}
\label{fig:mixture}
\end{figure}

As an additional experiment to show the capability of the method, we test on a more complex problem, estimating several means of a Gaussian Mixture distribution. The estimation task is as follows. Fix $d=2$ and $m=200$. Let the mixture modes be given as follows,
\[
\mb\mu_1^* = [1.5, 1.5]\T, \;
\mb\mu_2^* = [-1.5, -1.5]\T,\;
\mb\mu_3^* = [-1.5, 1.5]\T,\;
\mb\mu_4^* = [1.5, -1.5]\T.
\]
We independently simulate samples from $\mathcal{N}(\mb\mu_i^*, I_d)$, for $i=1,\dots, 4$, and truncate these samples to within a box with vertices at $[-3, -3]$, $[3, -3]$, $[-3, 3]$ and $[3, 3]$
until we reach a total of $n$ samples after truncation. \Cref{fig:mixture}, top, shows an example of this experiment setup.

The task is to estimate $\mb\mu^*_i$ for all $i$. To ensure a well specified experiment, we set the initial conditions of the optimisation routine as a perturbation from the true value, i.e. $\mb\mu_{i}^{\text{ini}} = \mb\mu^*_i + \mb{z}$, $\mb{z} \sim \mathcal{N}(\mb{0}, 0.5 \cdot I_2)$. We estimate $\mb\mu$ with TKSD and compare it to a corresponding estimate by \textit{TruncSM} across a range of different values of $n$ and number of mixture modes, shown in \cref{fig:mixture}, middle and bottom. Overall, TKSD significantly outperforms \textit{TruncSM} in this experiment across all variations at the cost of runtime.

\subsection{Regression} \label{app:regression}

We provide a further example of using TKSD to estimate the parameters of a regression problem. We provide two examples in this section, for synthetic data and for real data.

\paragraph{Synthetic Data}

First, we simulate data in the following way:
\[
y_i \sim \mathcal{N}(\mu_i, 1), \;\; \mu_i = \beta_0^\star + \beta_1^\star x_i, \;\; x_i \sim \mathrm{Uniform}(0, 1)
\]
where we assume knowledge of the true values, $\beta_0^\star = 3$ and $\beta_1^\star = 4$. We truncate the dataset to where $y_i \geq 5 \; \forall i$, so only a portion of both $y$ and $x$ are observed. We then estimate the conditional density $p(y | x)$ with TKSD via estimation of $\mb\theta^\star \defeq [\beta_0^\star, \beta_1^\star]\T$. 

We obtain the log-likelihood of the Normal distribution under the estimate of $\hat{\mb\theta} \defeq [\hat\beta_0, \hat\beta_1]\T$ output by TKSD, split across the observed dataset (within the truncation boundary) and the unobserved dataset (outside the truncation boundary). We compare these log-likelihoods to ones obtained by a naive MLE approach which does not account for truncation. As an additional measure, we calculate the unobserved error, given by
\[
\mathrm{MSE}(\bm{y}, \hat{\bm{y}}) \defeq \sum^{n_{\mathrm{unobs}}}_{i=1} (\hat{y}_i - y_i)^2, \;\; \hat{y}_i = \hat\beta_0 + \hat\beta_1 x_i,
\]
where $n_{\mathrm{unobs}}$ is the number of \textit{discarded} data points due to truncation, and this error is calculated over these discarded data points only. This forms a measure which evaluates how well the density estimation method is predicting on unseen data points. Instead of traditional test error, it is the error on data points that have been truncated and hence no longer observed by either method.

A histogram of log-likelihoods and errors across 256 trials is given in \cref{fig:regressions} (left), and an example of one such simulated regression is given in \cref{fig:regressions} (top-right). The log-likelihoods for the observed points are higher for MLE, since the objective of this method is to directly maximise these likelihoods. However, on the unobserved dataset, the log-likelihoods obtained by TKSD are significantly higher than MLE. The unobserved errors are significantly smaller for TKSD, showing the improvement of TKSD over the naive approach.

\paragraph{Real Data: Student Test Scores}

We also experiment on a real-world dataset given by \citet{statatruncated} (Example 1). This dataset contains student test scores in a school for which the acceptance threshold is 40 out of 100, and therefore the response variable (the test scores) are truncated below by 40 and above by 100. Since no scores get close to 100, we only consider one-sided truncation at $y=40$. The aim of the regression is to model the response variable, the test scores, based on a single covariate of each students' corresponding score on a different test. \Cref{fig:regressions} (bottom-right) shows the plotted dataset and the regression line fit by TKSD and naive MLE. Whilst we have no true baseline value to compare to, the TKSD regression line seems to account for the truncation, whilst as expected, MLE does not.

\begin{figure}[t!]
\centering
\begin{tabular}{cc}
\includegraphics[height=10cm]{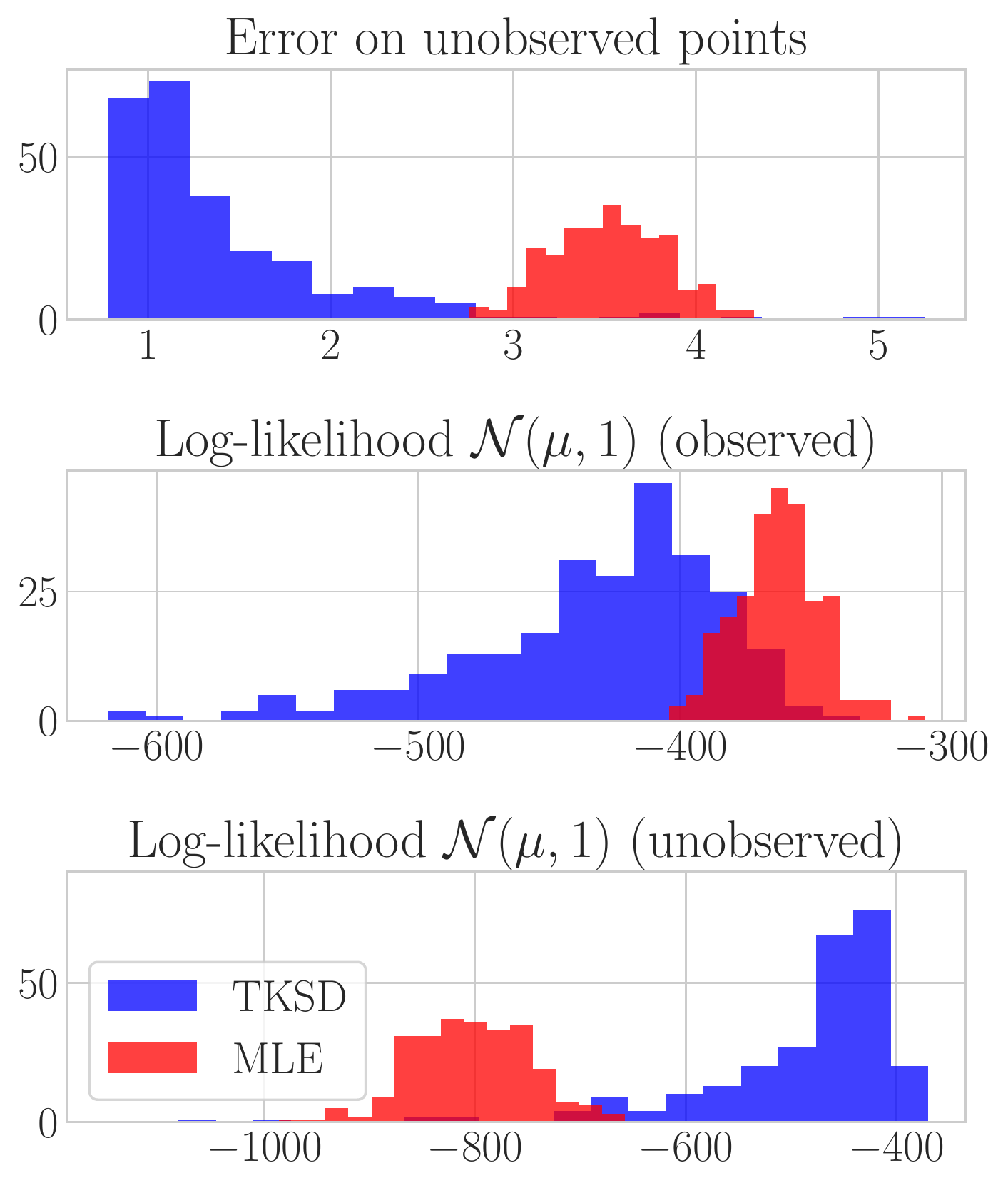}&
\begin{tabular}[b]{c}
\includegraphics[height=5cm]{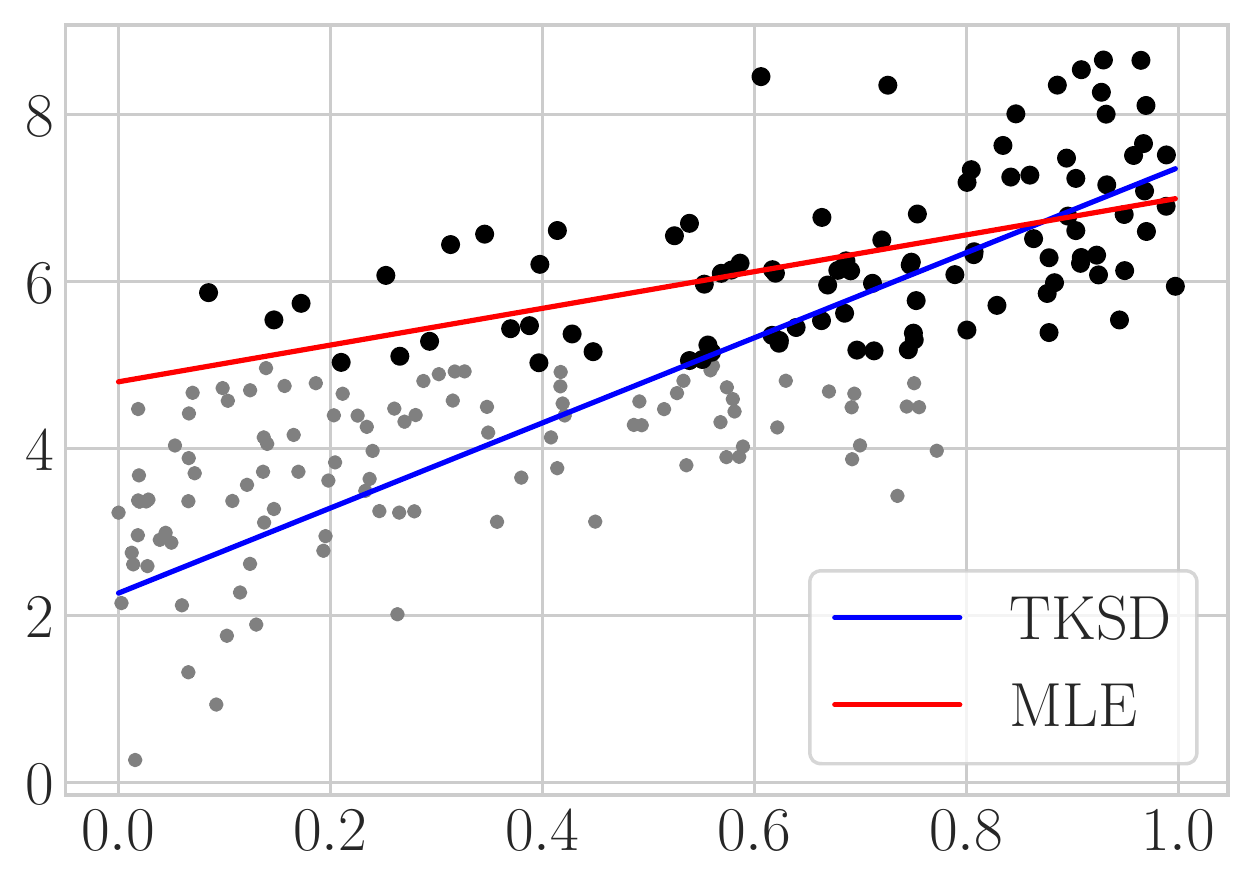} \\
\includegraphics[height=5cm]{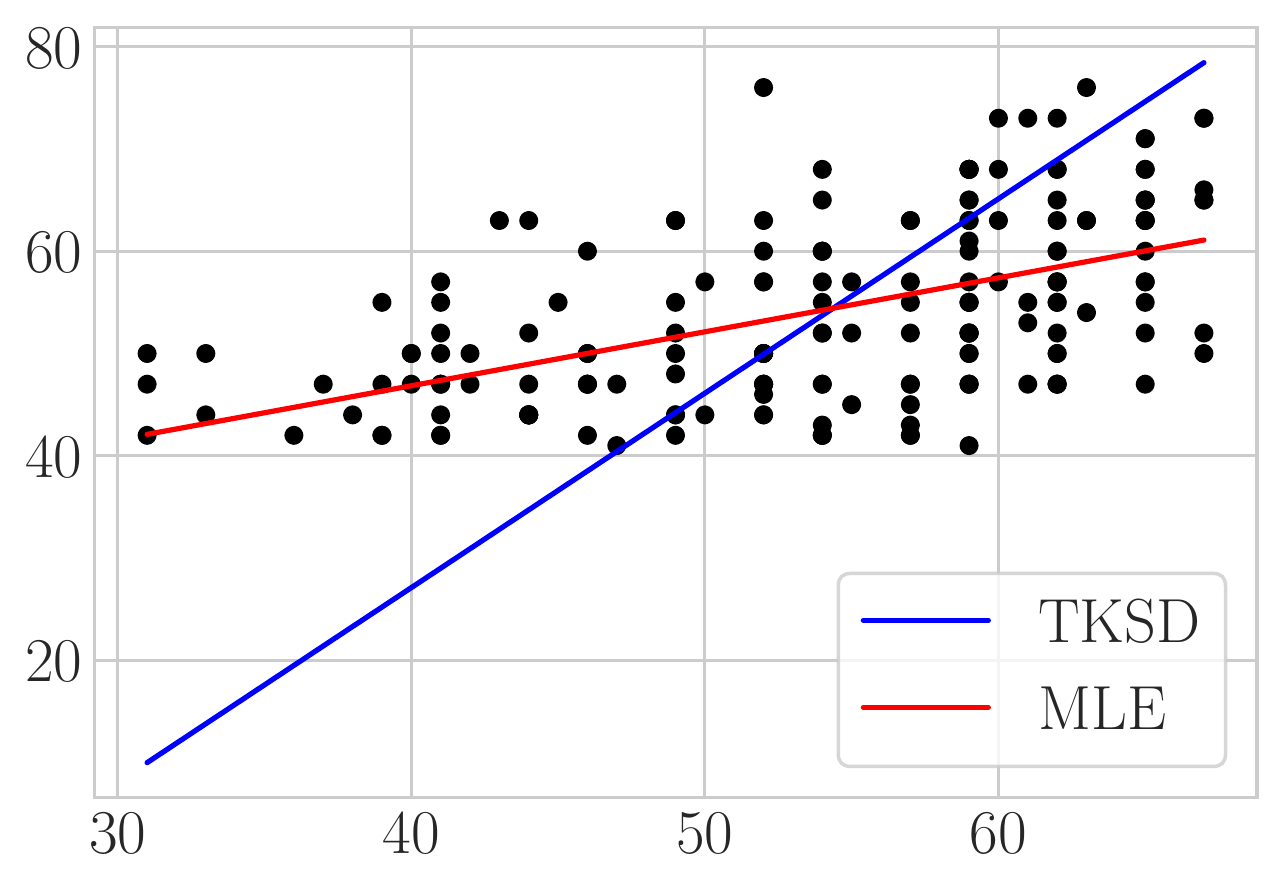}
\end{tabular} \\
\end{tabular}
\caption{Statistics from regression fit. Left:Mean squared errors only on the observed dataset (top). Log-likelihoods on the observed dataset (middle) and the unobserved dataset (bottom), both for a $\mathcal{N}(\mu, 1)$ distribution and over 256 trials. Right: An example of TKSD used to fit regression on the synthetic data on one trial (top) and TKSD used to fit regression on student test scores data (bottom). Fuller black points are the observed data points, \textit{after} truncation, and smaller grey points are the unobserved data points that were truncated out in the data simulation process.}
\label{fig:regressions}
\end{figure}

\subsection{Quantifying Effect of Boundary Point Distribution} \label{app:boundary_dist}

We test whether the effect of boundary point sampling distribution has an effect on the robustness of TKSD. To do so, we repeat the simple experiment setup where data are simulated as follows,
\[
\mb{x} \sim \mathcal{N}(\mb{\mu}^\star, \mb{I}_2), \; \mb{\mu}^\star = \begin{bmatrix}
    1, & 1
\end{bmatrix}\T
\]
from which we observe $n=400$ realisations of $\mb{x}$ that are restricted to the unit $\ell_2$ ball around the origin, and let $m=30$. We use TKSD to provide an estimate $\mb{\hat{\mu}}$ of $\mb{\mu}^\star$ under three scenarios:
\begin{enumerate}
\itemsep0em
    \item Boundary points are distributed towards $\mb{\mu}^\star$, i.e. samples from $\approxdV$ are closer to the centre of the dataset.
    \item Boundary points are distributed away from $\mb{\mu}^\star$, i.e. samples from $\approxdV$ are closer to the edge of the dataset.
    \item Boundary points are sampled uniformly across $\partial V$.
\end{enumerate}

\begin{figure}[t!]
    \centering
    \includegraphics[width=\linewidth]{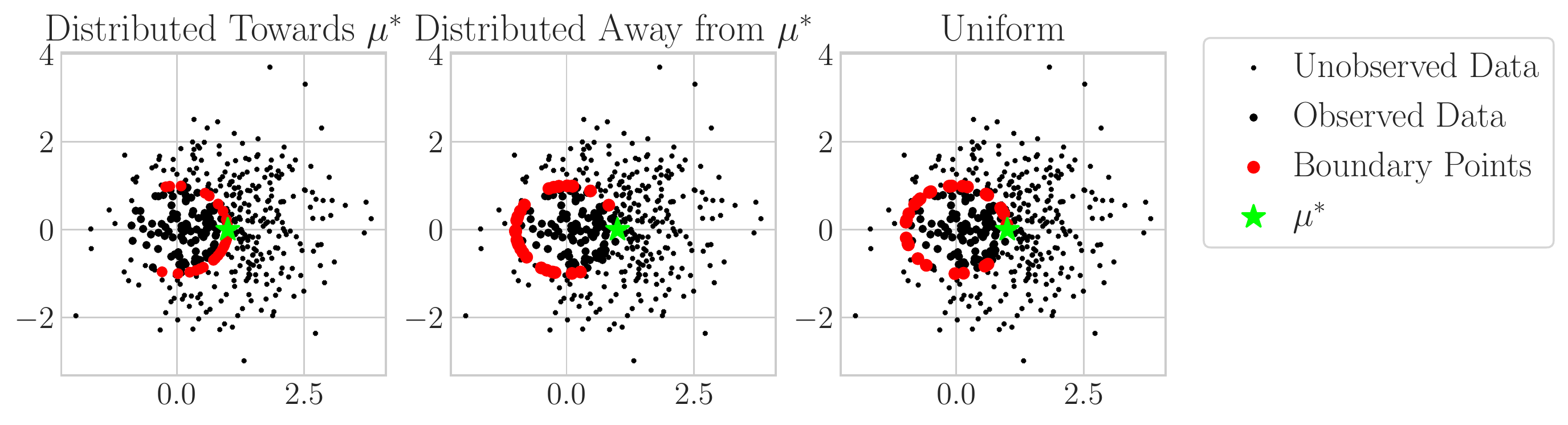}
    \caption{Example of experiment setup for measuring the effect of the boundary point distribution.}
    \label{fig:boundary_dist_example}
\end{figure}

\begin{figure}[t!]
    \centering
    \includegraphics[width=0.5\linewidth]{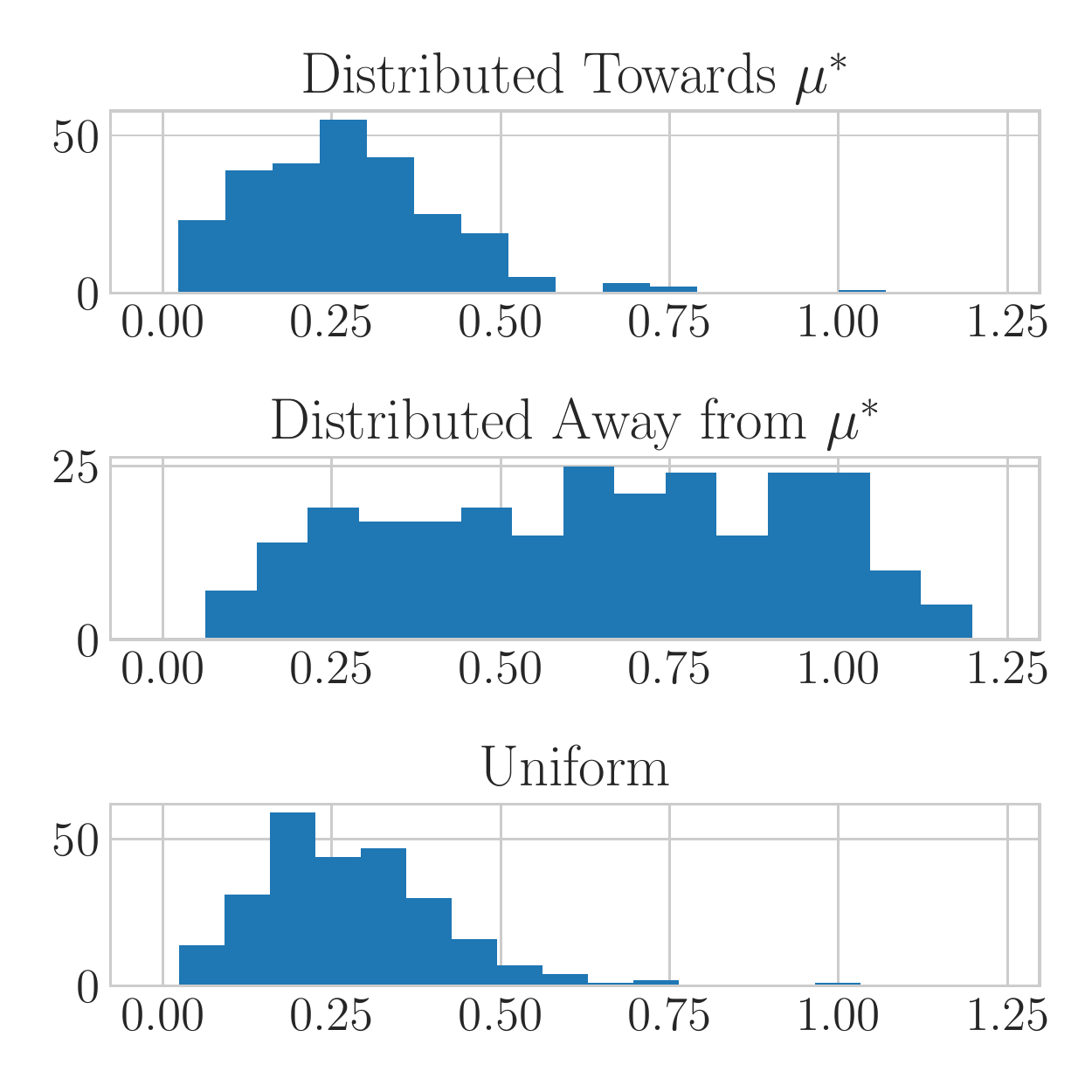}
    \caption{Frequency of estimation errors ($\|\mb{\hat{\mu}} - \mb{\mu}^\star\|$) in the simple experiment setup for the three differently distributed boundary points.}
    \label{fig:boundary_dist_hists}
\end{figure}

See \cref{fig:boundary_dist_example} for a visual representation of these three scenarios. We measure the estimation error, $\|\mb{\hat{\mu}} - \mb{\mu}^\star\|$, and test whether one scenario provides a lower error on average overall. \Cref{fig:boundary_dist_hists} shows the distribution of all three estimation errors across 256 trials. Scenario 1 and 3 provide comparable error distributions which are significantly smaller than the error distribution of scenario 2, implying that either the boundary needs to be covered fully, or the boundary points need to be `representative' in some way, where the most significant truncation effect is.

\subsection{Choosing Boundary Size for Dimension Benchmarks}\label{app:bsize}

\begin{figure}[ht!]
    \centering
    \includegraphics[width=0.45\linewidth]{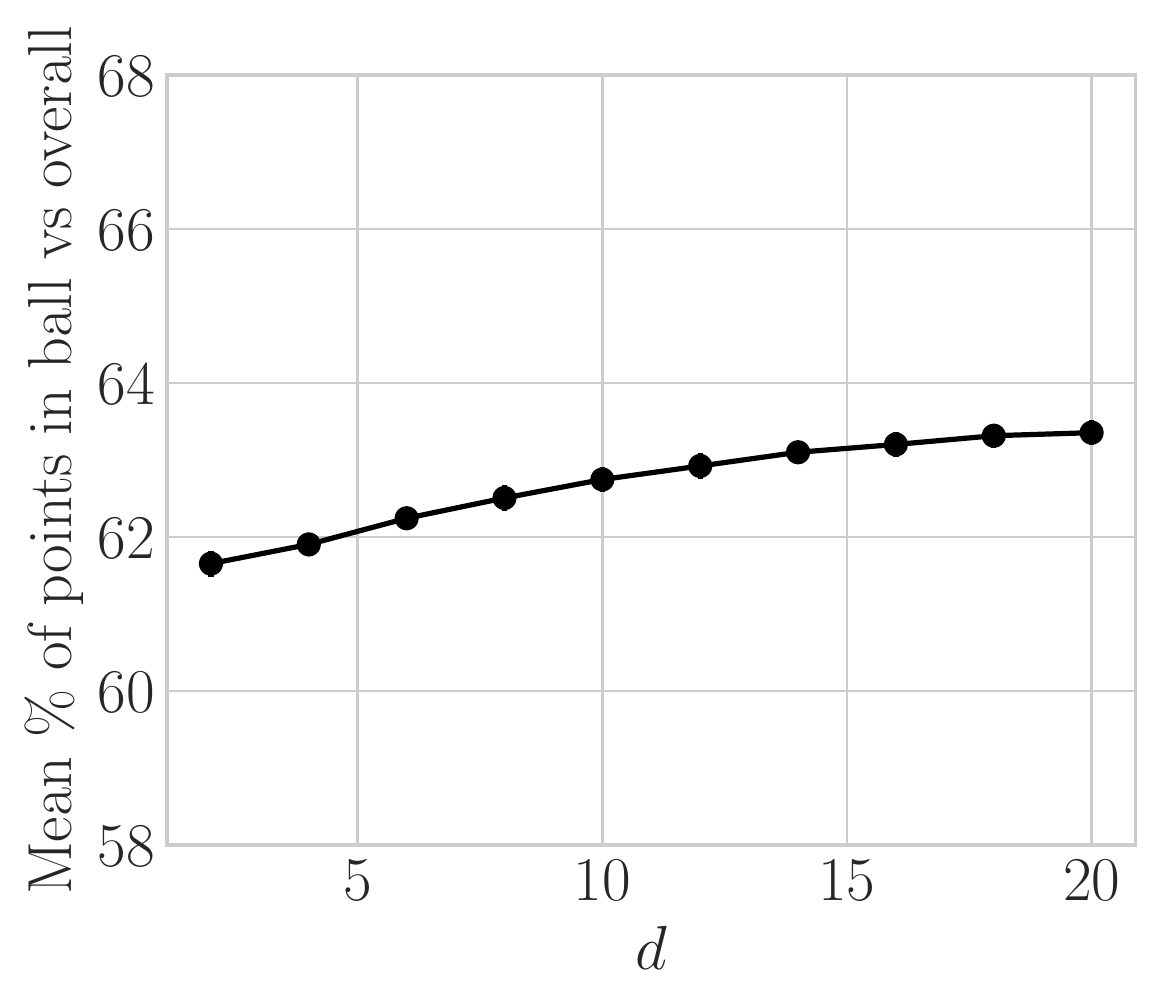}
    \includegraphics[width=0.45\linewidth]{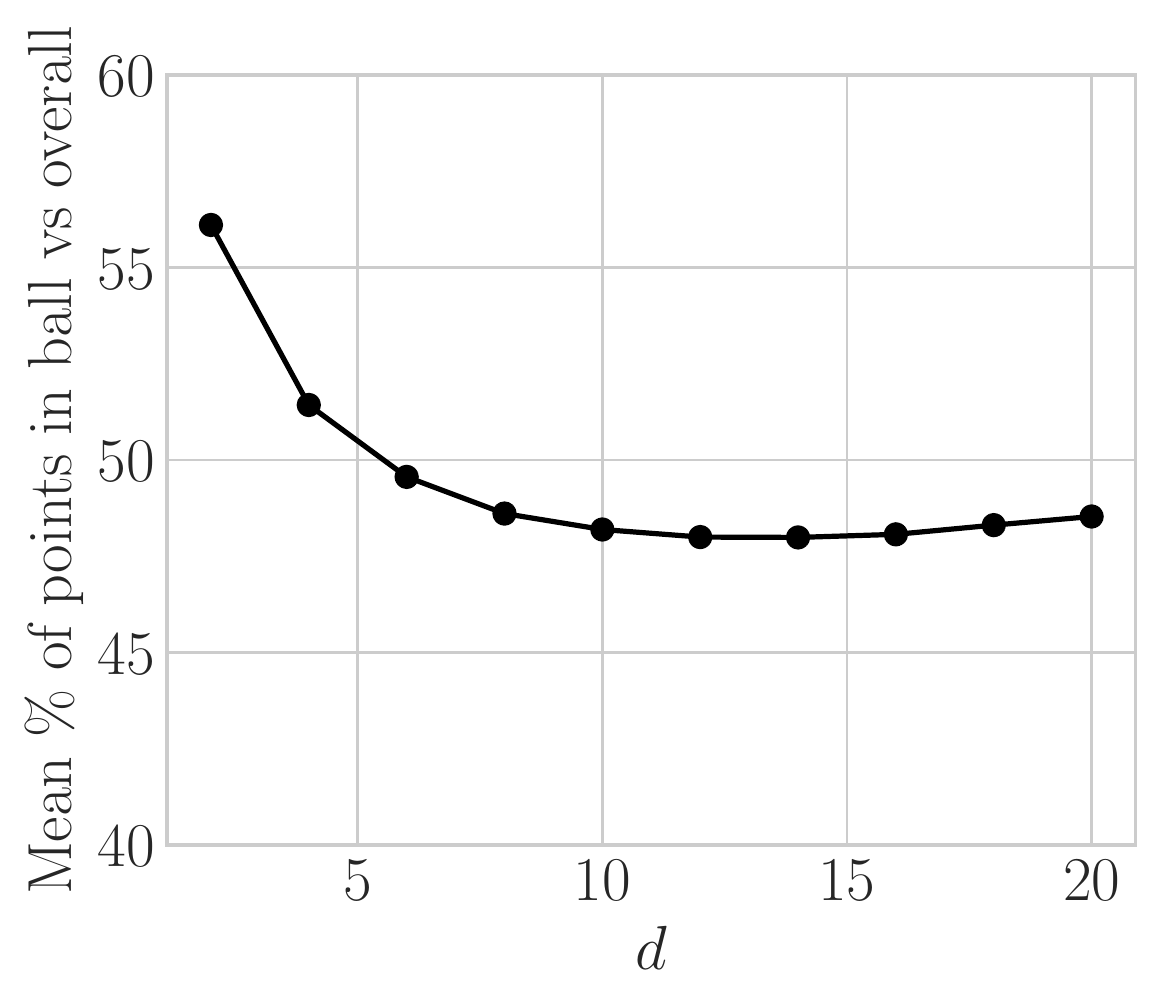}
    \caption{Mean percentage of points which remain after truncation as dimension $d$ increases, where the truncation domain is a $\ell_c$ ball of radius $d^{0.98}$ for $c=1$ (left), and $d^{0.53}$ for $c=2$ (right).}
    \label{fig:ballsizes}
\end{figure}

In \cref{sec:dimensionbench}, we chose the size of the boundary to scale with $d$ so that roughly the same amount of data points are truncated for each value of dimension $d$. The values of $d^{0.98}$ and $d^{0.53}$ for the $\ell_1$ ball and $\ell_2$ ball respectively were chosen via trial and error such that the percentage of points simulated from the Normal distribution remaining after truncation did not vary significantly. Figure \ref{fig:ballsizes} shows that with this choice of $\ell_1$ and $\ell_2$ ball radius, the mean percentage of points that remain after truncation remains at consistent as dimension increases in both cases.

\subsection{Extra Computation Details}\label{app:extracomp}

\begin{itemize}
    \item The inversion of $\bm{K}'$ is the largest computational expense when it comes to evaluating the $U$-statistic or $V$-statistic (\eqref{eq:h-ustatistic} and \eqref{eq:Vstatistic} respectively).
    
    If $m < d$, which will be common as we want $m$ to be large, then $\bm{K}'$ is rank deficient. To circumvent this issue, we invert $\bm{K}' + \epsilon \bm{I}_m$ instead, where $\epsilon>0$ is small. Additionally, $\bm{K}' + \epsilon \bm{I}_m$ is symmetric and positive definite, so we can exploit its Cholesky decomposition to make the inversion faster and more stable. 
    
    Overall, this inversion looks like
    \[
    (\bm{K}')^{-1} \approx (\bm{K}' + \epsilon \bm{I}_m)^{-1} = \bm{L}^{-1}(\bm{L}^{-1})\T,
    \]
    where $\bm{L}$ is the corresponding lower triangular matrix from the Cholesky decomposition of $\bm{K}' + \epsilon \bm{I}_m$. The inversion of $\bm{L}$, a lower triangular matrix, requires half as many operations as inverting $\bm{K} + \epsilon \bm{I}_m$ directly \citep{cholesky}.

\item We also make use of methods presented in \citet{jitkrittum2017} for fast computation when constructing all kernel matrices in Python.

\end{itemize}
\nocite{jitkrittum2017}

\end{document}